\newtheorem{theorem}{Theorem}
\newtheorem{lemma}{Lemma}
\theoremstyle{definition}
\newtheorem{definition}{Definition}
\DeclareMathOperator{\dom}{dom}
\DeclareMathOperator*{\argmin}{arg\,min}
\begin{document}
\title{Contextual One-Class Classification in Data Streams}

\author[1]{Richard Hugh Moulton}
\author[2]{Herna L.\ Viktor}
\author[3]{Nathalie Japkowicz}
\author[4]{Jo\~{a}o Gama}

\affil[1]{\small Department of Electrical and Computer Engineering, Queen's University}
\affil[2]{\small School of Electrical Engineering and Computer Science, University of Ottawa}
\affil[3]{\small Department of Computer Science, American University}
\affil[4]{\small LIAAD -- INESC TEC and Faculty of Economics, University of Porto}

\date{6 February 2019}

\maketitle

\begin{abstract}%
In machine learning, the one-class classification problem occurs when training instances are only available from one class. It has been observed that making use of this class's structure, or its different contexts, may improve one-class classifier performance. Although this observation has been demonstrated for static data, a rigorous application of the idea within the data stream environment is lacking. To address this gap, we propose the use of context to guide one-class classifier learning in data streams, paying particular attention to the challenges presented by the dynamic learning environment. We present three frameworks that learn contexts and conduct experiments with synthetic and benchmark data streams. We conclude that the paradigm of contexts in data streams can be used to improve the performance of streaming one-class classifiers.
\end{abstract}

\section*{Keywords}
Data streams, Supervised learning, One-class classification, Anomaly detection, Context

\section{Introduction}
One-class classification (OCC), the extreme form of the class imbalance problem, is a well known task in machine learning. OCC allows learning to occur when training instances are only available from one class, which changes the classification task from one of discrimination to one of recognition. Several aspects of OCC have been studied for static data sets, relatively less work has been done regarding the task in the data stream environment. This is not simply a question of theoretical interest. Modern machine learning applications commonly require OCC in data streams, these include network intrusion detection real-time airborne sensors and scientific instruments that record measurements at a high rate. We are specifically interested in data streams where concept drift may be present, resulting in a challenging environment for stream learning algorithms.

One approach that has been successfully applied to OCC in static data sets is ``divide and conquer.'' Recognizing that the majority class may not be the ideal way of framing the problem, several authors have proposed methods of decomposing the problem of recognizing the majority class into more tractable sub-problems. This has largely been done for static data sets, however, while the work in the data stream environment has concentrated on either semi-supervised learning or novelty detection where ground truth for multiple classes is available. No specific work has been done regarding how to effectively decompose a data stream's majority class in order to perform OCC.

In this paper we use the mathematical formulation of context given by \citet{Turney1993a} in order to decompose the task of learning a data stream's complex majority class on the basis of contextual information. This approach is similar to that used by \citet{Sharma2018} where majority class structure was used to improve one-class classifier performance in three different scenarios for static data sets. While it is intuitive that this approach should be equally applicable for data streams, this hypothesis has not been tested.

Our research question is ``how can contextual knowledge be used to improve one-class classifier performance in data streams?'' We adapt both Turney's idea of context and Sharma et al.'s idea of majority class structure for use with data streams, with particular attention paid to the new challenges presented by this learning environment, resulting in three new frameworks. Intermediate results include a theoretical proof regarding the minimum window size required by these frameworks and a novel cluster distance function. We then demonstrate that our three frameworks can be used to improve one-class classifier performance for both synthetic and benchmark data streams.

\section{Background}
We begin with a review of challenges specific to the OCC problem and the data stream environment. The intersection of these two topics is explored and we highlight the particular difficulties faced by streaming one-class classifiers. We then consider ``context'' in machine learning and a method of formalizing this intuitive notion.

\subsection{One-Class Classification}
OCC is closely related to the detection of anomalies, outliers and novelties; the terminology used depends on the semantic significance placed on the results by the user~\citep{Tax2001,Chandola2009}. In this paper we focus on OCC: the extreme form of class imbalance where training instances are only available from the majority class and no information is available regarding the minority class(es). The significance of this is that instead of learning to discriminate between classes, as in binary or multi-class classification, the classifier must learn to recognize the majority class (Figure~\ref{fig:discriminationVSrecognition}).
\begin{figure}[htb]
\centering
\caption{Two approaches to binary classification~\citep[from][]{Japkowicz2001}}
\label{fig:discriminationVSrecognition}
\begin{subfigure}{0.35\textwidth}
\includegraphics[width=\textwidth]{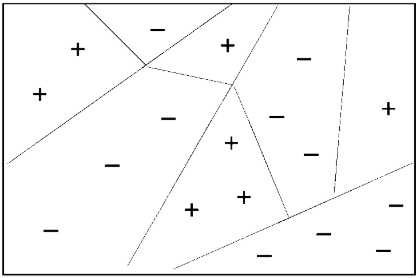}
\subcaption{Discrimination}
\end{subfigure}
\begin{subfigure}{0.35\textwidth}
\includegraphics[width=\textwidth]{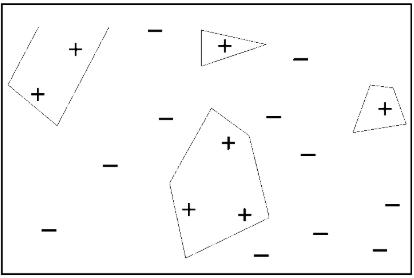}
\subcaption{Recognition}
\end{subfigure}
\end{figure}

Two challenges posed by imbalanced data sets are the absolute size of the training sets and domain complexity~\citep{Japkowicz2002}. These challenges were investigated by \citet{Bellinger2017}, who considered the impact of class imbalance on the performance of both binary and one-class classifiers. Results of their experiments showed that binary classifier performance decreases as class imbalance increases, and that more complex data distributions lead to sharper decreases, while one-class classifier performance stays the same. They also observed that sampling improved binary classifier performance in both scenarios~\citep{Bellinger2017}.

\subsubsection{Training Set Size}
\citet{Domingos2012} observed that for machine learning, ``more data beats a cleverer algorithm''. In OCC, the need for more data is embodied by small disjuncts: portions of the majority class that are only represented by rare cases. These may cause the classifier to learn too tight a boundary around this disjunct or to disregard it altogether. \citet{Jo2004} argued that small disjuncts are, in fact, an underlying source of difficulty in the class imbalance problem and recommended dealing with the problems simultaneously.


One way to correct for absolute rarity is oversampling, as seen in~\citet{Bellinger2017}. Oversampling by na\"{\i}ve replication, however, readily leads to overfitting because this encourages classifiers to learn boundaries around very specific areas of the feature space instead of around the likely area occupied by the minority class~\citep{Chawla2002}. \citet{Chawla2002} introduced the synthetic minority over-sampling technique algorithm (SMOTE) to avoid this by creating synthetic instances whose attributes each lie on the line segment between two existing training instances.

\subsubsection{Complex Domains} \label{sec:complexDomains}
Complex domains are seen throughout machine learning, motivating techniques such as feature engineering to make data sets more suitable for learning~\citep{Domingos2012}. A common type of complex domain is class overlap, where the ground truth classes themselves are not cleanly distinguishable. In our case, the training examples for the OCC problem are all from one class. This does not mean, however, that this class is easily described or that all its constituent instances are generated by the same process.


One way to address complex domains is to use sub-divisions to improve resampling. \citet{Weiss2013} observed that many resampling techniques address imbalance between classes and not within classes; he suggested using resampling to balance the size of majority class disjuncts in OCC training data. This was demonstrated empirically by \citet{Nickerson2001}, who used subcomponent structure to guide resampling and improve oversampling quality, as well as \citet{Jo2004}, who showed that cluster-based oversampling could be used successfully to increase the number of training examples from small disjuncts.


As a way of using context to address complex domains, \citet{Sharma2018} represented the majority class in OCC problems with its internal structure. The authors identified three scenarios (Table~\ref{tab:subConceptScenarios}) and showed experimentally that using the majority class's structure improved the performance of both autoencoders and OCSVMs. Two limitations of these experiments are that they used k-means as the sole clustering algorithm without specific justification and that they only considered static data sets.

\begin{table}[htb]
\centering
\caption{The scenarios of knowing majority class structure~\citep[adapted from][]{Sharma2018}}
\label{tab:subConceptScenarios}
\begin{tabular}{c p{8cm}}
\toprule
Scenario & Description \\
 \midrule
Complete Knowledge &  The structure is known and it is available for both training and testing.\\[0.1cm]
Fuzzy Knowledge & The structure is known, but only available during training. \\[0.1cm]
No Knowledge &  A structure exists, but is unknown.\\
\bottomrule
\end{tabular}
\end{table}

\subsection{Data Streams}
In contrast with static data sets, data streams are characterized by three \emph{V}s. Stream learning algorithms must compress potentially infinite data into a finite model (\emph{volume}), have limited processing time (\emph{velocity}), and must appropriately forget information (\emph{volatility})~\citep{Krempl2014}.

\citet{Webb2016} described data streams as data sets with a temporal aspect. Viewed this way, a data stream is generated by some underlying process that can be modelled as a random variable and the data stream's instances are objects drawn from this random variable.

An object, $o$, is a pair $\langle x,y \rangle$ where $x$ is the object's feature vector and $y$ is the object's class label. Each is drawn from a different random variable, $X$ and $Y$: $x \in \text{dom}(X)$, $y \in \text{dom}(Y)$ and $o \in \text{dom}(X,Y) = \text{dom}(\chi)$~\citep{Webb2016}. A concept in a data stream is therefore defined as the probability distribution associated with an underlying generative process, as in Definition~\ref{def:conceptQuantitativeDefinition}~\citep{Gama2014,Webb2016}.

\begin{definition}[Concept] \label{def:conceptQuantitativeDefinition}
 $Concept = P(X,Y) = P(\chi)$
 \end{definition}

 

It is possible that the data stream's concept is not the same at times $t$ and $u$, this is called \emph{concept drift} (Definition~\ref{def:conceptDriftDefinition}). Concept drift is an important topic, accounting for data stream volatility, and has motivated a lot of study in its own right.

 \begin{definition}[Concept Drift] \label{def:conceptDriftDefinition}
$P_t(X,Y) \neq P_u(X,Y)$
\end{definition}


\subsection{One-Class Classification in Data Streams}
OCC is likely to be performed in data streams where normal behaviour constitutes the vast majority of instances, while the instances of interests -- i.e.\ anomalies or outliers -- occur infrequently. Scenarios could include real-time analysis of sensor data, screening for medical conditions, or detecting computer network intrusions. The difficulties of OCC in static data sets must  be overcome in the data stream environment as well and can even be made more challenging by it. We review these challenges, taking into account both the nature of the data stream and the nature of stream learning algorithms.

\subsubsection{Challenges Related to the Data}
Complex domains are made more complex by the volatility of data streams. Consider contexts which result in small disjuncts and which were vulnerable to absolute rarity in static data sets. In data streams, while most windows of instances may contain a representative sample of instances, it is possible that some windows will not. In fact, given enough windows (a reasonable assumption due to the data stream's volume) this is guaranteed to occur, even for some disjuncts that are generally not small or rare.

One way to address this is to store instances in context-based buffers. Although tempting, \citet{Chen2013} highlighted that this can fall afoul of volatility: using old instances to learn small disjuncts might be inappropriate if concept drift has occurred. This problem also affects any buffers being used for oversampling techniques. 

Another way to address small disjuncts is by selecting an adequate window size: the more instances there are in the window the more instances there are likely to be in any disjunct. We provide an in-depth discussion of how to do so in Section~\ref{sec:obsWindowSize}.

Finally, many strategies have been proposed for dealing with concept drift in a data stream as a whole. These can be broken into three kinds: always learning a new model over the most recent batch of instances, which likely leads to unnecessary forgetting; learning a new model only when signalled by a concept drift detection method, which requires explicit concept drift detection; or continuously update the model, which requires the learner to be capable of incremental updating~\citep{Chen2013,Gama2014}.

\subsubsection{Challenges Related to the Learner}
For all data stream classification, the possibility of concept drift means that a stream classifier's model must be adaptable. Although the performance of an algorithm is certainly an important consideration, a more fundamental limiting factor is that not all of a data stream's instance can be maintained.  These requirements have led to the development of stream learners that are \emph{incremental} and \emph{online}. \citet{Losing2018} define incremental learners as those that, given a data stream $\{x^1,x^2,\dots,x^N\}$, produce a sequence of models $h_1,h_2,\dots,h_N$ where model $h_i$ depends solely upon model $h_{i-1}$ and a strictly limited number of recent training objects.

More exactingly, online learners are incremental learners that are also restricted in terms of model complexity and run-time. An online learner can learn forever while consuming only limited resources, respecting data stream velocity. An incremental nature ensures that online learners can adapt their models without retraining from scratch and has the added advantage of enabling passive adaptation to concept drift~\citep{Losing2018}.

Finally, learners must be trained before testing: decision trees must be grown; neural networks must have their weights converge; and nearest neighbour approaches need a neighbourhood. This training must be accounted for at the beginning of any data stream and can be achieved by initializing the one-class classifiers with instances that are used only for training and not for testing.

\subsection{Streaming One-Class Classifiers} \label{sec:oneClassClassifiers}
Classifiers designed for OCC are called one-class classifiers; \citet{Tax2001} grouped these into three approaches: density-based; reconstruction-based; and boundary-based. This taxonomy also applies to streaming one-class classifiers~\citep[pg.~38-41]{Moulton2018b} and we make use of it here.

\subsubsection{Density-based} These classifiers define the majority class according to its probability-density. In practice this can be done using any probability density function, but the normal distribution (either singly or in a mixture model) is most commonly used~\citep[pg.~64-66]{Tax2001}. 

These methods model the majority class for the entire feature space; because of the nature of data streams, this model is often a tree structure. The upside of this approach is that an accurate model provides very complete knowledge. One drawback of probability density functions is that they may result in expressions that are difficult to evaluate or analyze.

\paragraph{One Class Very Fast Decision Tree}
\citet{Li2009} noted that little work had been done regarding the OCC problem for data streams and developed the OcVFDT as a one-class adaptation of the Very Fast Decision Tree (VFDT) algorithm.

Different OcVFDTs are grown for a variety of levels of class imbalance in the stream, one of which is then selected using a set of validation instances. For test instances, the OcVFDT's leaf nodes are labelled with the label of the majority of instances that have traversed there. Experimental results showed that OcVFDT could nearly match the accuracy and $F1$ scores of a VFDT classifier, even with up to 80\% of the data stream unlabelled~\citep{Li2009}.

\paragraph{Streaming Half-Space Trees}
\citet{Tan2011} introduced Streaming HS-Trees, which is similar to a random forest. Streaming HS-Trees detects anomalies using the relative mass of the leaves in its trees. These masses are updated blindly after every window, providing passive adaptation to concept drift. Uniquely, these trees are not induced based on instances but instead on random perturbations of the data space itself: each node randomly selects an attribute and grows two children leafs to represent either half of the attribute (Fig.~\ref{fig:hstreePartition})~\citep{Tan2011}.

\begin{figure}[htb]
\centering
\caption{A Streaming HS-Tree's partition of the data space (from~\citet{Tan2011})}
\label{fig:hstreePartition}
\includegraphics[width=0.5\textwidth]{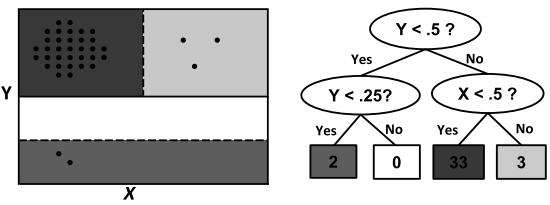}
\end{figure}

A test instance's anomaly score is generated by having each tree send the instance through its structure until a terminal node, $Node^*$, is reached. That node's mass is calculated as a combination of depth, $Node^*.k$ and the number of instances present, $Node^*.r$, \eqref{eq:terminalNodeMass}. These masses are then summed to provide the instance's final anomaly score \eqref{eq:totalAnomalyScore}~\citep{Tan2011}. Note that normal instances will have higher scores than outliers or anomalies.

\begin{equation}  \label{eq:terminalNodeMass}
AnomalyScore_T(x) = Node^*.r \times 2^{Node^*.k}
\end{equation}
\begin{equation} \label{eq:totalAnomalyScore}
AnomalyScore_{Total}(x) = \sum_{T \in HS-Trees} AnomalyScore_T(x)
\end{equation}

\subsubsection{Reconstruction-based} Other algorithms take a compressed representation of the majority class and check whether it recognizes the test instance~\citep[pg.~72-73]{Tax2001}. These algorithms have a model that is independent of training set size, making them a natural option for stream learning.

Reconstruction-based methods have the strengths that they produce a compact model of the majority class and that, similar to density-based methods' probabilistic scores, their reconstruction error usefully approximates the likelihood that an instance belongs to the majority class. Drawbacks are that the model may be difficult to interpret and that a representative training set is required.

\paragraph{Streaming Autoencoders}
Autoencoders are easily applied to data streams as they incrementally adjust their parameters, which led to the streaming autoencoders (SA) described by \citet{Dong2018}.

An SA is initialized with multiple epochs of training over an initial window of instances to reduce the effect of random starting weights. Once initialized, the SA is able to receive test instances while continuing to update its weights incrementally~\citep{Dong2018}. This is an advantage in data streams and permits passive adaptation to concept drift. \citet{Dong2018} found that threaded ensembles of SAs outperformed VFDTs in terms of area under the curve (AUC) on benchmark data streams and that they had faster runtimes than both the VFDTs and Streaming Multilayer Perceptrons.

\paragraph{DETECTNOD}
\citet{Hayat2010} proposed using the Direct Cosine Transform (DCT) as the basis for detecting novelties and concept drifts in data streams. DETECTNOD compresses the data stream into a few DCT coefficients and then uses these as a model for the stream's ``normal'' behaviour. A similar process occurs throughout the stream and the distance between the new DCT coefficients and those in memory is computed using Equation~\ref{eq:dctUnknownScore}~\citep{Hayat2010}.

\begin{equation} \label{eq:dctUnknownScore}
UnExSco(x) = \sqrt{\sum_{i = 1}^{NumDCT} (GenerativeModel_i - CurrentModel_i)^2}
\end{equation}

\subsubsection{Boundary-based} These methods offer an intuitive way of modelling and visualizing the majority class. Because boundaries are decided based on local factors only, these methods are capable of producing quality models from limited training data. A major weakness is that boundary tests inherently produce binary scores instead of more informative probabilistic scores~\citep[pg.~67-78]{Tax2001}.

\paragraph{Nearest Neighbour Data Description} \label{sec:nndDefinition}
Lazy learners are often used with static data sets because they take no time to train and instead compare instances at classification time only~\citep[pg.~423]{Han2011}. This seems to be immediately applicable to data streams and has been exploited in the literature (e.g.\ \citet{Losing2018}).

Inspired by this, we adapted Tax's Nearest Neighbour Data Description (NN-d) method \citep[pg.~69-71]{Tax2001} to data streams. This required only one change: tracking a dynamic neighbourhood instead of a static one. This neighbourhood is initialized as a first in-first out (FIFO) list of fixed length and filled with future instances only if the classifier believes they are from the majority class. The decision function, \eqref{eq:nndDecisionRule}, compares the distance between the test instance to its nearest neighbour with the distance between that neighbour and its nearest neighbour.

\begin{equation}\label{eq:nndDecisionRule}
f(x) = I\left(\frac{\|x - NN^{tr}(x)\|}{\|NN^{tr}(x) - NN^{tr}(NN^{tr}(x))\|} \leq \tau \right)
\end{equation}

\paragraph{Incremental Weighted One Class Support Vector Machine}
Although One Class Support Vector Machines (OCSVMs) are not inherently incremental -- the hyper-plane and support vectors are learned all at once -- modified versions do exist, e.g.\ Weighted OCSVM~\citep{Krawczyk2013}. At the heart of this method is an incremental calculation of support vectors that weights newer or more outlying instances more heavily. Although \citet{Krawczyk2013} noted that performance seemed to be data set-specific, they did observe that weighting training instances based on their age and gradually decreasing them did produce the best results.

\subsubsection{Summary}
The taxonomy of density-, reconstruction-, and boundary-based methods, introduced by \citet{Tax2001} for OCC in static data sets is useful for describing the three approaches used by streaming one-class classifiers, each of which has different strengths and weaknesses.

Density-based methods use mathematical techniques to produce inherently probabilistic scores that are useful for understanding the classifier's confidence in an instance's label. Drawbacks include needing training data that present an unskewed sample of the entire feature space and that the resulting model may be difficult to interpret. Another is that the tree structures often used to model the majority class in data streams can be problematic: tree complexity may scale with the number of instances, working against the limited resources available to stream learners; and relearning decision trees after concept drift may be computationally expensive \citep{Dong2018}.

Reconstruction-based methods produce compact models whose size is independent of the training set's size. Some models can be updated incrementally, e.g.\ SAs, which is useful for stream learning. These methods commonly classify instances via reconstruction error, which acts like the probabilistic score produced by density-based methods. Drawbacks include difficulty of interpretation (e.g.\ the weights of a SA) and the requirement that the training data represent the whole feature space.

Boundary-based methods come with different characteristics, models can be built using any available data and are easily visualized. Unfortunately, model size does depend on the training set size and complexity and the model produces binary scores. Individual classifiers do vary, however: NN-d trains quickly but has a model whose size is directly related to the size of its training set; by contrast the Weighted OCSVM trains slowly but has a model consisting solely of support vectors.

\subsection{Divide and Conquer in Data Streams} \label{sec:complexDomainReviewDS}
Constructing ensembles of stream learners makes use of the idea that a problem can be usefully divided into more easily solved sub-problems. \citet{Gomes2017a} noted that combining several weak classifiers into a strong ensemble can be easier than learning an equally strong classifier. \citet{Krawczyk2017} highlighted that a strength of ensembles is that they allow different classifiers to concentrate on performing well in their own areas.

Divide and conquer has been successfully applied to a range of stream learning tasks in recent years including semi-supervised learning \citep{Hosseini2016,Al-Jarrah2018}, active learning \citep{Abdallah2016} and novelty detection \citep{Faria2015}). None, however, has specifically addressed the problem of OCC in data streams.

\subsection{Context in Machine Learning}
The idea of context in a machine learning task is intuitive and there is a consensus that it is useful for determining what assumptions a learner can make~\citep[e.g.][]{Brezillon1999,Dey2001}. Systems have been successfully designed to track context in data streams \citep{Gomes2012}, identify recurrent concepts \citep{Gama2014a} and perform fault detection \citep{Kalish2016}.

In an early paper in the field, \citet{Turney1993a} formalized the idea of context mathematically. As in the formalism for data streams, every instance is a pair $<x,y>$ where $x = (x_1,x_2,\dots,x_n)$ is the feature vector and $y$ is the class label. Using these mathematical objects, \citet{Turney1993a} described features based on their utility for making predictions about an instance's class label:
\begin{description}
\item[Primary Feature] A feature $x_i$ is a primary feature for predicting a class value $y_j$ if there exists a value $a_i$ for $x_i$ such that
\begin{equation}
P(y = y_j | x_i = a_i) \neq P(y=y_j).
\end{equation}

\item[Contextual Feature] A feature $x_i$ is a contextual feature for predicting a class value $y_j$ if $x_i$ is not a primary feature and there exists a value $a$ for the whole vector $x$ such that
\begin{align}
P(y=y_j | x = a) \neq P(y = y_j | &x_1 = a_1, x_2=a_2,\dots,\\
&x_{i-1}=a_{i-1},x_{i+1}=a_{i+1},\dots,x_n=a_n)\nonumber
\end{align}

\item[Context-sensitive Feature] The primary feature $x_i$ is a context-sensitive feature with respect to the contextual feature $x_j$ if there exists a class value $y_k$ and values $a_i$ and $a_j$ for features $x_i$ and $x_j$ respectively such that:
\begin{equation}
P(y=y_k | x_i=a_i, x_j = a_j) \neq P(y=y_k | x_i=a_i)
\end{equation}

\item[Irrelevant Feature] A feature $x_i$ is an irrelevant feature if it is neither a primary feature nor a contextual feature. 
\end{description}

The usefulness of this formalization is that once contextual attributes have been identified, they can be used to implement one of five different strategies  \citep[Table \ref{tab:fiveConceptStrategies},][]{Turney1993a}. In later work, \citet{Turney2002} also considered the possibility that context could be implicit. In this case the context must be recovered before it can be used; Turney suggested two methods of doing so: clustering the data or dividing the data into temporal sequences~\citep{Turney2002}

\begin{table}[htb]
\centering
\caption{The strategies for using context~\citep[adapted from][]{Turney1993a}}
\label{tab:fiveConceptStrategies}
\begin{tabular}{p{2.5cm} p{9cm}}
\toprule
Feature Space Strategies & Description \\
 \midrule
Contextual normalization & Normalizes those features that are sensitive to context.\\[0.2cm]
Contextual expansion & Adds additional, contextual, features for a classifier to learn from. \\[0.2cm]
Contextual weighting &  Weights features in a context-dependent manner.\\
\bottomrule
\toprule
Classifier Strategies & Description \\
 \midrule
 Contextual classifier selection & Learns a different classifier for each context and selects the most appropriate one at test time.\\[0.2cm]
 Contextual classification adjustment & Makes context-dependent adjustments to the prediction of a single model.\\
 \bottomrule
\end{tabular}
\end{table}

\subsection{Summary}
We have reviewed the basic topics of OCC and data streams. The former, the extreme case of class imbalance, was established to have challenges that included small training set sizes and complex domains. The latter, what \citet{Webb2016} called datasets with a temporal aspect, have their own challenges as well, summarized by the three \emph{V}s: volume, velocity and volatility.

Of particular interest for this work is the intersection of these two topics. In reviewing OCC in data streams, we noted that the data stream environment exacerbates challenges related to both the data and the learner. There are a number of streaming one-class classifiers in the literature, however, that propose to address these challenges. As is the case for one-class classifiers for static datasets, we noted that the taxonomy of density-, reconstruction-, and boundary-based methods did a good job of capturing the characteristics of streaming one-class classifiers.

Finally, we looked at the ideas of ``divide and conquer'' and the utility of ensembles. Inspired by the approach taken by \citet{Sharma2018} in decomposing the majority class, we identified Turney's formalization of context as a promising avenue for applying these ideas. In the next section we lay out our proposed frameworks and provide theoretical supports for some of our design decisions.

\section{Proposed Frameworks} \label{sec:proposedFrameworks}
The central contribution of this paper is to demonstrate how one-class classifier performance can be improved by incorporating contextual information about the majority class's structure. In Table \ref{tab:threeScenariosDS} we identify three scenarios in the data stream environment, based on \citet{Sharma2018}.

\begin{table}[htb]
\centering
\caption{The scenarios of context in data streams}
\label{tab:threeScenariosDS}
\begin{tabular}{c c c c}
\toprule
 & & \multicolumn{2}{c}{Context Availability}\\
Scenario Name & Context Type & Training Phase & Testing Phase \\
 \midrule
Complete Knowledge & Explicit & Yes & Yes \\[0.1cm]
Fuzzy Knowledge & Explicit & Yes & No\\[0.1cm]
No Knowledge &  Implicit & No & No\\
\bottomrule
\end{tabular}
\end{table}

\subsection{Problem Formalisms}
We name the probabilities associated with a data stream's underlying process its \emph{concept}, in line with \citet{Webb2016}, and we name each constituent component that makes up this process a \emph{context}. Once we have an instances' context, either because it was provided explicitly or implicit and recovered, it is used to perform contextual classifier selection.

Drawing on \citet{Turney1993a} and \citet{Webb2016}, we define a data stream object $o$ as a triple $<c, x, y>$ where the context $c$ is drawn from a random variable $C$, the predictive feature vector $x$ is drawn from a random variable $X$ and the class label $y$ is drawn from a random variable $Y$. The distribution of $X$ is conditioned on the value of $c$ and the distribution of $Y$ is conditioned on the values of both $c$ and $x$.




\subsection{Scenario 1: Context is Explicit or Easy to Infer} \label{sec:completeKnowledge}
In this scenario the context is either explicit and available as a contextual attribute, or it is implicit and readily available from an ``oracle'' function. An example is streaming data from airborne sensors: location data providing context for sensor readings is available during training as well as in real-time for test instances that arriving in an online manner.

\subsubsection{OCComplete Initialization Phase}
The OCComplete framework has access to $<c, x, y>$ for the offline initialization phase and one model is trained for each context -- value of $c$:
\begin{equation} \label{eq:model}
M_i: \dom{(X)} \to \dom{(Y)}\quad \forall\ c_i \in \dom{(C)}
\end{equation}
During initialization, the first $initialPoints$ instances from the data stream are buffered by context and then used to train the base classifiers. Of course, sufficient examples may not be available for each of the contexts. In this case, context-based oversampling can be performed before training the one-class classifiers (Algorithm~\ref{alg:ocCompleteInitialize}, Step~\ref{alg:smoteSTEP}). We use SMOTE, which is a well-known method for oversampling that has been shown to increase the number of minority class instances available for training without leading to over-training or inappropriately expanding the minority class's region~\citep{Chawla2002}. Further discussion is provided in Section~\ref{sec:smoteDiscussion}.

With this in mind, OCComplete requires the following parameters: $initialPoints$, the number of training instances to use during initialization; $minPoints$, the minimum number of training instances required for each context; $j$, the number of contexts present -- determined by domain knowledge or by inspection; $\tau$, the threshold anomaly score to label a test instance as an outlier; and $Classifier$, the base classifier to use.
\begin{algorithm}[htb]
\caption{OCComplete - Initialization Phase}
\label{alg:ocCompleteInitialize}
\begin{algorithmic}[5]
\STATE{Initialize $j$ instance buffers: $\mathrm{Buffer}_{1\dots j}$}
\STATE{$numInstances \leftarrow 0$}
\WHILE {($\mathcal{DS}$ has more instances) $\wedge (numInstances < initialPoints)$}
\STATE{Get next instance from $\mathcal{DS}$, $inst$, and add it to its context's buffer}
\STATE {$numInstances++$}
\ENDWHILE
\FORALL{$c \in [1,j]$}
\WHILE{$|\mathrm{Buffer}_c| < minPoints$}
\STATE{Generate a synthetic instance using SMOTE and add it to $\mathrm{Buffer}_c$} \label{alg:smoteSTEP}
\ENDWHILE
\STATE {Train a $Classifier$ on all of the instances in $\mathrm{Buffer}_c$}
\ENDFOR
\RETURN {$j$ $\mathrm{Classifier}$s, each trained on a different context}
\end{algorithmic}
\end{algorithm}

\subsubsection{OCComplete Online Phase}
OCComplete receives $<c,x>$ during the online phase and uses $c$ to perform contextual classifier selection. In this way, irrelevant or confusing information from other contexts can be ignored while deciding the nature of the test instance.
\begin{algorithm}[htb]
\caption{OCComplete - Online Phase}
\label{alg:ocCompleteOnline}
\begin{algorithmic}[5]
\WHILE {($\mathcal{DS}$ has more instances)}
\STATE{Get next instance from $\mathcal{DS}$, $inst$, and determine which context it belongs to, $c$}
\STATE {$AnomalyScore \leftarrow$ the anomaly score returned by $Classifier_c$ for $inst$}
\IF {$AnomalyScore > \tau$}
\STATE{Label $inst$ as an \textbf{OUTLIER}}
\ELSE
\STATE {Label $inst$ as \textbf{NORMAL}}
\ENDIF
\IF{Conditions for training are met} \label{alg:trainingConditionSTEP}
\STATE {Train $\mathrm{Classifier}_c$ on $inst$}
\ENDIF
\ENDWHILE
\end{algorithmic}
\end{algorithm}

Throughout, the base classifiers should be kept up to date in order to cope with concept drift. This is done by training each base classifier on instances from the data stream which belong to its respective context. In the case where a base classifier can only be trained on majority class instances (e.g.\ Nearest Neighbour Description) this training can occur when a label is received or on the basis of the instance's anomaly score (Algorithm~\ref{alg:ocCompleteOnline}, Step~\ref{alg:trainingConditionSTEP}).

\subsection{Scenario 2: Context is Hard to Infer} \label{sec:fuzzyknowledge}
In this scenario the context is implicit and obtainable from an ``oracle'' function. This oracle is impractical to use during the online phase, however, so the framework must learn its function as well. This is done by training a multi-class classifier to distinguish between contexts during initialization. This framework uses the temporal sequence of the instances in the data stream to recover context: the models \eqref{eq:model} and \eqref{eq:contextSelector} are updated on the basis of windows, so instances that are close together in the data stream are treated the same way. An example of this scenario is diagnosing a medical condition: a medical professional can determine context membership, but this is an expensive process and can only be done during initialization.

\subsubsection{OCFuzzy Initialization Phase}
OCFuzzy has access to $<c, x, y>$ for the offline initialization phase, with $c$ provided by an oracle, and trains one model \eqref{eq:model} for each context. It also trains another model to predict $c$ on the basis of $x$:
\begin{equation} \label{eq:contextSelector}
\mathcal{M}: \dom{(X)} \to \dom{(C)}
\end{equation}
The parameters $initialPoints$, $minPoints$, $j$, $\tau$, and $Classifier$ are required, as before. An additional parameter is $ConceptDecider$, the multi-class classifier that will learn the model \ref{eq:contextSelector} and assign test instances to a context during the online phase.
\begin{algorithm}[htb]
\caption{OCFuzzy - Initialization Phase}
\label{alg:ocFuzzyInitialize}
\begin{algorithmic}[5]
\STATE{Initialize $j$ instance buffers: $\mathrm{Buffer}_{1\dots j}$}
\STATE{$numInstances \leftarrow 0$}
\WHILE {$\mathcal{DS}$ has more instances $\wedge$ $numInstances < initialPoints$}
\STATE{Get next instance from $\mathcal{DS}$, $inst$, and add it to its context's buffer}
\STATE {$numInstances++$}
\ENDWHILE
\FORALL{$c \in [1,j]$}
\WHILE{$|\mathrm{Buffer}_c| < minPoints$}
\STATE{Generate a synthetic instances using SMOTE and add it to $\mathrm{Buffer}_c$}
\ENDWHILE
\STATE {Train a $Classifier$ on all of the instances in $\mathrm{Buffer}_c$}
\ENDFOR
\STATE{Train $\mathrm{ConceptDecider}$ to discriminate between contexts using all of the instances in $Buffer_{1\dots j}$}
\RETURN {$\mathrm{ConceptDecider}$ trained to discriminate between contexts; $j$ $\mathrm{Classifier}$s, each trained on a different context}
\end{algorithmic}
\end{algorithm}

\subsubsection{OCFuzzy Online Phase}
OCFuzzy receives only $<x>$ during the online test-and-train phase as consulting the oracle is too expensive, impractical or simply impossible. Instead the model \eqref{eq:contextSelector} is used to predict the context $$\hat{c} = \mathcal{M}(x).$$ This prediction, $\hat{c}$, is used to for contextual classifier selection. As before, only the chosen base classifier provides an anomaly score -- ensuring that only relevant knowledge is applied.
\begin{algorithm}[htb]
\caption{OCFuzzy - Online Phase}
\label{alg:ocFuzzyOnline}
\begin{algorithmic}[5]
\WHILE {$\mathcal{DS}$ has more instances}
\STATE{Get next instance from $\mathcal{DS}$, $inst$}
\STATE {$\hat{c} \leftarrow \mathrm{ConceptDecider}$'s classification of $inst$}
\STATE {$AnomalyScore \leftarrow $ the anomaly score returned by $Classifier_{\hat{c}}$ for $inst$}
\IF {$AnomalyScore > \tau$}
\STATE{Label $inst$ as an \textbf{OUTLIER}}
\ELSE
\STATE {Label $inst$ as \textbf{NORMAL}}
\ENDIF
\IF{Conditions for one-class training are met} 
\STATE {Train $\mathrm{Classifier}_{\hat{c}}$ on $inst$}
\ENDIF
\IF{Conditions for multi-class training are met} \label{alg:multiTrainSTEP}
\STATE {Train $\mathrm{ConceptDecider}$ on $inst$}
\ENDIF
\ENDWHILE
\end{algorithmic}
\end{algorithm}

While the base classifiers can be trained throughout the data stream as described for scenario~1, the multi-class classifier is more challenging. After all, getting the context for a test instance is an expensive process, which is what motivated this framework in the first place. If concept drift is anticipated, then the multi-class classifier could be trained using active learning or a concept drift detection method could trigger a complete re-initialization of the framework (Algorithm~\ref{alg:ocFuzzyOnline}, Step~\ref{alg:multiTrainSTEP}).

\subsection{Scenario 3: Context must be Recovered} \label{sec:noKnowledge}
In this scenario the context is implicit and no ``oracle'' function is available. The temporal sequence of instances is again used to recover the context, but now unsupervised learning is used as well. This approach has the additional benefit that we will be able to track how these contexts evolve, since data stream clustering algorithms update incrementally and are able to produce clusterings at any time. An example application is network intrusion detection: an intrusion detection system monitoring a computer network would benefit from knowing the context during which a specific packet or system trace occurred, but it is unclear exactly how to best determine these contexts.

\subsubsection{OCCluster Initialization Phase}
The framework only has access to $<x, y>$ for the offline initialization phase. It begins by clustering the data, $\mathcal{C}$, and treats each cluster $\sigma \in \mathcal{C}$ as a separate context. One model \eqref{eq:model} is trained for each cluster.

Any cluster whose weight is less than a given threshold is removed in order to ensure that the framework only learns classifiers over instances that are likely to continue occurring (equations~\ref{eq:clusterWeight}-\ref{eq:clusterWeightThreshold}). On the basis of \citet{Moulton2018a} we use ClusTree with silhouette k-means \citep{Kranen2011} as the data stream clustering algorithm in this paper after it was found to produce robust, high-quality clusterings in the presence of concept drift. While context-based oversampling can be performed as in the previous scenarios, it should only be done after clustering.
\begin{equation} \label{eq:clusterWeight}
\mathrm{weight}(c) = \frac{\text{num. points in }c}{\text{total points clustered}}
\end{equation}
\begin{equation} \label{eq:clusterWeightThreshold}
\mathrm{Threshold} = \frac{1}{\|\mathcal{C}\|^2} \text{ where } \|\mathcal{C}\| \text{ is the number of clusters in } \mathcal{C} \text{.}
\end{equation}

OCCluster's parameters include $initialPoints$, $minPoints$, $\tau$, and $Classifier$ as before. Additional parameters are: $ClusterAlgorithm$, the data stream clustering algorithm with which to cluster the the data stream; $w$, the number of instances between updates of $ClusterAlgorithm$'s clustering; and $T$, the cluster distance threshold above which two clusters are considered to be different.
\begin{algorithm}[htb]
\caption{OCCluster - Initialization Phase}
\label{alg:ocClusterInitialize}
\begin{algorithmic}[5]
\STATE{Initialize one instance buffer: $\mathrm{InstBuffer}$}
\STATE{$numInstances \leftarrow 0$}
\WHILE {($\mathcal{DS}$ has more instances) $\wedge$ ($numInstances < initialPoints$)}
\STATE{Get next instance from $\mathcal{DS}$, $inst$, and add it to $\mathrm{InstBuffer}$}
\STATE{Train $ClusterAlgorithm$ on $inst$}
\STATE {$numInstances++$}
\ENDWHILE
\STATE{$\mathcal{C} \leftarrow$ Clustering produced by $ClusterAlgorithm$}
\FORALL{clusters $\sigma$ in $\mathcal{C}'$}
\IF{$(weight(\sigma) > \frac{1.0}{\|\mathcal{C}\|^2})$} \label{alg:initClusterWeightSTEP}
\STATE{Remove $\sigma$ from $\mathcal{C}$}
\ENDIF
\ENDFOR
\STATE{$k \leftarrow$ the number of clusters in $\mathcal{C}$}
\STATE{Initialize $k$ instance buffers: $\mathrm{Buffer}_{1\dots k}$}
\FORALL{Instances $inst$ in $\mathrm{InstBuffer}$}
\STATE{Determine the cluster,  $\sigma_i$ in $\mathcal{C}$, to which $inst$ is closest}
\STATE{Add $inst$ to $\mathrm{Buffer}_i$}
\ENDFOR
\FORALL{$c \in [1,k]$}
\WHILE{$|\mathrm{Buffer}_c| < minPoints$}
\STATE{Generate a synthetic instance using SMOTE and add it to $\mathrm{Buffer}_c$}
\ENDWHILE
\STATE {Train a $Classifier$ on $\mathrm{Buffer}_c$}
\ENDFOR
\RETURN {A clustering $\mathcal{C}$ with $k$ clusters; $k$ $Classifier$s, each trained on a different context}
\end{algorithmic}
\end{algorithm}

\subsubsection{OCCluster Online Phase}
OCCluster receives only $<x>$ during the online phase. It uses the clustering to predict the context by finding the cluster to which the test instance is closest \eqref{eq:instanceClusterDistance}. This cluster, $\sigma$, is used for contextual classifier selection. The clustering is updated every $w$ instances, where $w$ is chosen to minimize the fluctuations in the number of instances available from each context and to maximize the framework's adaptability to concept drift.
\begin{equation} \label{eq:instanceClusterDistance}
\argmin_{\sigma \in \mathcal{C}} d(x,\sigma)
\end{equation}
\begin{algorithm}[htb]
\caption{OCCluster - Online Phase}
\label{alg:ocClusterOnline}
\begin{algorithmic}[5]
\WHILE {$\mathcal{DS}$ has more instances}
\STATE{Get next instance from $\mathcal{DS}$, $inst$, and determine which cluster $\sigma_i$ in $\mathcal{C}$ it is closest to}
\STATE {$AnomalyScore \leftarrow$ the anomaly score returned by $\mathrm{Classifier}_i$ for $inst$}
\IF {$AnomalyScore > \tau$}
\STATE{Label $inst$ as an \textbf{OUTLIER}}
\ELSE
\STATE {Label $inst$ as \textbf{NORMAL}}
\ENDIF
\IF{Conditions for one-class training are met} 
\STATE {Train $\mathrm{Classifier}_\sigma$ on $inst$}
\ENDIF
\STATE {Train $ClusterAlgorithm$ on $inst$}
\IF{$numInstances\ \mathrm{ mod }\ w = 0$}
\STATE {$\mathcal{C}' \leftarrow$ Clustering produced by $ClusterAlgorithm$}
\FORALL{clusters $\sigma'$ in $\mathcal{C}'$}
\IF{$weight(\sigma') > \frac{1.0}{\|\mathcal{C}\|^2} \wedge (\operatorname*{arg\,min}_{\sigma \in \mathcal{C}} CD(\sigma,\sigma') < T)$} \label{alg:clusterDistanceSTEP}
\STATE{Assign the ${Classifier_\sigma}$ to $\sigma'$.}
\ELSE
\STATE{Train new $Classifier$ over instances belonging to $\sigma'$.}
\ENDIF
\ENDFOR
\ENDIF
\STATE{$numInstances++$}
\ENDWHILE
\end{algorithmic}
\end{algorithm}

Throughout the online phase (Algorithm~\ref{alg:ocClusterOnline}, Step~\ref{alg:clusterDistanceSTEP}), clusters are pruned if their weight is below a threshold to avoid learning classifiers over anomalies or noise in a given window. To allow this threshold to dynamically adapt to clusterings with different numbers of clusters, we use the same formulation as in the initialization phase (Equation~\ref{eq:clusterWeightThreshold}).

Updating the clustering/classifier pairing after a new clustering is acquired is a challenging task. To do so, each of the new clusters, $\sigma'$ in $\mathcal{C}'$, is compared to the old clusters, $\sigma$ in $\mathcal{C}$ using a cluster distance function that we define in Section~\ref{sec:clusterDistanceFunction} (Algorithm~\ref{alg:ocClusterOnline}, Step~\ref{alg:clusterDistanceSTEP}). If the distance from a new cluster to its closest old cluster is below the threshold $T$ then the old cluster's classifier is assigned to the new cluster; otherwise a new classifier is trained over the new cluster. As with the parameter $w$, $T$ must be chosen to minimize the framework's forgetting of relevant information and to maximize the framework's forgetting of irrelevant information. Ideal values for these parameters are likely data stream-/domain-specific and users should take these conditions into account.

\subsection{Context-Based Oversampling} \label{sec:smoteDiscussion}
Although synthesizing new instances is more computationally expensive than replication, we use SMOTE's generation process to over-sample those contexts with an insufficient number of training instances. This is to help the base classifier learn the context's whole area in the feature space rather than overly concentrating on the instances that have been seen.

The idea of context-based oversampling has been seen before in the literature \citep{Nickerson2001,Jo2004,Weiss2013} and is related to the strategy of contextual weighting~\citep{Turney1993a}. Instead of features being weighted, however, instances are ``weighted'' by their use in generating synthetic instances. This has balances the number of instances available between contexts and ensures that a model can be learned for each.

We considered undersampling as well, but noted that a constant challenge during experiments was to ensure that there are enough training instances from each context. Therefore, although we do not specifically undersample, the goal of minimizing unnecessary instances is achieved by selecting the smallest window size that guarantees at least $\tau$ instances from each context; this is discussed in the next section.

\subsection{An Observation Regarding Window Size} \label{sec:obsWindowSize}
Each framework makes use of sliding windows, which are a technique used by stream learning algorithms to achieve the twin objectives of keeping memory requirements bounded and weighting recent instances more heavily than older instances. Although window sizes can be fixed or variable~\citep{Gama2014}, the latter requires a signal to decide what the correct window size is at any given point in the data stream. This would work at cross purposes to our desire that our frameworks learn as passively as possible from the data stream, so we would prefer to use fixed size sliding windows.

This decision naturally raises the question of what fixed window size should be chosen. \citet{Zliobaite2009} addressed a related problem for dynamic windows in small sample size classification. They, however, solved the distinct problem of using labelled instances to find the optimal window size for classifier training after concept drift. In our case, we wish to ensure, to a desired degree of confidence, that there will be enough instances from each context within our windows throughout the data stream and we formalize this approach in Theorem~\ref{the:windowSize}).
\begin{theorem}[Window Size to Ensure a Minimal Number of Instances] \label{the:windowSize}
Consider a data stream, $\mathcal{DS}$, with an underlying concept, $\chi$, in line with Definition~\ref{def:conceptQuantitativeDefinition}. The concept $\chi$ is composed of $j$ contexts, $c_{1\dots j}$, and an instance drawn from $\mathcal{DS}$, $o \in \chi$, belongs to one of these contexts according to their underlying probabilities, $p_i$ for $i \in 1\dots j$:

\begin{equation}
p_i = P(o \in c_i)
\end{equation}

The minimal window size required to ensure, with degree of confidence $\overline{c}$, that at least $\tau$ instances from each context are present is $n$ such that:

\begin{equation}
np - (x \times \sqrt{np_i(1 - p_i)}) = \tau
\end{equation}

where $p_{min} = \operatorname{min}_{i \in 1\dots j} p_i$,  $x$ is chosen such that $\Phi(x) - \Phi(-x) = \overline{c}$, and $n$ and $p_{min}$ are such that a normal approximation of the binomial distribution can be used.
\end{theorem}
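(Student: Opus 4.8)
The plan is to treat the number of instances of a given context inside a window as a binomial random variable, apply the normal approximation, and then read off the minimal $n$ from the lower endpoint of the resulting confidence interval. First I would fix a single context $c_i$ and let $N_i$ denote the number of the $n$ instances in a window that belong to $c_i$. Since instances are drawn independently from $\mathcal{DS}$ and, by Definition~\ref{def:conceptQuantitativeDefinition}, each falls in $c_i$ with probability $p_i$, the membership indicator of each instance is Bernoulli$(p_i)$ and $N_i$ is a sum of $n$ such independent indicators. Hence $N_i \sim \mathrm{Binomial}(n,p_i)$, with $\mathbb{E}[N_i] = np_i$ and $\mathrm{Var}(N_i) = np_i(1-p_i)$.

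Next I would reduce the ``from each context'' requirement to a single worst-case inequality. Writing the lower confidence endpoint as $g(p) = np - x\sqrt{np(1-p)}$, I would show that in the admissible regime (where $np$ is large enough to justify the normal approximation) $g$ is increasing in $p$: differentiating gives $g'(p) = n - x\sqrt{n}\,(1-2p)/(2\sqrt{p(1-p)})$, which is positive once $\sqrt{np}$ exceeds a small constant of order $x$, a condition already implied by the approximation regime. Consequently the context with the smallest probability $p_{min} = \min_{i} p_i$ is the binding one, so that guaranteeing $\tau$ instances there automatically guarantees them for every context with larger $p_i$, collapsing the simultaneous requirement to one inequality.

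Then, under the stated hypothesis that $n$ and $p_{min}$ permit the normal approximation of the binomial, I would model $N_{min}$ as $\mathcal{N}(np_{min}, np_{min}(1-p_{min}))$. Choosing $x$ so that $\Phi(x) - \Phi(-x) = \overline{c}$ gives the symmetric interval capturing probability $\overline{c}$ in standard units; rescaling by the mean and standard deviation yields the confidence interval $[\,np_{min} - x\sqrt{np_{min}(1-p_{min})},\ np_{min} + x\sqrt{np_{min}(1-p_{min})}\,]$ for $N_{min}$. Forcing its lower endpoint to equal $\tau$, that is $np_{min} - x\sqrt{np_{min}(1-p_{min})} = \tau$, and taking the smallest $n$ satisfying it, recovers precisely the defining equation of the theorem; the relation is a quadratic in $\sqrt{n}$ that could be solved explicitly if a closed form were wanted.

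The main obstacle I expect is conceptual rather than computational: being precise about what ``degree of confidence $\overline{c}$'' certifies. The construction pins the lower end of a two-sided $\overline{c}$-interval to $\tau$, which strictly delivers a per-context, one-sided tail guarantee, whereas a fully simultaneous ``all contexts at once'' statement would incur a union-bound correction over the $j$ contexts that the theorem suppresses. I would therefore make the worst-case monotonicity step rigorous and state the approximation conditions (for example, $np_{min}$ and $n(1-p_{min})$ both sufficiently large) explicitly, so that the normal approximation, and hence the confidence interval and the resulting defining equation, are valid in the regime claimed.
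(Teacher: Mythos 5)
Your proposal follows essentially the same route as the paper's proof: model the per-context count in a window as $\mathrm{Binomial}(n,p_i)$, invoke the normal approximation $\mathcal{N}(np_i, np_i(1-p_i))$ under the stated regime, and pin the lower endpoint of the symmetric $\overline{c}$-interval, $np_i - x\sqrt{np_i(1-p_i)}$ with $\Phi(x)-\Phi(-x)=\overline{c}$, to $\tau$. If anything, your write-up is more careful than the paper's own three-line argument: the monotonicity step justifying the reduction to $p_{min}$ (which the paper states in the theorem but never uses or proves), and your caveat that the construction yields a per-context, one-sided guarantee rather than a simultaneous ``all $j$ contexts'' guarantee without a union-bound correction, both address gaps the paper's proof silently passes over.
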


\begin{lemma}[Normal approximation of the binomial distribution] \label{lem:binomialToNormal}
The binomial distribution $B(n,p)$ can be approximated by the normal distribution $\mathcal{N}(np, np(1 - p))$ if:

\begin{equation}
n > 9\frac{1-p}{p} \text{ and } n > 9\frac{p}{1-p}
\end{equation}
\end{lemma}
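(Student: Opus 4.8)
The plan is to justify the lemma via the classical ``three standard deviation'' heuristic for approximating a binomial by a normal: the distribution $\mathcal{N}(np, np(1-p))$ concentrates roughly $99.7\%$ of its mass within three standard deviations of its mean, so replacing $B(n,p)$ by it is defensible precisely when that three-sigma interval is contained in the binomial's support $\{0,1,\dots,n\}$. Writing $\mu = np$ and $\sigma = \sqrt{np(1-p)}$, this amounts to the two containment requirements $\mu - 3\sigma \geq 0$ and $\mu + 3\sigma \leq n$, and the whole argument consists of translating each of these into one of the two stated inequalities.

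First I would expand the lower requirement $\mu - 3\sigma \geq 0$ into $np \geq 3\sqrt{np(1-p)}$ and, squaring and cancelling the positive factor $np$, reduce it to $n \geq 9\frac{1-p}{p}$, recovering the first inequality. Next I would treat the upper requirement $\mu + 3\sigma \leq n$, i.e.\ $3\sqrt{np(1-p)} \leq n(1-p)$; squaring and cancelling the positive factor $n(1-p)$ yields $n \geq 9\frac{p}{1-p}$, the second inequality. Conjoining the two conditions gives the lemma. Each reduction is routine once I note that both sides are nonnegative (so squaring preserves the inequality) and that the cancelled factors are strictly positive for $0 < p < 1$.

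I expect the only real subtlety to be conceptual rather than computational: the ``three-sigma'' criterion is a convention, not a theorem, since keeping the normal's bulk inside $[0,n]$ does not by itself bound the approximation error. A fully rigorous error bound would instead draw on the de Moivre--Laplace limit theorem or the Berry--Esseen inequality. I would therefore state the lemma explicitly as the standard textbook rule of thumb, emphasising that the constant $3$---and hence the factor $9$ appearing in both inequalities---is exactly what encodes the $99.7\%$ coverage, and that this level of rigour suffices for its use in bounding the window size in Theorem~\ref{the:windowSize}.
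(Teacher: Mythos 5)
Your algebra is correct, but note the point of comparison: the paper never actually proves this lemma. It is stated as a known statistical rule of thumb, and the proof environment that follows it in the paper belongs to Theorem~\ref{the:windowSize}, which merely \emph{invokes} the lemma (``Assuming that our window size is that $n$ is large enough to make use of lemma~\ref{lem:binomialToNormal}\dots''). So your proposal supplies a justification the paper leaves entirely implicit. Your two reductions are exactly the textbook origin of these thresholds: $np - 3\sqrt{np(1-p)} \geq 0$ squares and cancels to $n \geq 9\frac{1-p}{p}$, and $np + 3\sqrt{np(1-p)} \leq n$ squares and cancels to $n \geq 9\frac{p}{1-p}$, so the stated conditions are precisely the requirement that the three-sigma interval of $\mathcal{N}(np, np(1-p))$ sit inside the binomial's support $[0,n]$. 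Your closing caveat is also the essential observation rather than a side remark: because ``can be approximated'' is not a quantified claim, the lemma does not admit a proof in the strict sense; a rigorous version would have to bound the approximation error via the de Moivre--Laplace theorem or the Berry--Esseen inequality, which neither you nor the paper attempts. Identifying the three-sigma criterion as a convention encoding the $99.7\%$ coverage---and hence the constant $9$---is an accurate description of the lemma's actual status in the paper, and your write-up makes that status explicit where the paper does not.
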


\begin{proof}
Consider a data stream, $\mathcal{DS}$, with an underlying concept, $\chi$, which is composed of a series of $j$ contexts, $c_{1\dots j}$. The probability that a given data stream object belongs to context $c_i$ is $p_i$.

The number of instances from a given context in such a window is a random variable following the binomial distribution: $B(n, p_i)$.  Assuming that our window size is that $n$ is large enough to make use of lemma~\ref{lem:binomialToNormal}, we can use the approximation $$B(n, p_i) \sim \mathcal{N}(np_i, np_i(1 - p_i)).$$ Using the properties of the normal distribution, we then set the lower bound on the number of instances for each context. In general, using the cumulative density function of the normal distribution, $\Phi$, we determine $\tau$ for a specific confidence level, $\overline{c}$, as follows:

\begin{equation}
\tau = np - (x \times \sqrt{np_i(1 - p_i)}) \text{ such that } \Phi(x) - \Phi(-x) = \overline{c}
\end{equation}
\end{proof}

\subsection{Defining a Cluster Distance Function} \label{sec:clusterDistanceFunction}
In the OCCluster framework we want to use knowledge learned from old clusters to bootstrap learning from a new clustering. To do so, we must be able to answer the question `how close are these two clusters?' In order for OCCluster to be as general as possible, our desire is for this distance function to be general as well.

\subsubsection{Cluster Types}
\citet{Ntoutsi2009} identified three kinds of clusters based on their definition: type A, which are defined as geometric objects, type B1, which are defined as a set of data records, and type B2, which are defined as a distribution. They note that all clusters can be given a type B1 definition since they are formed on top of a data set~\citep{Ntoutsi2009}. Given that all clusters result in a (hyper-) volume within the feature space, however, they can also be given a type A definition. Even if this is not an easily defined geometric object, this volume can be defined by the inclusion probability (IP) function. For clusters with certain inclusion (as opposed to fuzzy inclusion) this is simply an indicator function (equation~\ref{eq:IPcluster}).

\begin{equation} \label{eq:IPcluster}
IP_{\mathcal{C}}(x) =
\begin{cases} 
1 & \text{if } x \in \mathcal{C} \\
0 & \text{if } x \notin \mathcal{C}
\end{cases}
\end{equation}

\subsubsection{Distance Functions}
A function to tell us how far apart two clusters are will be, mathematically speaking, a distance function. A more stringent type of distance, which aligns with our experiences with the Euclidean distance, is a metric~\citep[Definition~\ref{def:metric}, from][Ch.1]{Deza2009}.
 
\begin{definition}[Metric] \label{def:metric}
Let $X$ be a set. A function: $d: X \times X \rightarrow \mathbb{R}$ is called a \textbf{metric} on $X$ if, for all $x,y,z \in X$, there holds:
\begin{enumerate}
\item $d(x,y) \geq 0$ \hfill[non-negativity]
\item $d(x,y) = 0 \iff x = y$ \hfill[identity of indiscernibles]
\item $d(x,y) = d(y,x)$ \hfill[symmetry]
\item $d(x,y) \leq d(x,z) + d(z,y)$ \hfill[triangle inequality] 
\end{enumerate}
\end{definition}

Many authors have touched on questions that are tangential to ours. Each of their methods has some aspect that makes it difficult to adapt; they either measure the distance between whole clusterings, measure the distance between probability distributions, require all of the underlying points, or are not symmetric~\citep[pg.~60-66]{Moulton2018b}. We therefore propose our own Cluster Distance Function that can be used in the data stream environment, takes only two clusters as its arguments, and is provably a metric.

\subsubsection{A New Cluster Distance Function} \label{sec:clusterDistance}
We will consider only clusters with certain inclusion for our cluster distance function and leave the question of measuring distance between fuzzy clusters to future inquiry. Instances will be considered to either belong or not belong to a given cluster, as shown by the IP.

Here we consider only numerical features and with the Euclidean distance between points. Analysis for nominal dimensions would depend on the ordering (if any) of values for that dimension as well as the distance function defined between values. This extension is possible if a distance and an IP can be defined, however this is left to future work.

With these assumptions, we express all clusters as (hyper-) volumes of the feature space, each defined by an IP. We then conceive of the distance between two clusters as the amount that these two (hyper-) volumes do not overlap, inspired by the Hellinger distance. This approach has the additional benefit of provably being a metric.

\begin{definition}[Cluster Distance Function]
For a given $n$-dimensional feature space, $\mathbb{F}$, we define the distance between two clusters, the (hyper-) volumes $\mathcal{C} \in \mathbb{F}$ and $\mathcal{C}'\in \mathbb{F}$, as the amount that these two (hyper-) volumes do not overlap:

$$CD(\mathcal{C}, \mathcal{C}') = \int_{\mathbb{F}} |IP_{\mathcal{C}}(x) - IP_{\mathcal{C}'}(x)|$$
\end{definition}

The proof that this Cluster Distance Function is a metric is included in the supplemental material.

\subsubsection{Summary}
The Cluster Distance Function defines the distance between two clusters based on their respective IP functions. As limitations, our proofs have only considered numeric dimensions and we restrict the argument clusters to being certain clusters instead of fuzzy clusters.

We note that the formulation presented is unable to distinguish between multiple cases of disjoint clusters. When considering our desire to transfer classifiers from one cluster to another, however, the precondition of having some overlap between clusters seems reasonable. Therefore, we accept this limitation and leave further development to future work.

\section{Experimental Design}
Our research question is ``how can contextual knowledge be used to improve one-class classifier performance in data streams?'' The hypothesis is that guiding a streaming one-class classifier with the contexts that occur within the majority class will result in better classification results than using the streaming one-class classifier alone.

\subsection{Software and Hardware Specification}
All experiments were done on a laptop with 64-bit Linux Ubuntu 16.04 installed, 15.6 GiB of memory and eight 2.60 GHz processors. All data stream algorithms were implemented in MOA 17.06. MOA is an open source framework with the goal of being a benchmark for data stream mining research; it is implemented in Java and easily extendable~\citep{bifet2010moa}.

\subsection{Framework and Classifier Settings}
In applying the frameworks to these data streams it is important to ensure that their parameters, described in Sections~\ref{sec:completeKnowledge} to \ref{sec:noKnowledge}, are set to values that are reasonable and consistent. Given that the potential parameter-space for this experiment is very large, values were chosen by inspection.

\subsubsection{Single Classifier}
We will use SAs, Streaming HS-Trees, and our streaming adaptation of NN-d as base classifiers. These represent the three approaches to OCC and will allow us to assess the generality of our results. Each classifier is able to passively detect concept drift, removing the requirement for a separate concept drift detection method. This results in simpler frameworks as well as simpler experimental design. Each of the base classifiers\footnote{available online: https://doi.org/10.5281/zenodo.1287732} was implemented for use with MOA 17.06~\citep{bifet2010moa}.

\paragraph{Streaming Autoencoder}
The SA's structure is as described by \citet{Dong2018}: an input layer with one neuron for each non-class attribute, a hidden layer of two neurons and an output layer with one neuron for each non-class attribute. The logistic function is used as the activation function for the neurons. Since an autoencoder attempts to compress and reconstruct the input, the squared error between input and output is used as the anomaly score \eqref{eq:autoencoderAnomalyScore}. The SA's learning rate, which controls the magnitude of weight updates during backpropagation, is set to $0.5$. 
\begin{equation} \label{eq:autoencoderAnomalyScore}
AnomalyScore(x) = \frac{1}{2} \times \|x_{input} - x_{output}\|^2
\end{equation}

\paragraph{Streaming Half-Space Trees}
The Streaming HS-Trees algorithm is implemented as described by \citet{Tan2011} and with parameter values as shown in Table~\ref{tab:hstreesParams}.
\begin{table}[htb]
\centering
\caption{Values chosen for Streaming HS-Trees' parameters}
\label{tab:hstreesParams}
\begin{tabular}{c c c}
\toprule
Symbol & Parameter & Value \\
\midrule
$\psi$ & Size of Window & $500$ \\
t & Number of Trees & $5$ \\
h & Maximum Tree Depth & $12$ \\
- & Size Limit & $0.1$\\
\bottomrule
\end{tabular}
\end{table}

\paragraph{Nearest Neighbour Data Description}
The NN-d algorithm is implemented as described in Section~\ref{sec:nndDefinition} and produces an anomaly score using equation~\ref{eq:nndDecisionRule}. The only parameter for this algorithm is neighbourhood size, which we set at $100$ instances.

\subsubsection{Frameworks}
The main component for each framework is the base classifier to be used, this is one of the independent variables in our experiments. The initialization window is set so that each model receives $2000$ training instances and context-based oversampling is applied using SMOTE to ensure that each context has at least $1000$ instances. Information about the data stream is used as described in Section~\ref{sec:proposedFrameworks} and other parameters are set according to Table~\ref{tab:occlusterParams}.

\begin{table}[htb]
\centering
\caption{Parameter values chosen for each framework}
\label{tab:occlusterParams}
\begin{tabular}{cc c}
\toprule
Framework & Parameter & Value\\
\midrule
OCComplete & Nil & Nil \\
\midrule
OCFuzzy & Concept Decider & Na\"{\i}ve Bayes \\
\midrule
OCCluster & Clustering Algorithm & ClusTree\\
& Window Size & 2000\\
& Inclusion Threshold for Training & 1.0\\
& Cluster Movement Threshold & 0.2\\
\bottomrule
\end{tabular}
\end{table}

\subsection{Data Streams}
In order for the results of this experiment to answer the research question, it is important that the data streams used be both representative and valid. Data streams were synthesized or selected to represent realistic cases of class imbalance with an underlying context structure. Each instance was therefore marked with both a class (majority/minority) and a context (for example Figure~\ref{fig:classesVconcepts}). An instance's context label was derived from either the generator's internal model for synthetic data streams or domain knowledge for the benchmark data streams.

\begin{figure}[htb]
\centering
\caption{Class markup versus context markup}
\begin{subfigure}{0.35\textwidth}
\caption{Class markup}
\includegraphics[width=\textwidth]{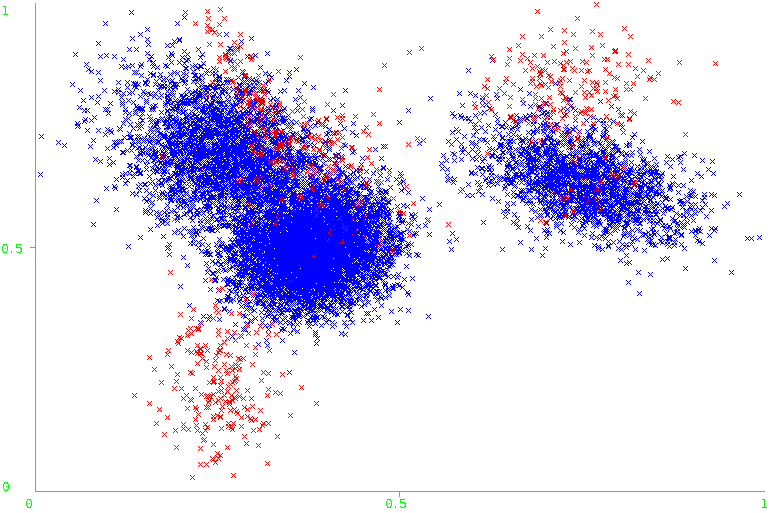}
\end{subfigure}
\begin{subfigure}{0.35\textwidth}
\caption{Context markup}
\includegraphics[width=\textwidth]{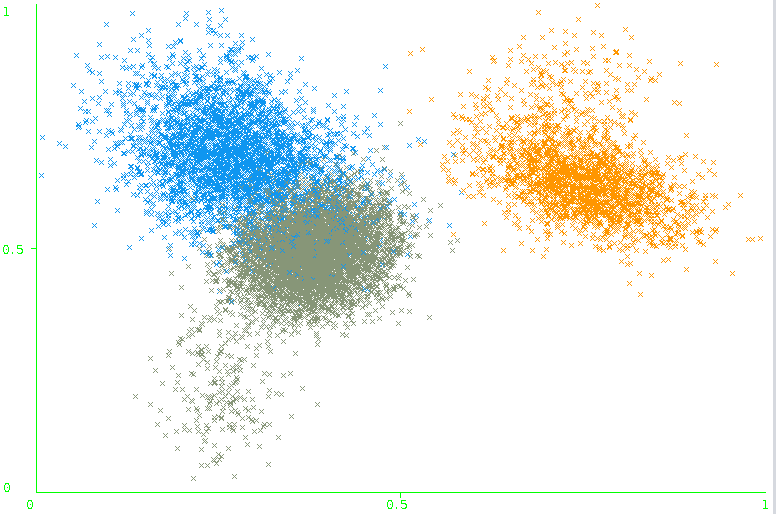}
\end{subfigure}
\label{fig:classesVconcepts}
\end{figure}

\subsection{Evaluation}
We use performance measures based on two classic paradigms for evaluating classifier performance: the confusion matrix and the ROC curve. Our performance measures are selected to convey useful information about a classifier's discriminating ability with a specific emphasis on imbalanced datasets.

\subsubsection{Confusion Matrix}
The confusion matrix is a widely used method of analyzing the performance of a classifier~\citep{Kubat1998} and many simple performance measures that can be derived from this confusion matrix (accuracy, recall, precision, etc.). These performance measures aren't appropriate for imbalanced data sets because they don't account for biases in the user's levels of interest and in the data set itself~\citep{Branco2015}. 


The g-mean \eqref{eq:gmean} is independent of class distribution and its non-linearity scales the cost of misclassification by the number of examples in that class that have been misclassified~\citep{Kubat1998}. \citet{Japkowicz2013} assessed that the g-mean is an appropriate threshold-based measure for assessing classifier performance in imbalanced data sets, noting that it gives gives equal weight to both classes.
\begin{equation} \label{eq:gmean}
g-mean = \sqrt{sensitivity \times specificity}
\end{equation}

\subsubsection{Receiver Operating Characteristic Curve}
The ROC curve has the advantage of illustrating an algorithm's ability to discriminate for all possible threshold values. Calculating the AUC allows ROC curves to be summarized as a single number: $1$ represents the performance of an ideal classifier and $0.5$ represents the performance of a random classifier~\citep{Japkowicz2013}.

\paragraph{Prequential Area Under the Curve}
Prequential AUC makes use of a sliding window to calculate ROC curves for stream learning algorithms~\citep{Brzezinski2017}. The prequential (``test-then-train'') aspect provides as large a test set as possible. The sliding window, where the ROC curve considers only the last number of instances allows classifier performance to be tracked accurately throughout the data stream.

As a result of experiments, \citet{Brzezinski2017} concluded that prequential AUC is ``statistically consistent and comparably discriminant with AUC calculated on stationary data'' and that it performed well on a range of synthetic and real world data streams exhibiting varying imbalances and concept drifts. We therefore use Prequential AUC as a performance measure alongside g-mean. Calculation of the Prequential AUC was done using the AUC package\footnote{available online: https://cran.r-project.org/package=AUC} in R~\citep{Ballings2013}.

\subsubsection{Cross-Validation}
Ten-fold cross validation was also used for all tasks, meaning that ten parallel frameworks were constructed on the data stream. Although all frameworks were tested on all test instances, each framework had one fold of the data stream withheld from its training set throughout the data stream.

\subsection{Statistical Significance Testing}
\citet{Benavoli2016} recommend using Bayesian analysis for statistical significance testing over the frequentist NHST, which incorrectly assumes both that the \emph{p}-value contains sufficient information about how probable the null hypothesis is and that practical significance follows on from statistical significance. Bayesian analysis more naturally answers the central question of interest: ``is method A better than method B?''

Benavoli et al. adopt \citet{Kruschke2011}'s concept that it is possible for differences in performance, $\mu$, to be close enough to the null value as to be equivalent for practical purposes. Mathematically this region of practical equivalence (rope) is an interval $[-r, r]$ centred on zero, as shown in Figure~\ref{fig:exampleROPE}. For classifiers, the interval $[-0.01,0.01]$ is likely appropriate, though this may vary by domain~\citep{Benavoli2016}.

\begin{figure}[htb] \label{fig:exampleROPE}
\centering
\caption{An illustration of the rope for the difference in accuracy between two classifiers~\citep[from][]{Benavoli2016}}
\includegraphics[width=0.5\textwidth]{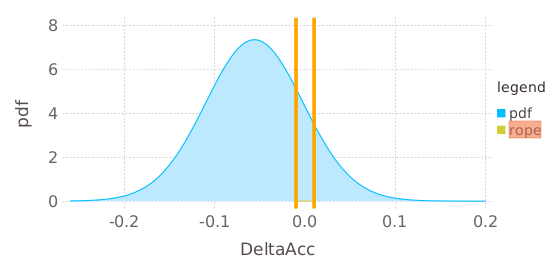}
\label{fig:exampleROPE}
\end{figure}

The Correlated Bayesian t-test (CBTT) analyses cross-validation results for a single data set, accounting for the correlation between data sets and the rope. Benavoli et al.'s implementation of the CBTT\footnote{available in both R and Python at https://github.com/BayesianTestsML/tutorial/} was used~\citep{Benavoli2016}.

\subsection{Summary}
The performance of each framework is evaluated using the g-mean and prequential AUC as measures of performance. The g-means for all data streams is calculated by selecting the average optimal threshold for all of the evaluation windows as determined by Informedness. A single threshold value was calculated in this way, which was deemed to be more realistic than calculating the (potentially different) optimal threshold value for each evaluation windows. The CBTT is used to infer the significance of differences in classifier performance or whether their performance is practically equivalent.

\section{Synthetic Data Streams}
We first test the frameworks on the synthetic data streams in order to to validate our belief that using knowledge of the majority class's contexts will improve classifier performance. We also seek to characterize the performance of each framework.
\begin{table}[htb]
\centering
\caption{Summary of the synthetic data streams}
\label{tab:syntheticDataStreams}
\begin{tabular}{p{2.5cm} c c p{3cm} p{3cm}}
\toprule
Name & Atts. & Context & Majority Class & Minority Class \\
\midrule
Random RBF & 4 & Explicit & Multiple Centroids & Multiple Centroids \\[0.1cm]
Random RBF with Noise  & 4 & Explicit & Multiple Centroids & Multiple Centroids plus uniform noise \\[0.1cm]
Mixture Model & 4 & Explicit & Multiple MVNDs & Multiple MVNDs \\
\bottomrule
\end{tabular}
\end{table}

Three families of synthetic data streams were generated using MOA based on either mixture models of multivariate normal distributions (MVNDs) or random radial basis functions (RBFs). These present a range of conditions for both the majority and minority classes; all three incorporate knowledge of contexts. For each data stream the contexts were explicit and assigned according to the data stream generator's internal model--minority class instances were assigned to the context of the nearest majority class instances.

\subsection{Results and Discussion}
The prequential AUC achieved by each framework throughout the respective data streams is shown in Figures~\ref{fig:mixtureModelResults}-\ref{fig:randomRBFNResults}. Graphs showing the g-mean achieved by each framework are available in Appendix~\ref{app:results}.

\begin{figure}[htbp]
\centering
\caption{Results for the Mixture Model data stream}
\label{fig:mixtureModelResults}
\includegraphics[width=0.32\textwidth]{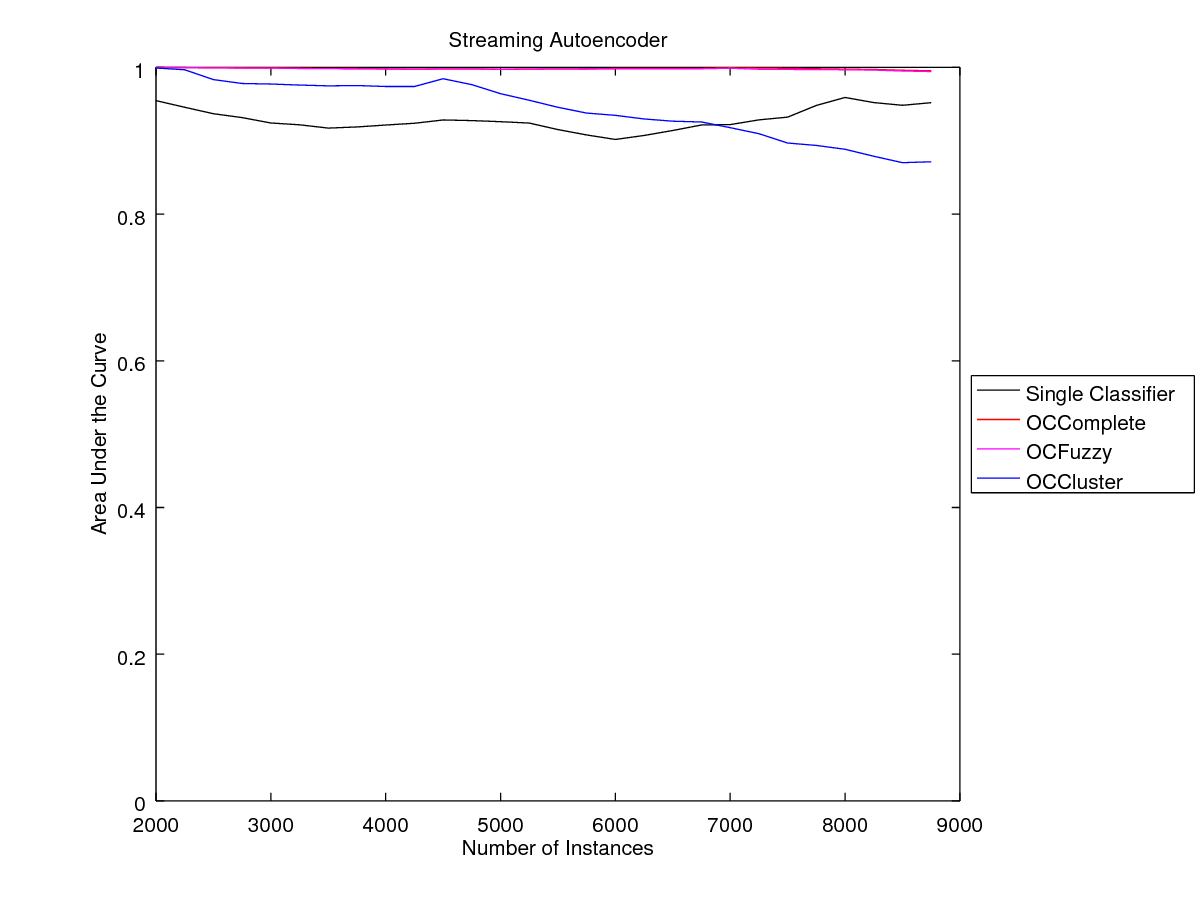}
\includegraphics[width=0.32\textwidth]{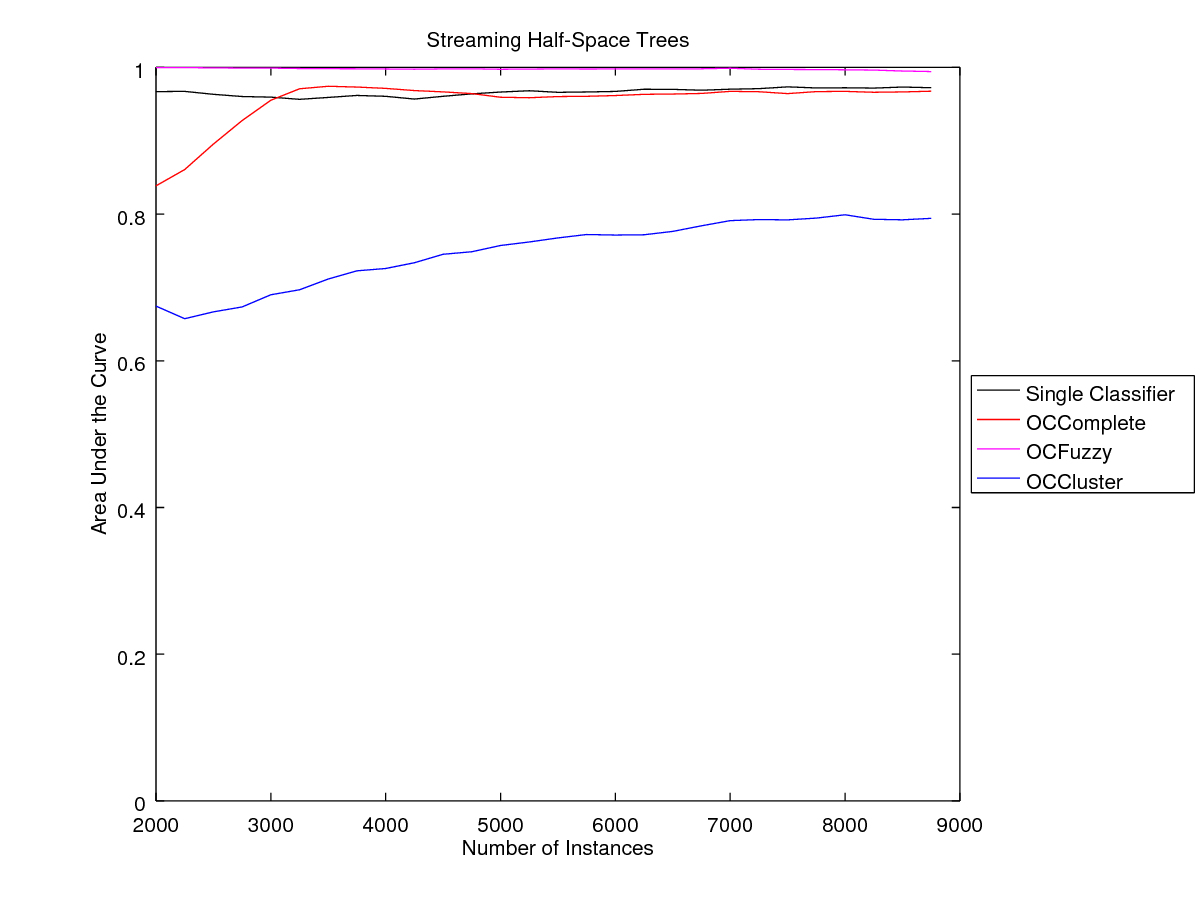}
\includegraphics[width=0.32\textwidth]{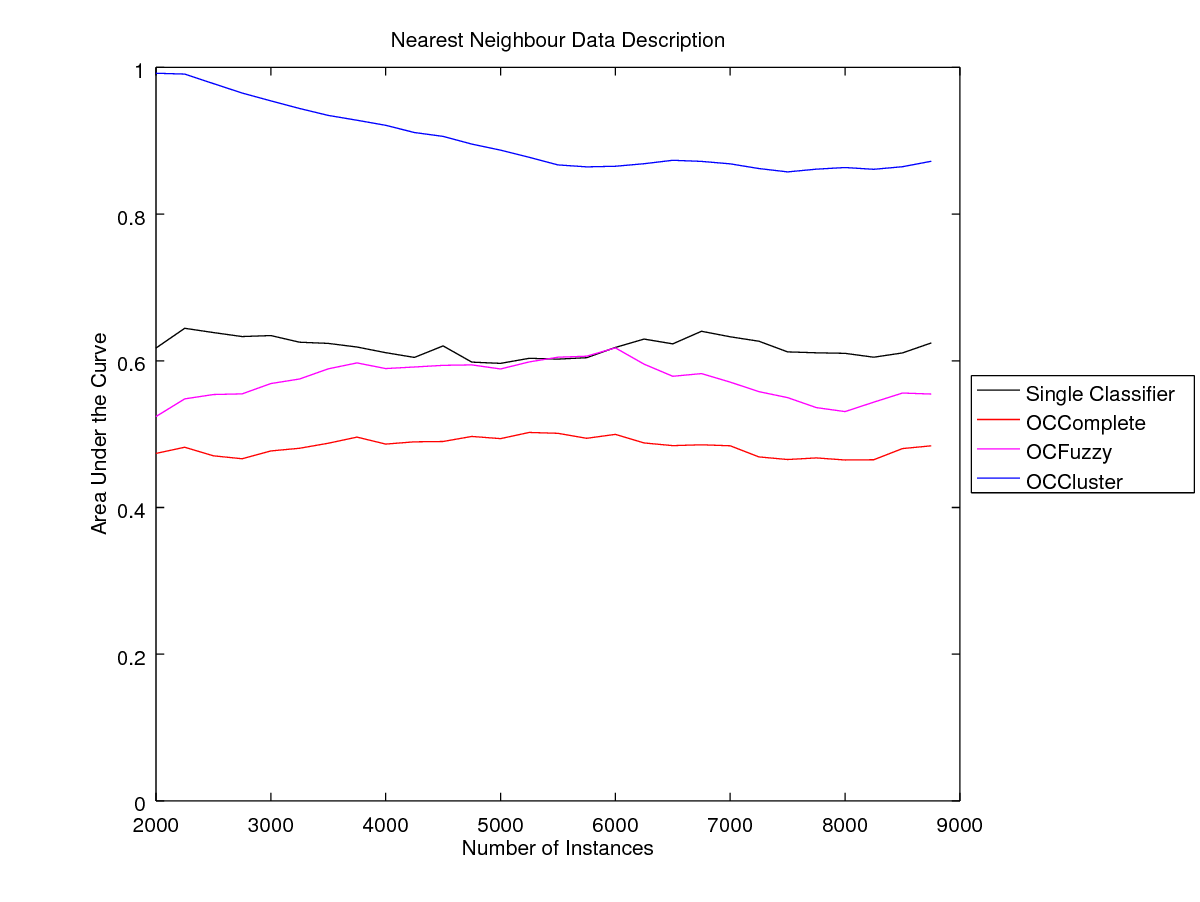}
\end{figure}

\begin{figure}[htbp]
\centering
\caption{Results for the Random RBF data stream}
\label{fig:randomRBF0Results}
\includegraphics[width=0.32\textwidth]{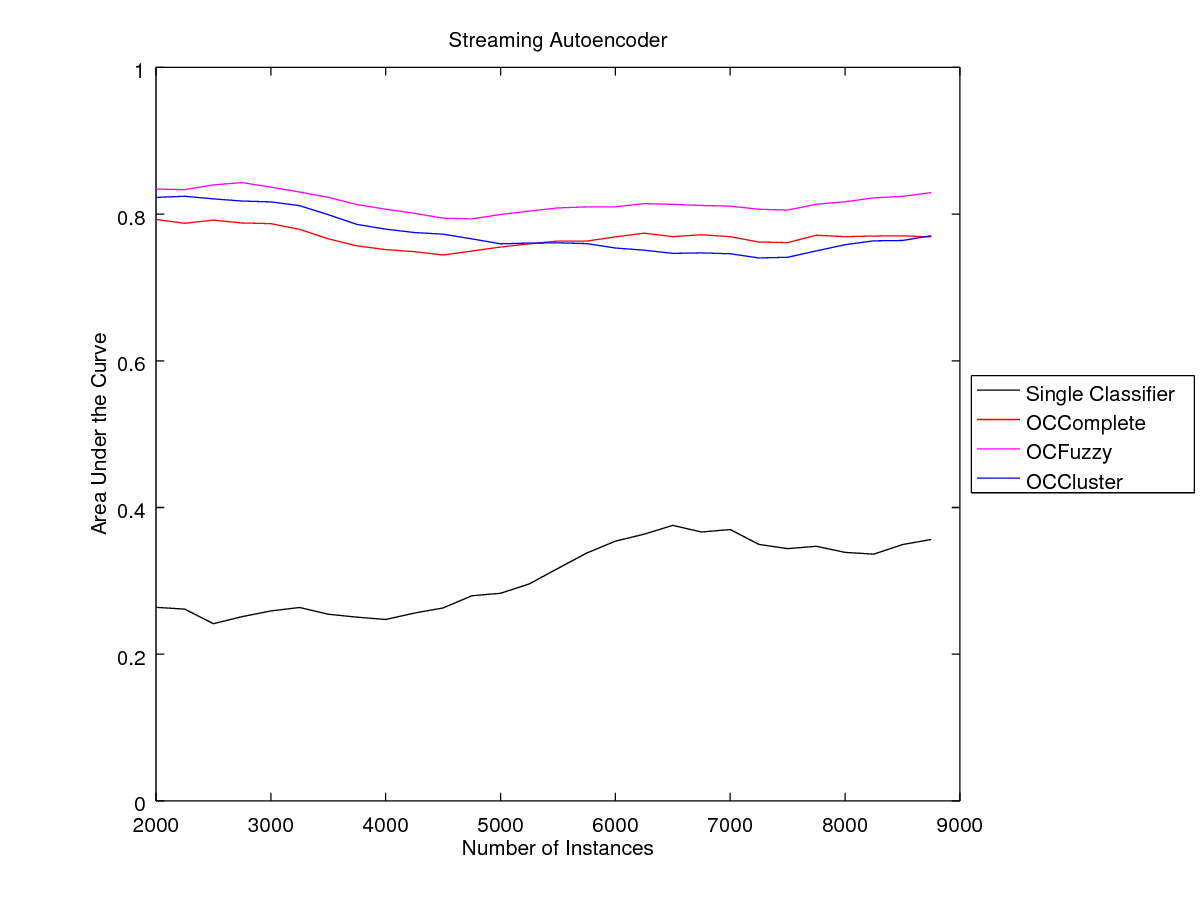}
\includegraphics[width=0.32\textwidth]{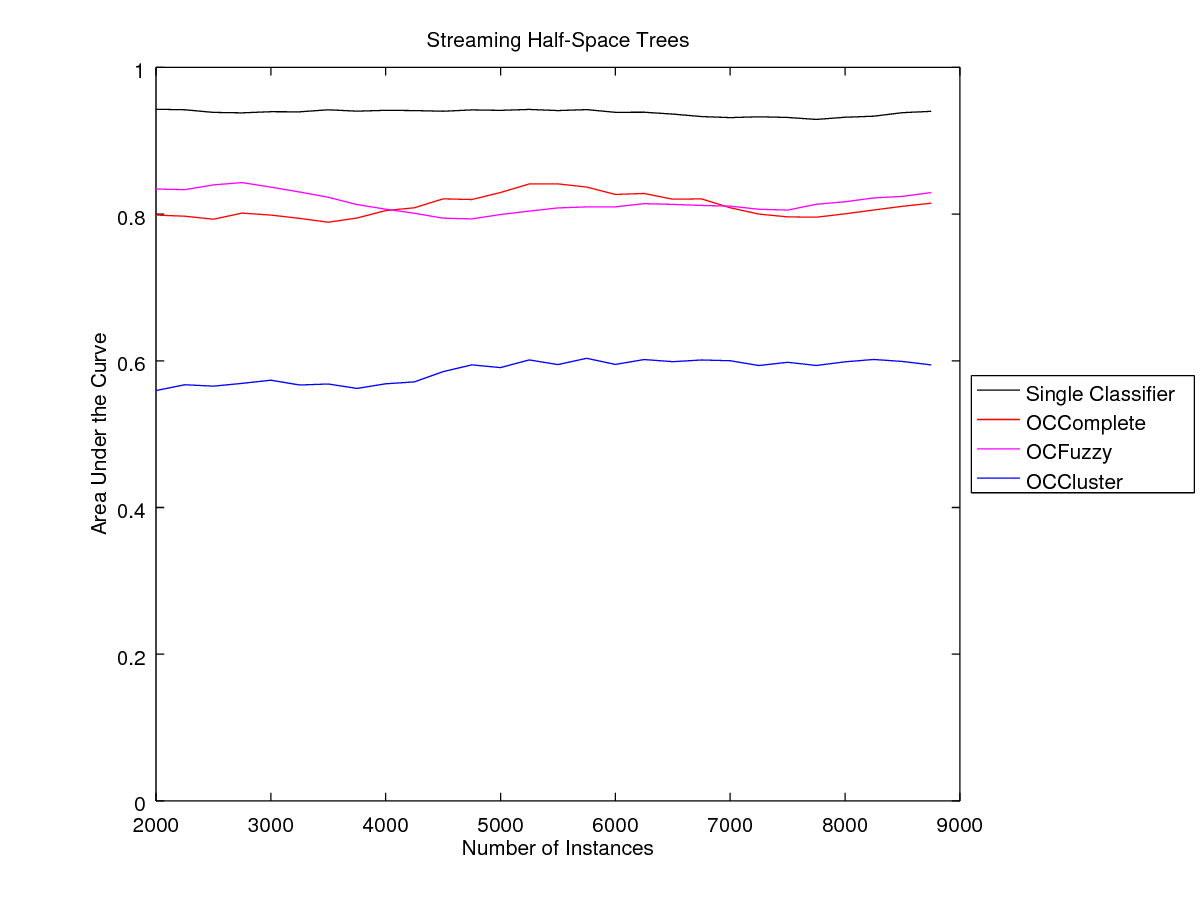}
\includegraphics[width=0.32\textwidth]{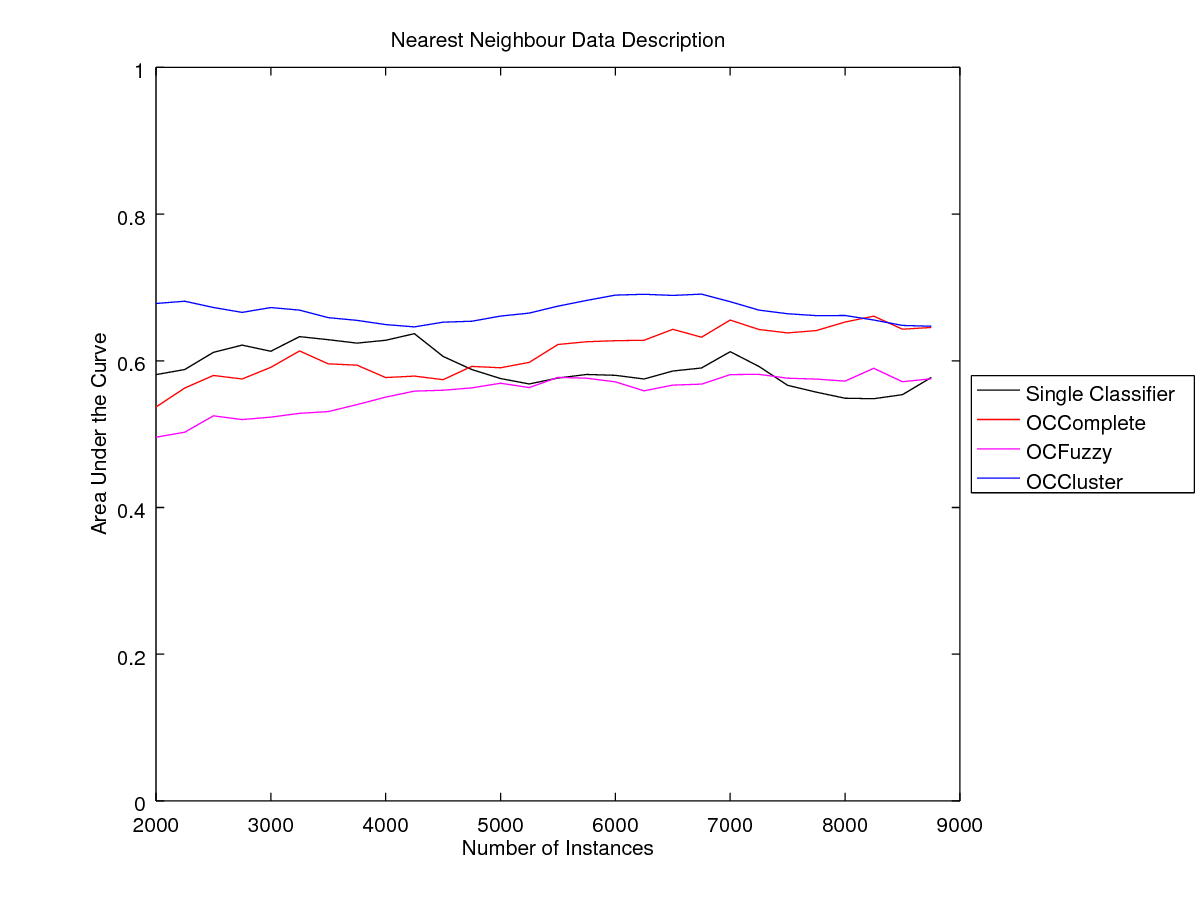}
\end{figure}

\begin{figure}[htbp]
\centering
\caption{Results for the Random RBF data stream with Noise}
\label{fig:randomRBFNResults}
\includegraphics[width=0.32\textwidth]{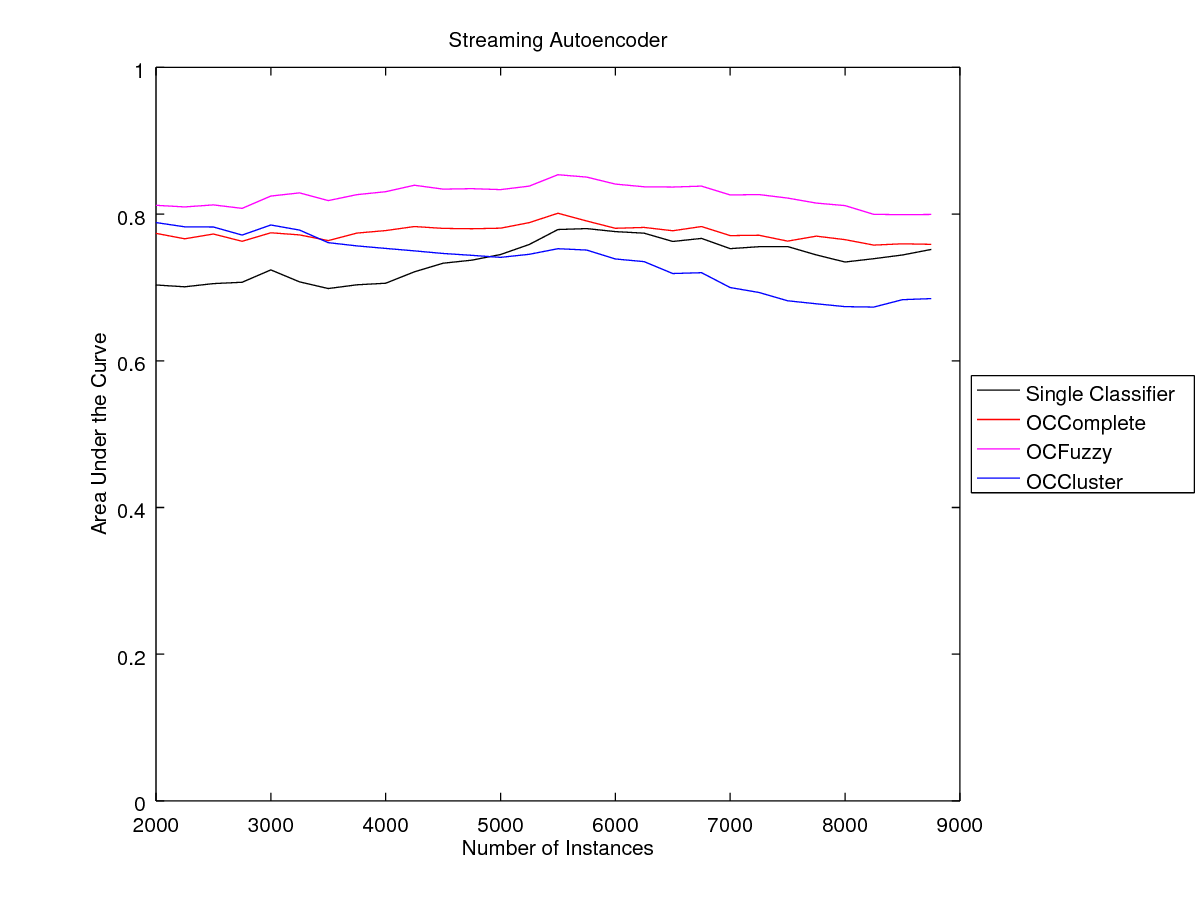}
\includegraphics[width=0.32\textwidth]{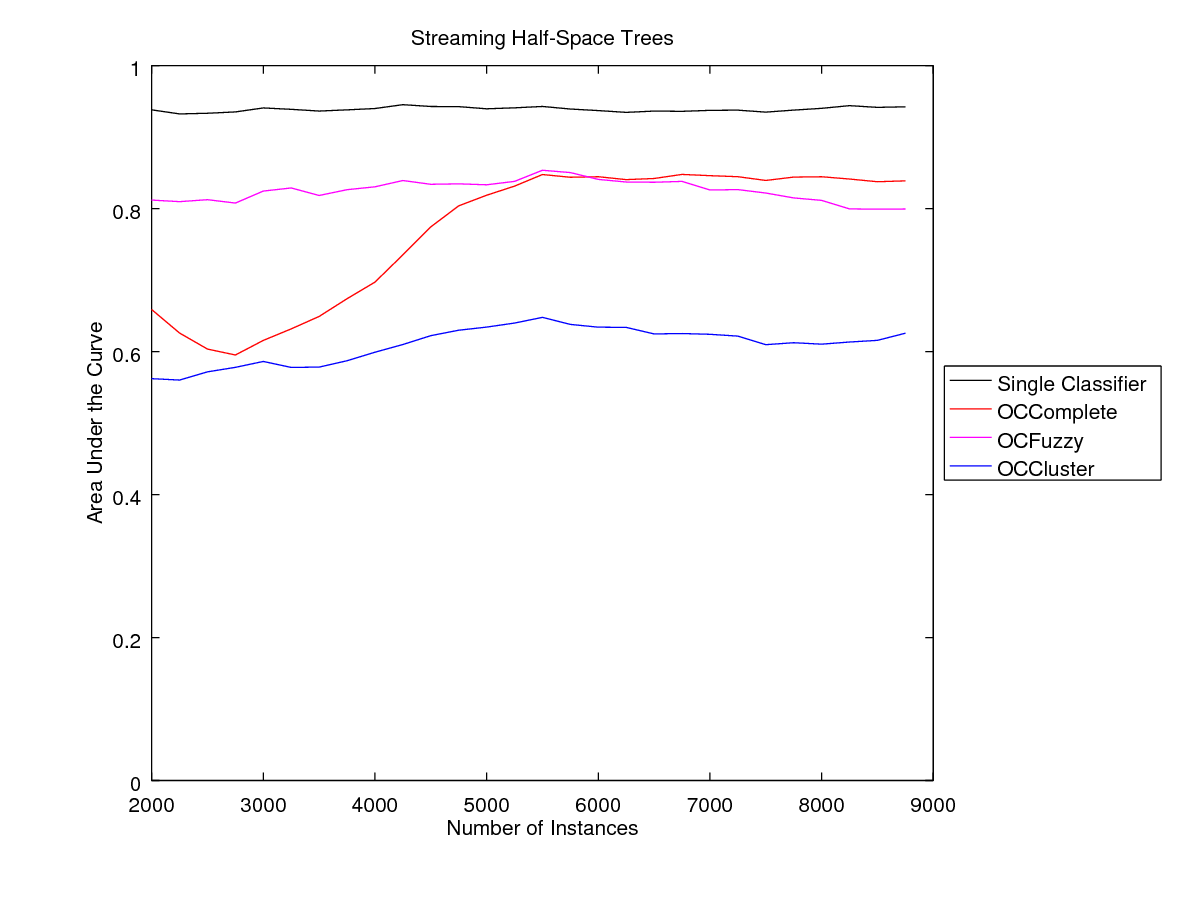}
\includegraphics[width=0.32\textwidth]{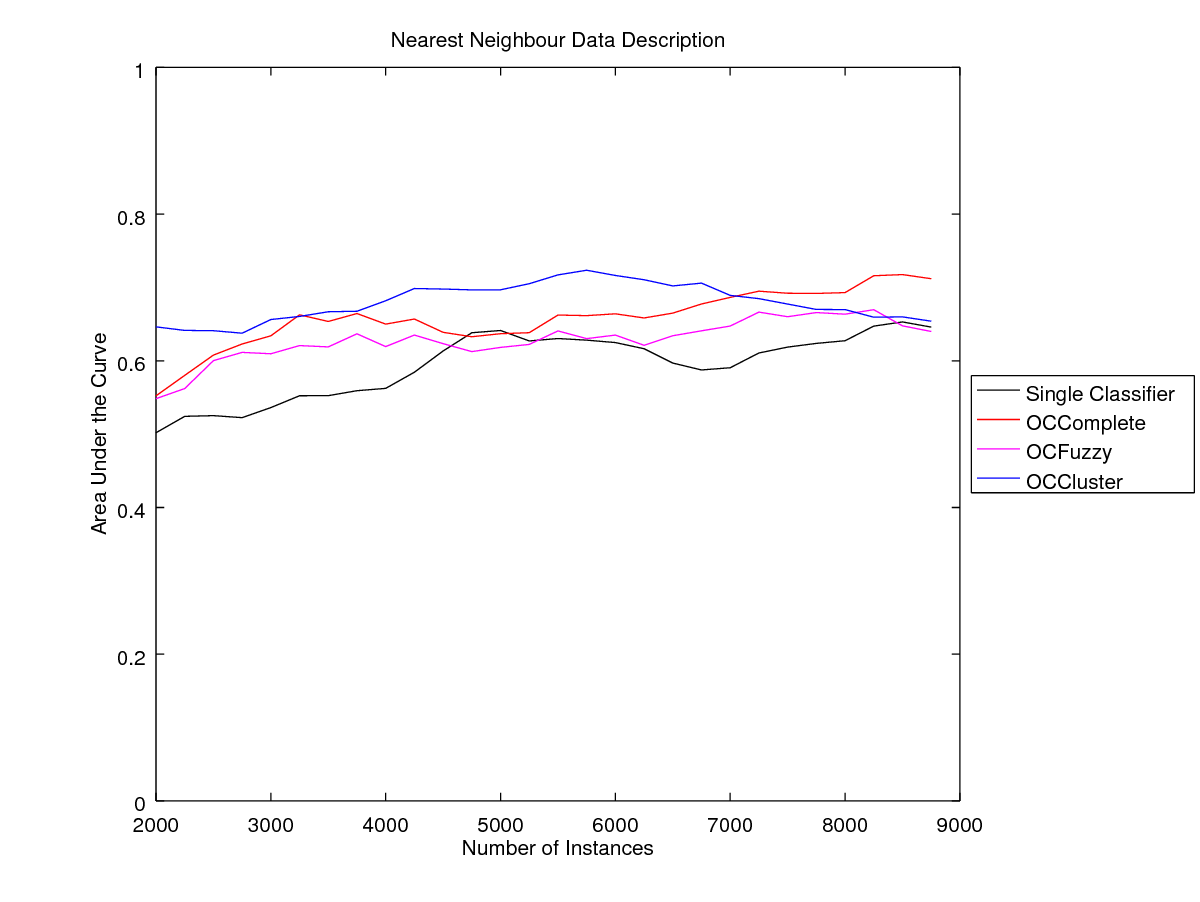}
\end{figure}

Overall, we observed that all three synthetic data streams show consistent results with each of the base classifiers. The performance of both the SA and the NN-d can be improved upon by using contextual knowledge; the case of the Streaming HS-Trees is one where incorporating contexts repeatedly degrades classifier performance.

For the SA, the OCComplete and OCFuzzy frameworks, which make explicit use of contexts, both dominate the single classifier throughout all three data streams. Their AUC scores are relatively consistent throughout the data streams, although this is certainly aided by the streams' synthetic nature. Interestingly, the OCFuzzy framework generally outperforms the OCComplete framework. This suggests that, once contexts are defined, it is better to adapt to the data stream's characteristics than to accept the formal definition of the contexts as the best way of identifying the majority class. The OCCluster framework, which assumes implicit context, shows promise throughout the three data streams. Its performance is susceptible to a decline throughout the data stream, however, which leads it to perform worse than the single classifier after a certain point. A possible cause for this is that the clusters representing the contexts are used to screen training instances for the SA. If any of the clusters begin to incorporate the minority class then this would directly impact the SA's ability to discriminate them.

Next, the NN-d method also benefits from contextual knowledge. Although the single NN-d classifier reliably produced poor AUC scores, these were reliably increased by using the OCCluster framework. Interestingly, the OCComplete and OCFuzzy frameworks only occasionally resulted in higher AUC scores while the OCCluster framework produced AUC scores that dominated the single classifier for all three data streams. It is worth noting that the decision boundary produced by the NN-d method depends only on two points, both of which are very close to the test instance. This suggests that the ClusTree algorithm used by OCCluster is able to find ``local groupings'' of points that are more informative for the NN-d than the formally defined concepts.

The Streaming HS-Trees result in the most disappointing performance. Only the Mixture Model data stream showed any improved performance as a result of using contextual knowledge. Even this is muted by the fact that the single classifier performs very well and the increase in performance from the OCFuzzy framework is of a much smaller magnitude than the decreases in performance seen in the other two data streams. Interestingly, however, when performance using the optimal threshold is considered, the single classifier is generally beaten by OCComplete and occasionally by OCFuzzy as well. For each of the data streams the frameworks see increasing sensitivity (recognition of the minority class) and decreasing specificity (recognition of the majority class) while the single classifier sees both measures stay stable.

Two observations regarding the Streaming HS-Trees are that they embody an ensemble approach and that each HS-Tree's method of recognition involves partitioning the feature space in an ``instance-independent'' manner. This suggests that the effect of training separate HS-Tree ensembles on each context actually has the effect of reducing the information available to each and results in poorer discriminating power.

\section{Benchmark Data Streams}
Following our experiments with synthetic data streams, we test the frameworks on benchmark data streams to determine whether our observations can be transferred from laboratory conditions to real-world problems.

Four imbalanced data streams were constructed from benchmark data sets found in the literature. These benchmark data streams present, in some ways, more challenging tasks than the synthetic data streams. The dimensionality of each is higher and there is a high degree of overlap between the majority and minority classes.

Where possible, contexts were determined from domain knowledge. For the Wine Quality data stream,\footnote{available
from the UCI Machine https://archive.ics.uci.edu/ml/datasets/Wine+Quality} this was whether the \emph{vinho verde} being considered was a red wine or a white wine~\citep{Cortez2009}. For the Covertype data streams,\footnote{retrieved from the MOA website: https://moa.cms.waikato.ac.nz/datasets/} this was the geographic area of the terrain -- instances belonged to one of the Rawah, Comanche Peak, Neota or Cache la Poudre wilderness areas in the Roosevelt National Forest in northern Colorado~\citep{Blackard1999}. For the High Time Resolution Universe Survey (South) data stream (HTRU2),\footnote{available
from the UCI Machine https://archive.ics.uci.edu/ml/datasets/HTRU2 or via DOI 10.6084/m9.figshare.3080389.v1} there are no explicit contexts and they must instead be inferred~\citep{Lyon2016}. These data streams are summarized in Table~\ref{tab:benchmarkDataStreams}.

\begin{table}[htb]
\centering
\caption{Summary of the benchmark data streams}
\label{tab:benchmarkDataStreams}
\begin{tabular}{p{3.5cm} c p{4cm} p{3.5cm}}
\toprule
Name & Atts. & Context & Minority Class \\
\midrule
Wine Quality & 11 & Explicit & Highly overlapped \\
 & & (Red wine, white wine) & \\[0.1cm]
CT 2/5 vs 3/4/6 & 10 & Explicit & Partially overlapped \\
 & & (Wilderness area) & \\[0.1cm]
CT 1/2/5 vs 3/4/6/7 & 10 & Explicit & Partially overlapped \\
 & & (Wilderness area) & \\[0.1cm]
HTRU2 & 8 & Implicit & Little overlap\\
 & & (Unknown) & \\
\bottomrule
\end{tabular}
\end{table}

\subsection{Results and Discussion}
Prequential AUC results for the Wine Quality, Covertype 2/5 vs 3/4/6 and HTRU2 data streams are shown in Figures~\ref{fig:winequalityResults}-\ref{fig:htruResults}. G-mean results for each of these data streams as well as full results for the Covertype 1/2/5 vs 3/4/6/7 data stream are available in Appendix~\ref{app:results}.
\begin{figure}[htbp]
\centering
\caption{Results for the Wine Quality data stream}
\label{fig:winequalityResults}
\includegraphics[width=0.32\textwidth]{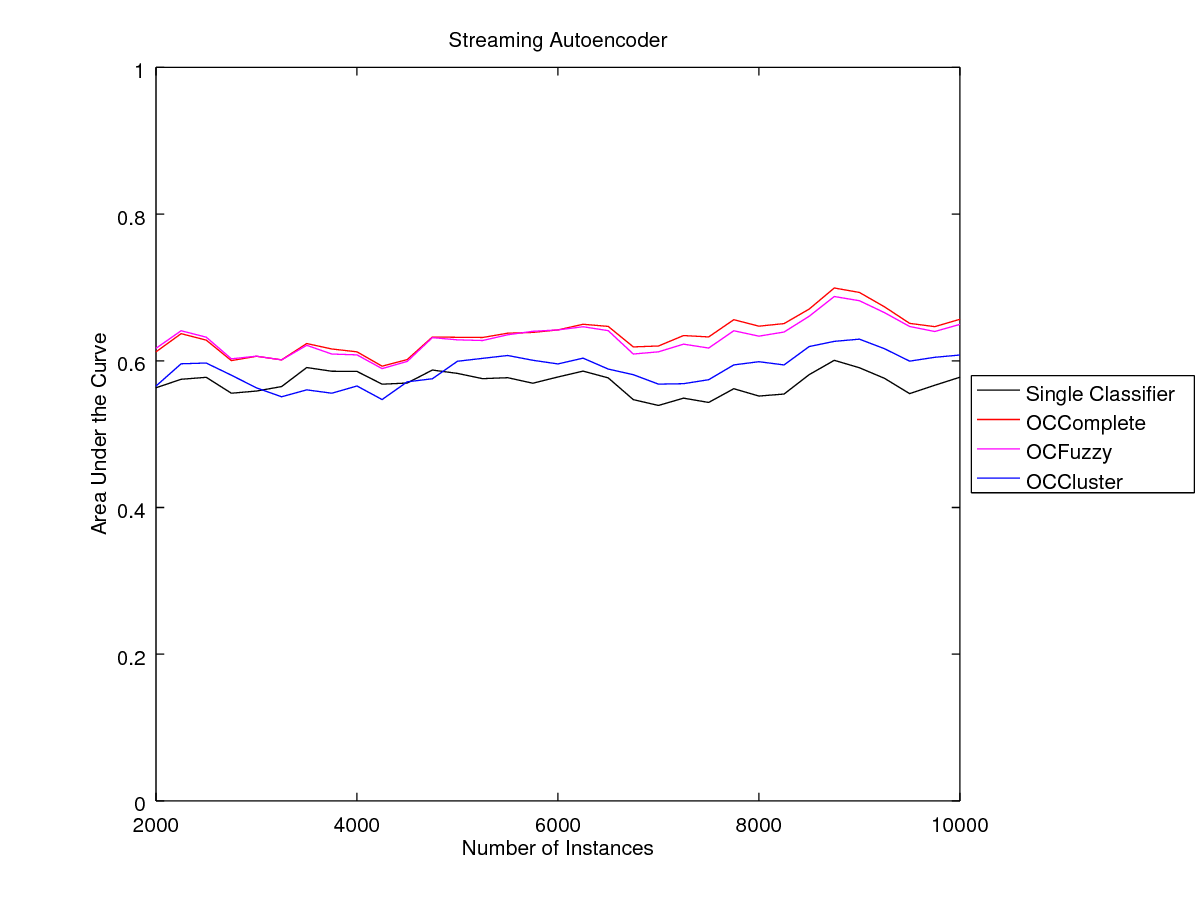}
\includegraphics[width=0.32\textwidth]{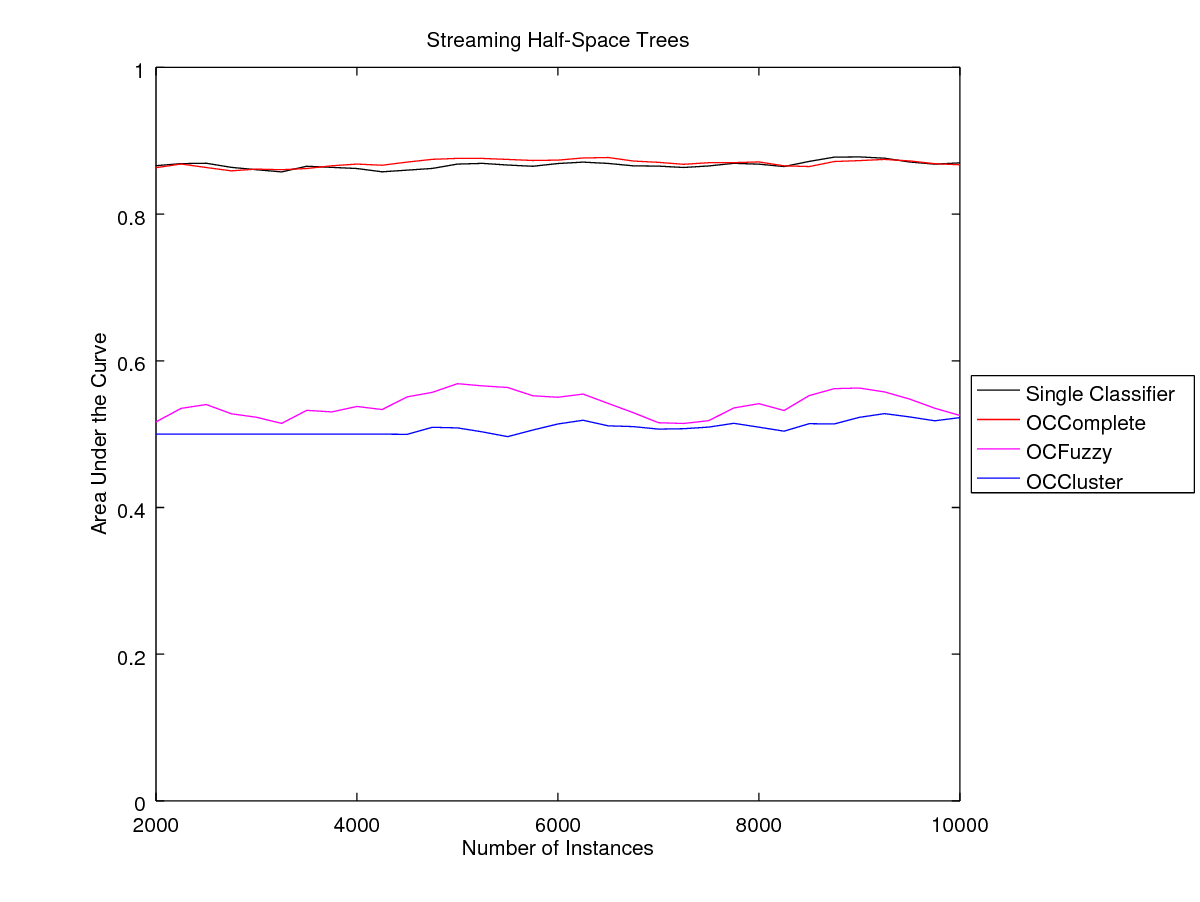}
\includegraphics[width=0.32\textwidth]{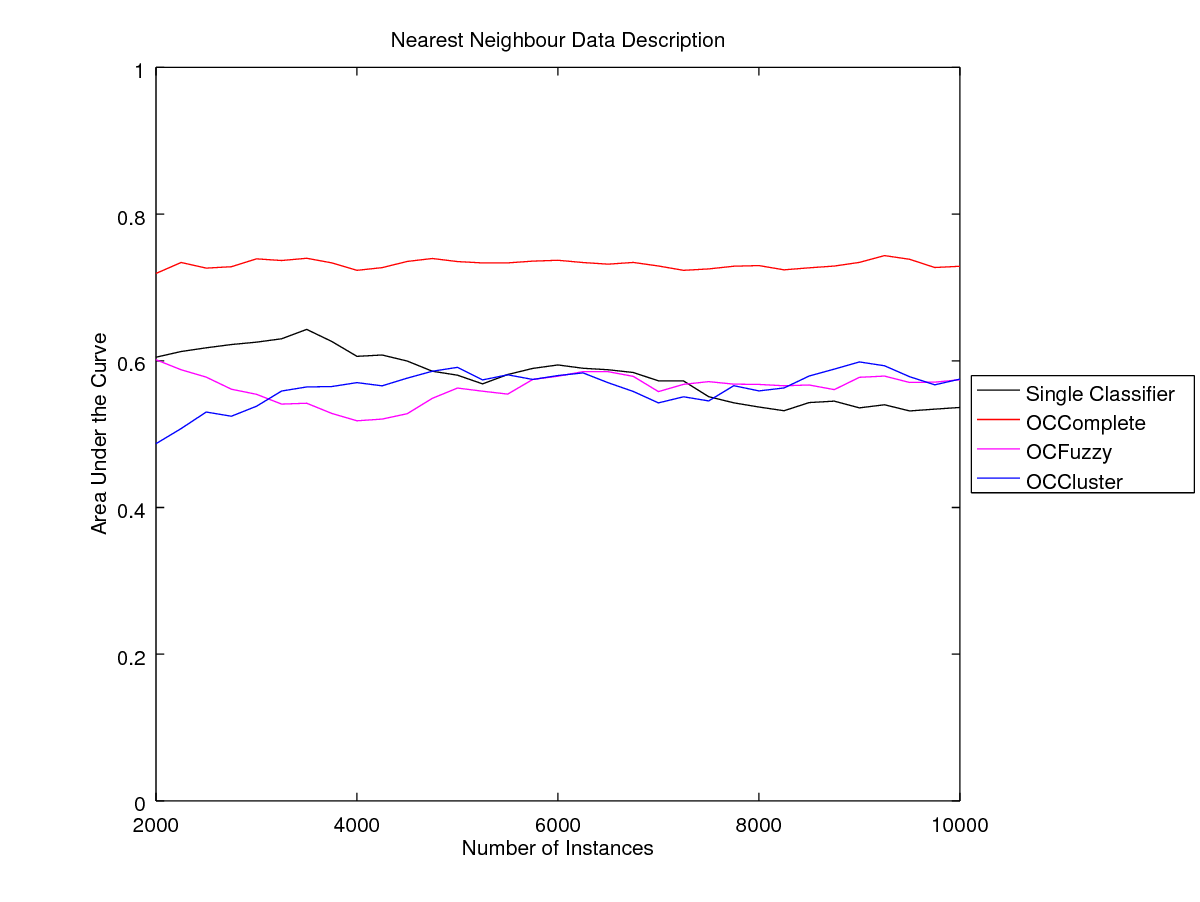}
\end{figure}
\begin{figure}[htbp]
\centering
\caption{Results for the Covertype 2/5 vs 3/4/6 data stream}
\label{fig:covertype25Results}
\includegraphics[width=0.32\textwidth]{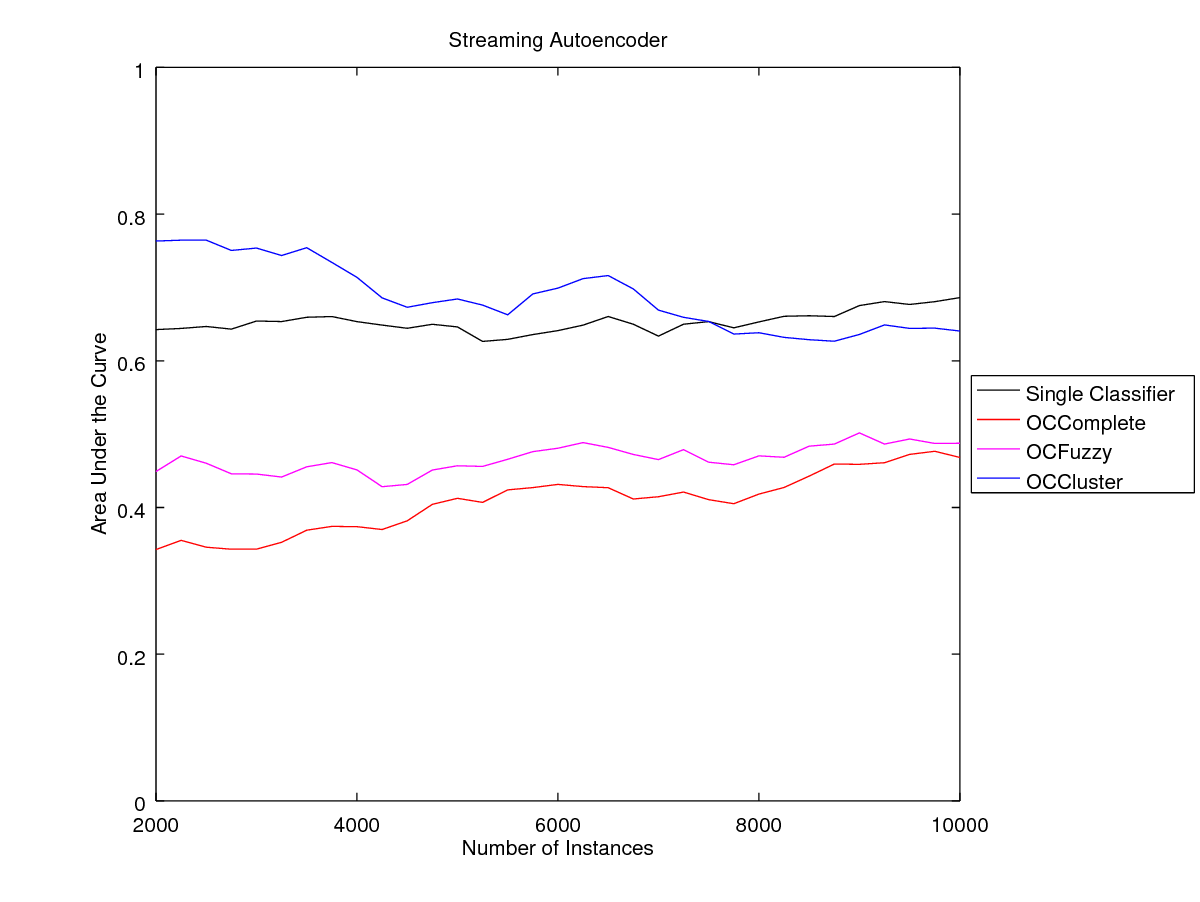}
\includegraphics[width=0.32\textwidth]{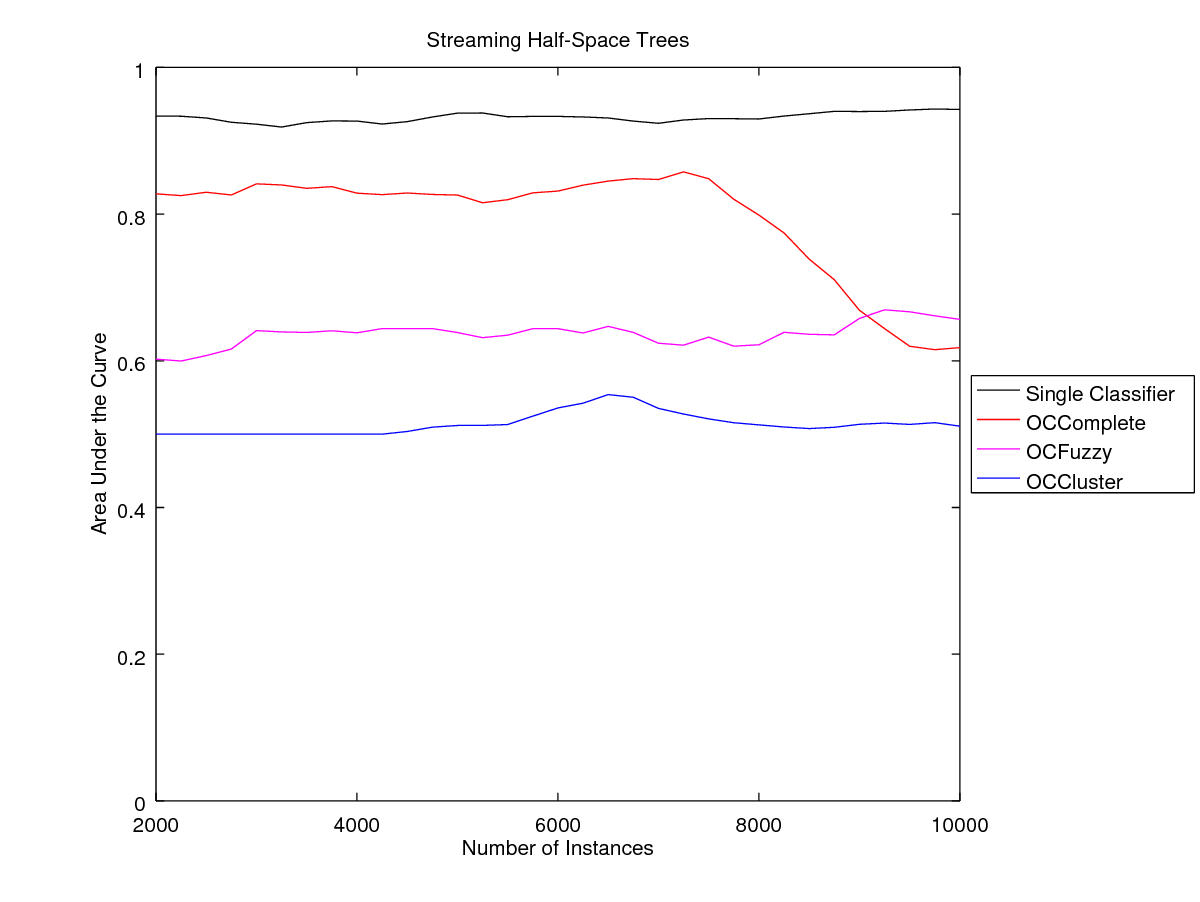}
\includegraphics[width=0.32\textwidth]{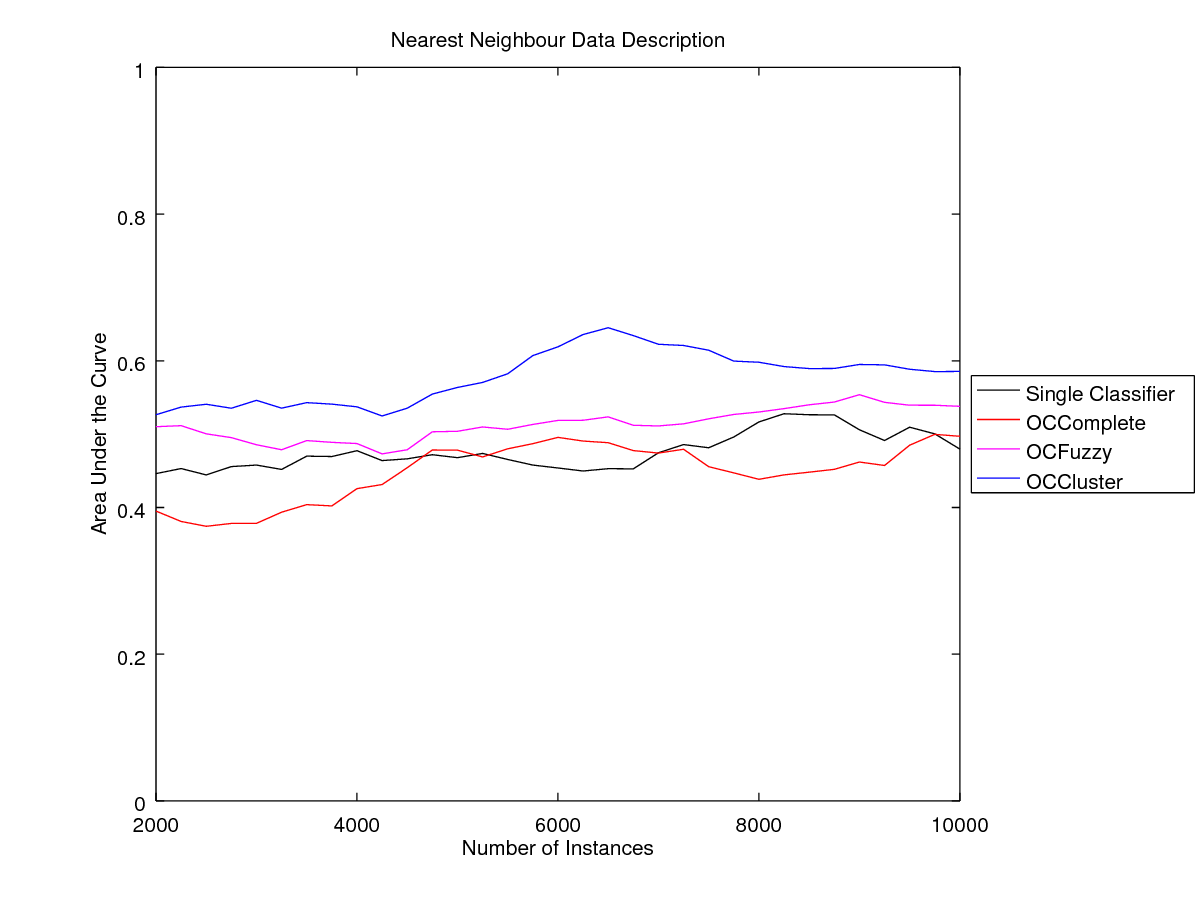}
\end{figure}
\begin{figure}[htbp]
\centering
\caption{Results for the High Time Resolution Universe survey data stream}
\label{fig:htruResults}
\includegraphics[width=0.32\textwidth]{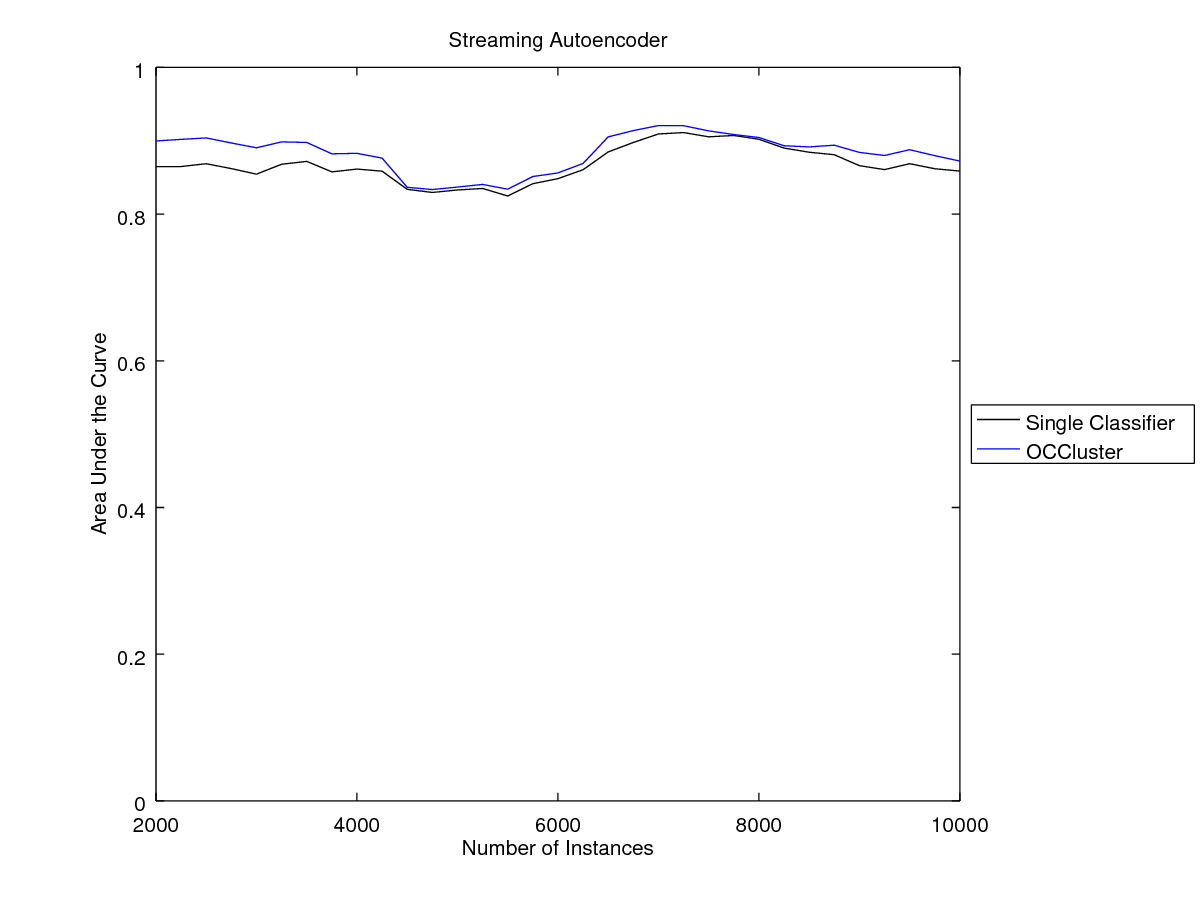}
\includegraphics[width=0.32\textwidth]{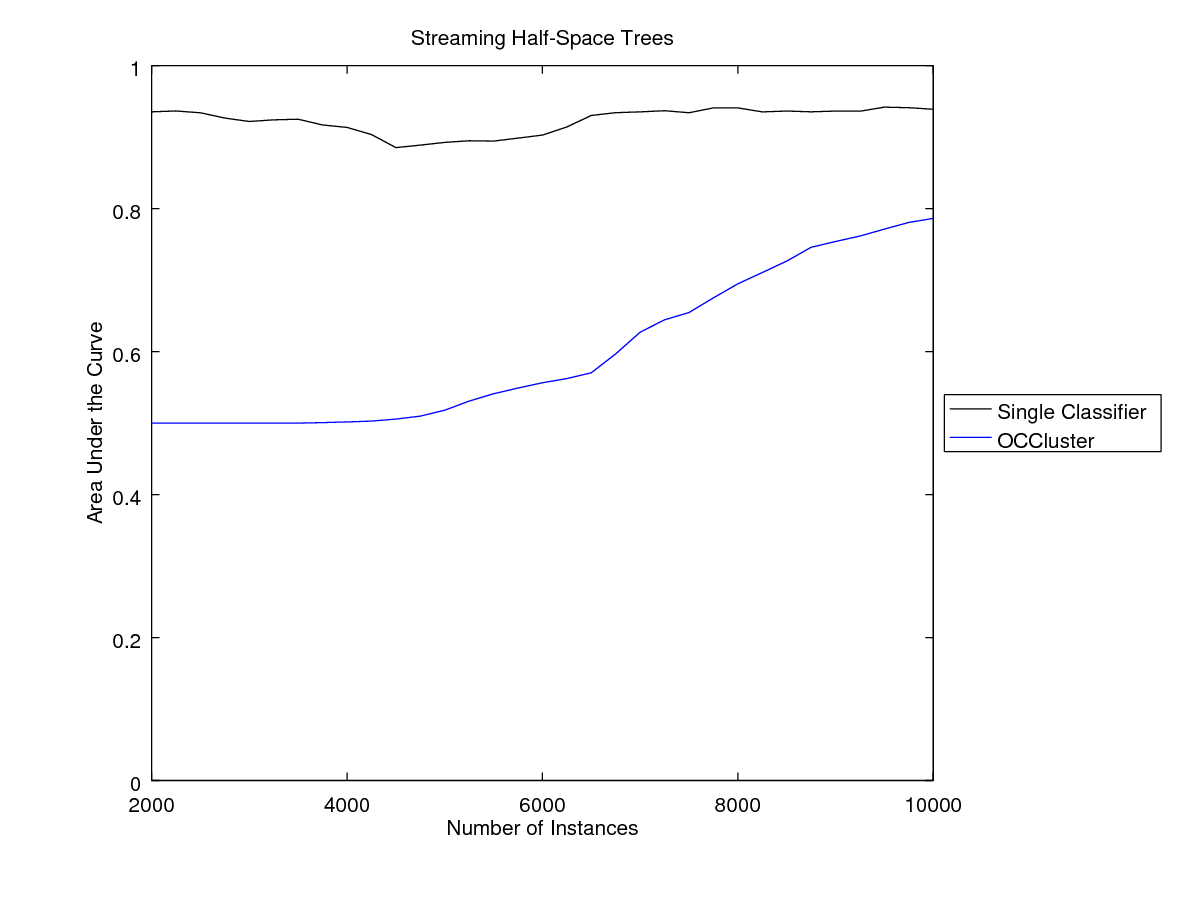}
\includegraphics[width=0.32\textwidth]{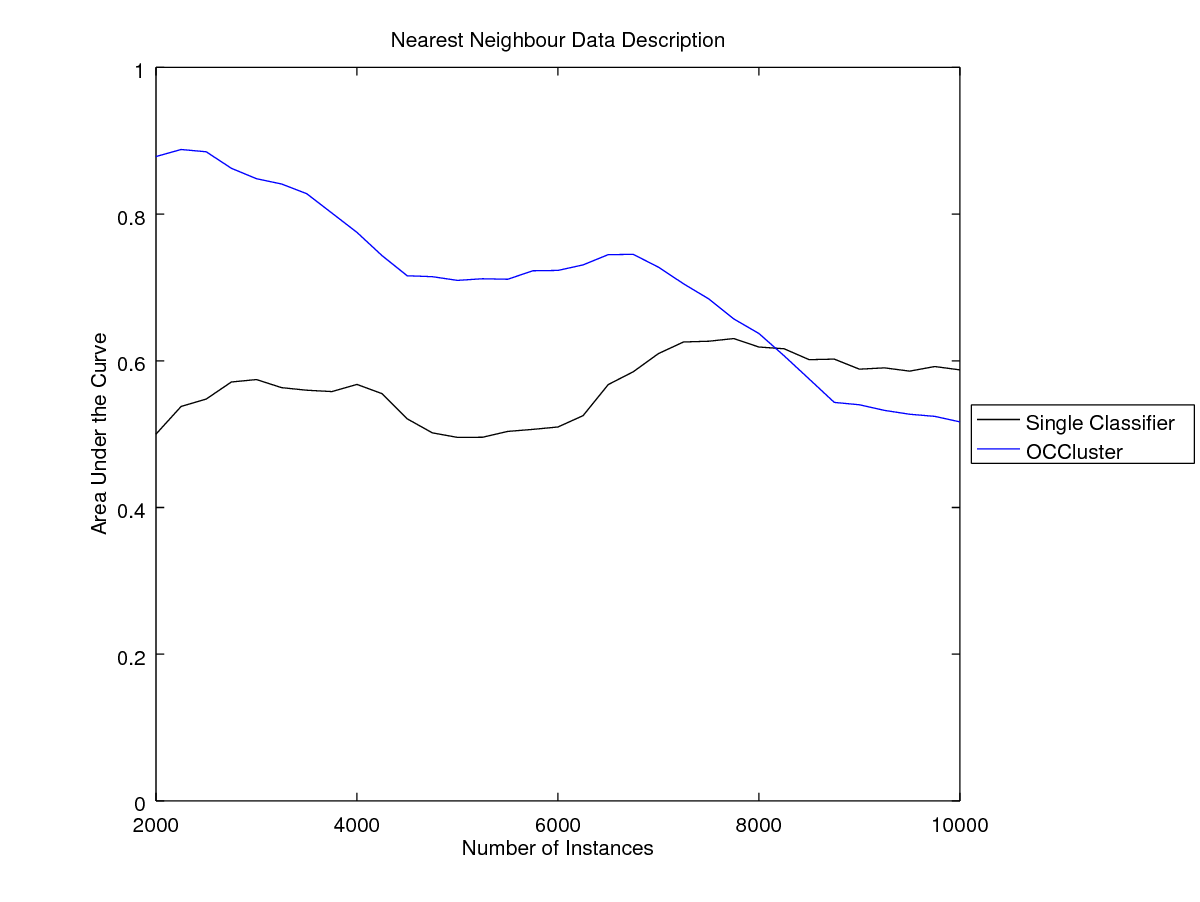}
\end{figure}

The Wine Quality data stream is the only one of the benchmark data stream whose explicitly defined contexts proved useful. Similar results were observed as compared to the synthetic data streams: using context helped both the SA and NN-d classifiers while the single classifier approach to the Streaming HS-Trees performed very well. The most notable aspect of the Wine Quality data stream's results is the excellent performance of the OCComplete framework, which was the best or equal best approach for all three base classifies. Considering this, it seems that the Wine Quality data stream's contexts are hard to discover but very helpful in guiding recognition.

For the Covertype data streams, although OCComplete and OCFuzzy (the two frameworks that use the explicit contexts) do not perform well, OCCluster is able to find useful ways of breaking the majority class down for the SA and NN-d classifiers, resulting in superior performance. The Streaming HS-Trees classifiers, however, saw higher AUC scores as a single classifier than as part of any framework.

For the High Time Resolution Universe survey data stream, no explicit contexts were available. Nonetheless, both the SA and NN-d classifiers were able to improve their AUC scores with the use of the OCCluster framework. This supports the idea that contexts recovered via unsupervised learning can be profitably used by a one-class classifier without needing explicit definition.

For the Streaming HS-Trees, the single classifier approach was again the superior approach. This matches the results obtained on the synthetic data streams and again suggests that the HS-Tree's design is not conducive to additional division by context.

\section{Discussion}
Beginning with our limitations, the most notable was evident in the performance of the Streaming HS-Trees. This classifier only rarely showed an improvement for the frameworks over the single classifier; more distressingly the frameworks generally performed significantly worse. This was especially true for the OCCluster framework, which was never able to produce useful representations for the HS-Trees to learn from. Also notable is that the Streaming HS-Trees as a single classifier generally outperformed the other two classifiers for both synthetic and benchmark data streams, regardless of context usage. As noted in the earlier discussions, there are a few possible explanations for this.

First, Streaming HS-Trees use a density-based method for OCC. As reviewed in Section~\ref{sec:oneClassClassifiers}, it is reasonable to believe that density-based methods are more dependent on a global representation of the majority class than either reconstruction-based or boundary-based approaches. Density-based approaches may benefit more from a global picture of the data stream and suffer more from seeing only a sub-space of the data stream's feature space. Second, Streaming HS-Trees is an ensemble method made up of individual HS-Trees. As reviewed in Section~\ref{sec:complexDomains}, ensembles inherently incorporate the idea of sub-dividing a problem and it possible that there is a limit to which a problem can be usefully sub-divided, limiting the ability of the three frameworks to produce better performance. Finally, it may be that specific contexts are responsible for the difficulties encountered by the SA and NN-d classifiers. If this is the case, then knowledge of the contexts is proving its worth by highlighting which regions of the feature space and which contexts are hardest to properly represent.

Another identified limitation is that the OCCluster framework is susceptible to decreasing AUC scores as the data stream progresses. This obviously limits the usefulness of the technique for data streams, where steady-state behaviour over the long term is important. As discussed earlier in this chapter, the deciding factor for this is likely whether or not the clustering algorithm's clusters are able to track the majority class, whether they track the majority class tightly, and whether they begin to model concentrations of minority class instances as well.

That said, there are positive results as well. The novel Cluster Distance Function and the observation regarding required window sizes are two theoretical contributions that assisted with the development of these frameworks and that can be used by other researchers for their own future work.

In terms of experimental results, both the SA and NN-d classifiers regularly saw improved performance when contextual knowledge was incorporated. In the case of the SA, the explicit contexts used by the OCComplete and OCFuzzy frameworks led to the best performance, though the OCCluster framework generally outperformed the single classifier as well. For the NN-d method we saw a very different trend: the OCCluster framework generally outperformed the other three approaches and did dominate the single classifier for most of the data streams.

Also positive was the demonstration that classifier performance could be improved using contextual knowledge, even if explicit contexts are defined for the data stream. This opens the door to applying these techniques to cases where data exploration suggests that useful contexts exists and does not restrict application to data streams for which domain knowledge can explicitly identify context.

\section{Conclusion}
In this paper we described how using contextual knowledge in OCC for data streams has not been sufficiently investigated. Although the idea has been demonstrated for static data sets, its application to streaming one-class classifiers is not a trivial extension. We addressed these challenges by consulting the literature, by proposing new theoretical ideas, and by developing new experimental evidence. What our experimental results show is that contextual knowledge can be used by streaming one-class classifiers to achieve superior performance, though they also highlight a variety of challenges.

In reviewing these challenges, we note clear areas for future research. These include the further development of the Cluster Distance Function to better capture non-overlapped clusters, perhaps by using ideas from the Wasserstein metric (the earth mover's distance), further investigation into the kinds of streaming one-class classifiers that are able to benefit from contextual knowledge, and a method to avoid the degradation in classifier performance that occurs in some data streams for the OCCluster framework. With the caveats acknowledged, however, it is still the case that contextual knowledge can be employed to improve one-class classifier performance in data streams.

\section*{Acknowledgements}
The authors acknowledge that Queen's University is situated on traditional Anishnaabe and Haudenosaunee Territory and that the University of Ottawa is situated on traditional unceded Algonquin territory. This research was supported by the Natural Sciences and Engineering Research Council of Canada and the Province of Ontario.

\newpage
\appendix
\section{Technical Results} \label{app:results}
\subsection{Proof that the Cluster Distance Function is a metric} \label{sec:proofCDF}
This is the proof that the Cluster Distance Function proposed in Section \ref{sec:clusterDistanceFunction} is a metric.
\begin{proof}
To prove that $CD(\cdot,\cdot)$ is a metric, we must show that it satisfies the four conditions of a metric.

\begin{enumerate}
\item $|IP_{\mathcal{C}}(x) - IP_{\mathcal{C}'}(x)| \geq 0 \text{ }\forall \text{ x } \in \mathbb{F}$ and the integral of a non-negative function is itself non-negative. $\therefore CD(\mathcal{C},\mathcal{C}') \geq 0$ and the condition of \textbf{non-negativity} is satisfied.

\item $|IP_{\mathcal{C}}(x) - IP_{\mathcal{C}}(x)| = 0 \text{ }\forall \text{ x } \in \mathbb{F}$ and the integral of $0$ is itself $0$ $\therefore \mathcal{C} = \mathcal{C}' \implies CD(\mathcal{C},\mathcal{C}') = 0$\\

Because $|IP_{\mathcal{C}}(x) - IP_{\mathcal{C}'}(x)| \geq 0 \text{ }\forall \text{ x } \in \mathbb{F}$,\\ $CD(\mathcal{C},\mathcal{C}') = 0 \implies |IP_{\mathcal{C}}(x) - IP_{\mathcal{C}'}(x)| = 0 \text{ }\forall \text{ x } \in \mathbb{F} \implies IP_{\mathcal{C}}(x) = IP_{\mathcal{C}'}(x) \text{ }\forall \text{ x } \in \mathbb{F}$, if the IP for two clusters is the same for all $x \in \mathbb{F}$, that means that they are the same (hyper-) volumes. $\therefore CD(\mathcal{C},\mathcal{C}') = 0 \implies \mathcal{C} = \mathcal{C}'$\\

$\mathcal{C} = \mathcal{C}' \implies CD(\mathcal{C},\mathcal{C}') = 0 \text{ and } CD(\mathcal{C},\mathcal{C}') = 0 \implies \mathcal{C} = \mathcal{C}' \therefore CD(\mathcal{C},\mathcal{C}') = 0 \iff \mathcal{C} = \mathcal{C}'$ and the \textbf{identity of indiscernibles} is satisfied.

\item $|IP_{\mathcal{C}}(x) - IP_{\mathcal{C}'}(x)| = |IP_{\mathcal{C}'}(x) - IP_{\mathcal{C}}(x)| \text{ }\forall \text{ x } \in \mathbb{F}$ and so $CD(\mathcal{C},\mathcal{C}') = CD(\mathcal{C}',\mathcal{C})$ $\therefore$ the condition of \textbf{symmetry} is satisfied.

\item Consider the feature space, $\mathbb{F}$, Figure~\ref{fig:vennDiagramFeatureSpace}. The different areas in the diagram denote the regions of $\mathbb{F}$ that contains instances of the clusters $\mathcal{A}$, $\mathcal{B}$, and $\mathcal{C}$. For the purposes of this proof, the areas are of arbitrary, non-negative size. We calculate the distance between each pair of these arbitrary clusters and show that the triangle inequality holds.\\

\begin{figure}[htb]
\centering
\caption{Venn Diagram of clusters $\mathcal{A}$, $\mathcal{B}$, and $\mathcal{C}$ in feature space $\mathbb{F}$}
\label{fig:vennDiagramFeatureSpace}
\includegraphics[width=0.25\textwidth]{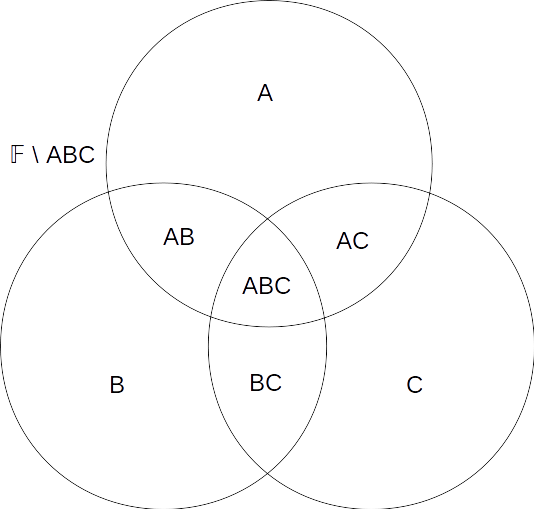}
\end{figure}

\begin{align*}
CD(\mathcal{A}, \mathcal{B}) = &\int_{\mathbb{F}} |IP_{\mathcal{A}}(x) - IP_{\mathcal{B}}(x)|\\
= &\int_{\mathcal{A}} |IP_{\mathcal{A}\setminus\mathcal{B}}(x) - IP_{\mathcal{B}}(x)| + \int_{\mathcal{B}\setminus\mathcal{A}} |IP_{\mathcal{A}}(x) - IP_{\mathcal{B}}(x)| +\\ &\int_{\mathcal{AB}} |IP_{\mathcal{A}}(x) - IP_{\mathcal{B}}(x)| + \int_{\mathbb{F}\setminus\mathcal{A}\mathcal{B}} |IP_{\mathcal{A}}(x) - IP_{\mathcal{B}}(x)|\\
= &\int_{A \cup AC} 1 + \int_{B \cup BC} 1 + \int_{AB \cup ABC} 0 + \int_{\mathbb{F}\setminus\mathcal{A}\mathcal{B}} 0\\
= &|A|+|AC|+|B|+|BC|\\
\end{align*}
Similarly, $CD(\mathcal{A}, \mathcal{C}) = |A|+|AB|+|C|+|BC|$ and $CD(\mathcal{B}, \mathcal{C}) = |B|+|AB|+|C|+|AC|$.\\
\begin{align*}
CD(\mathcal{A},\mathcal{C}) \leq\ &CD(\mathcal{A},\mathcal{B}) + CD(\mathcal{B},\mathcal{C})\\
|A| + |AB| + |C| + |BC| \leq\ &|A| + |AC| + |B| + |BC| + |B| +\\
&|AB| + |C| + |AC|\\
0 \leq\ &2|B| + 2|AC|
\end{align*}
$A$, $B$, and $C$ are all non-negative, therefore $0 \leq 2|B| + 2|AC|$ is true and the \textbf{triangle inequality} is satisfied.
\end{enumerate}

$CD(\cdot,\cdot)$ satisfies all four conditions of a metric $\therefore CD(\cdot,\cdot)$ is a metric.
\end{proof}
\newpage

\section{Experimental Results}
\subsection{Synthetic Data Streams}
\begin{figure}[htbp]
\centering
\caption{Results for the Mixture Model data stream}
\label{fig:mixtureModelGMean}
\includegraphics[width=0.32\textwidth]{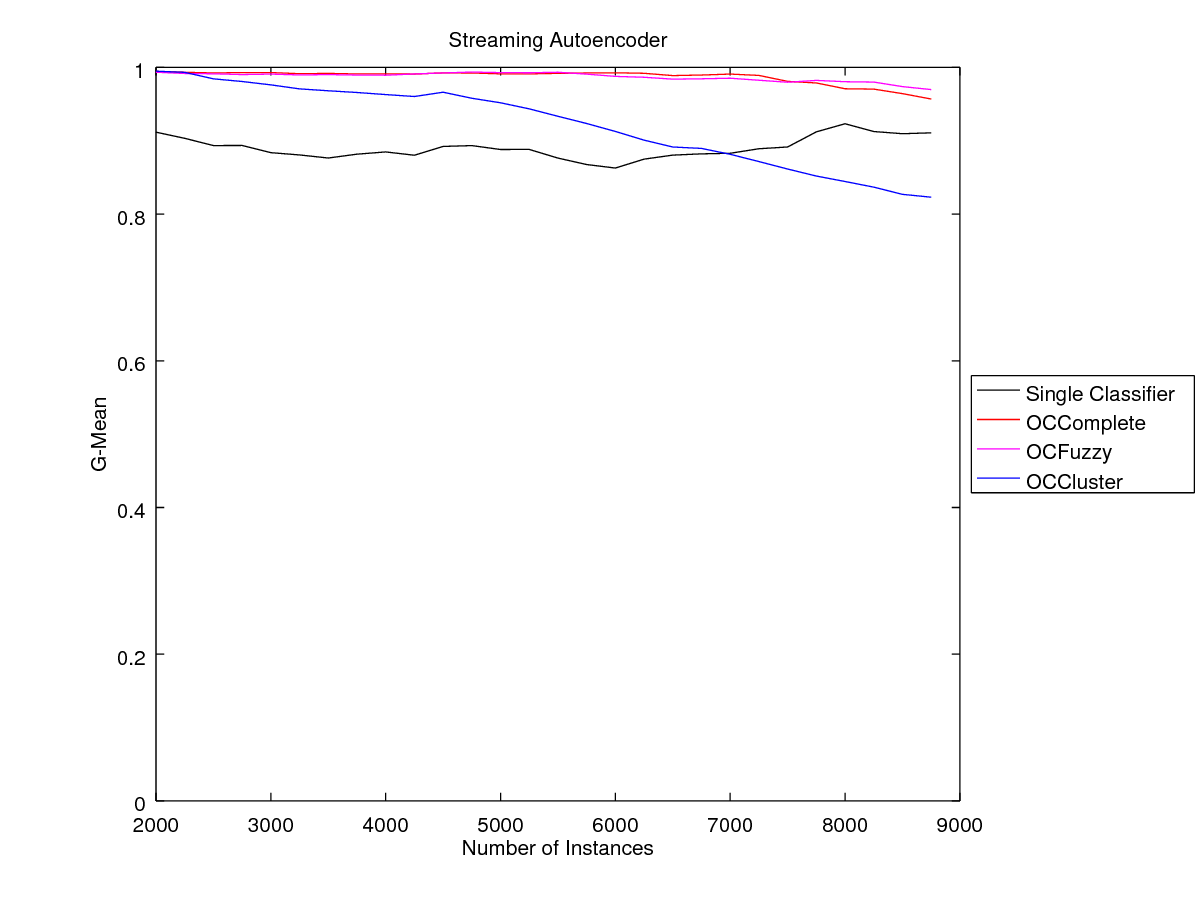}
\includegraphics[width=0.32\textwidth]{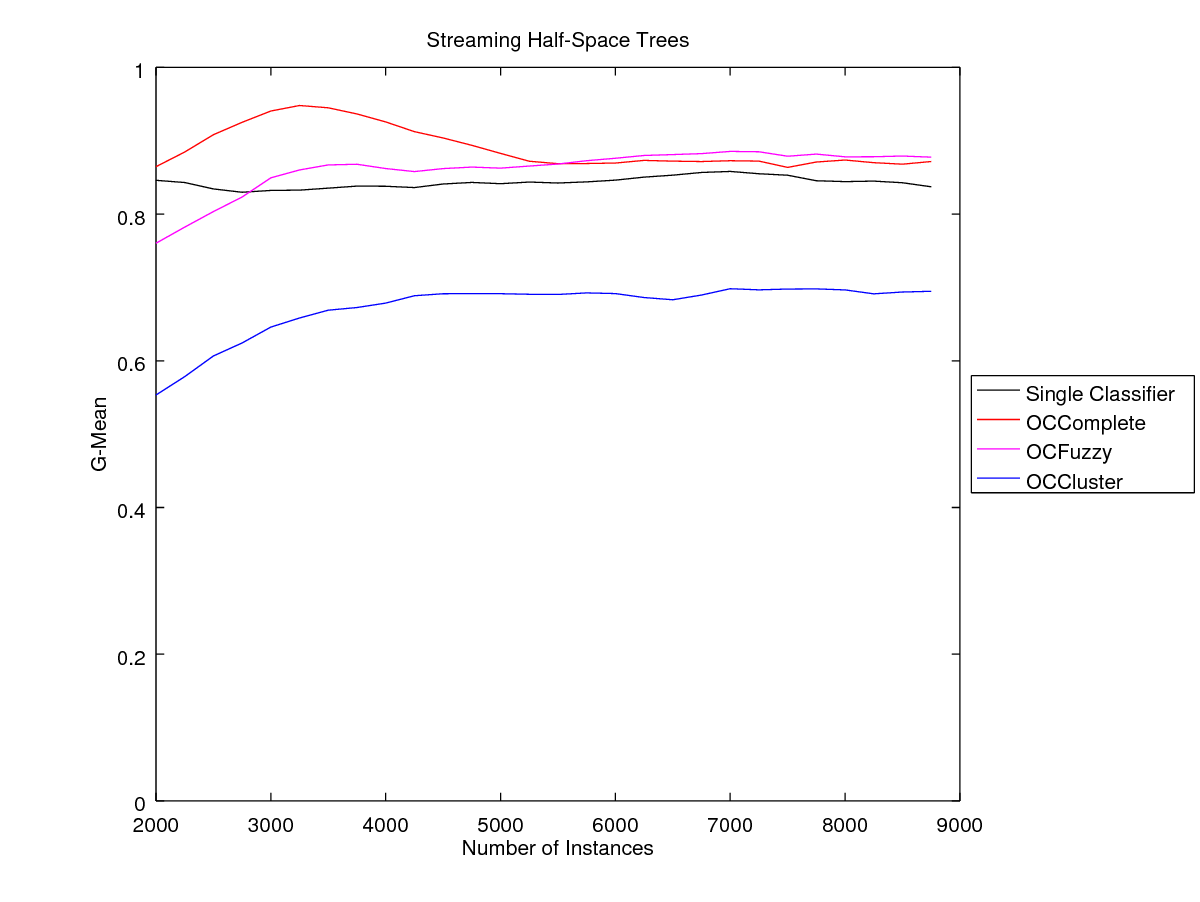}
\includegraphics[width=0.32\textwidth]{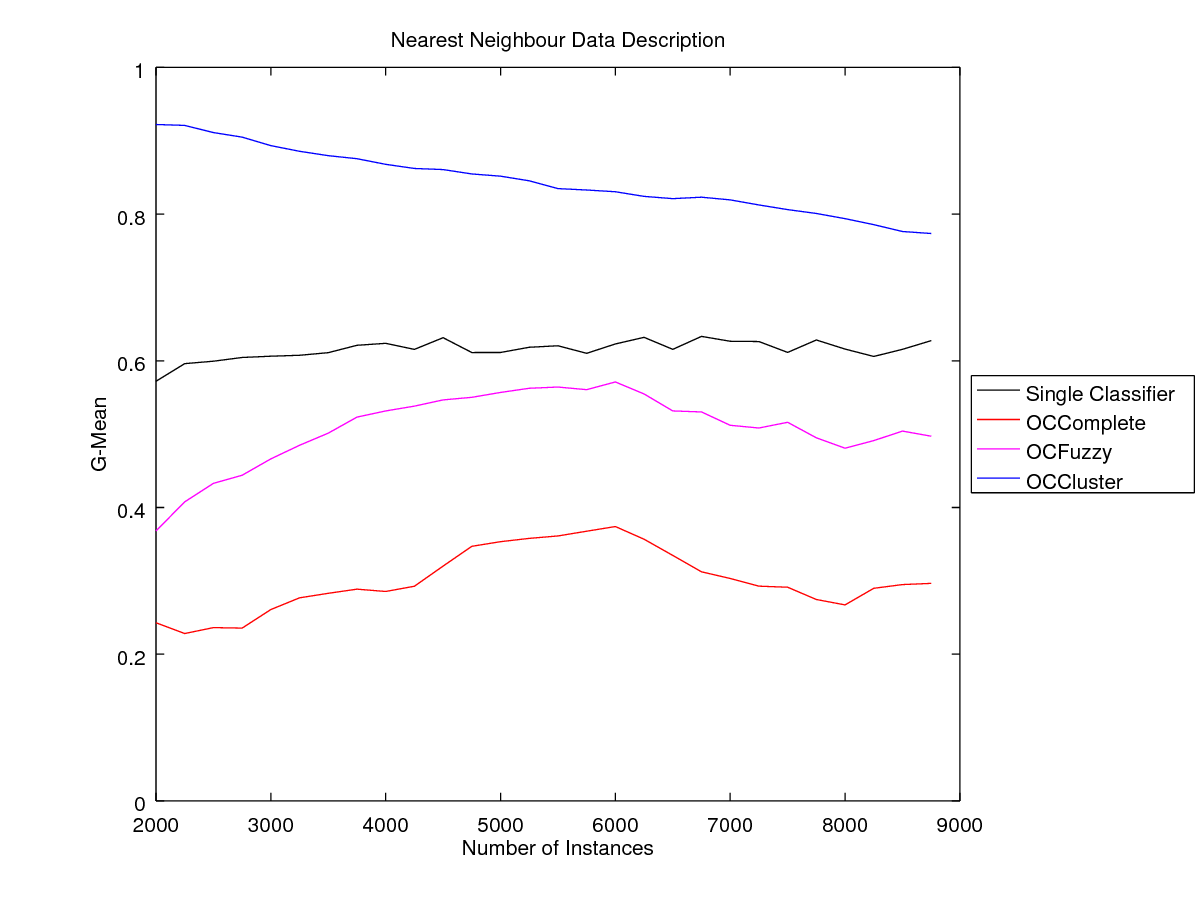}
\end{figure}

\begin{figure}[htbp]
\centering
\caption{Results for the Random RBF data stream}
\label{fig:randomRBF0GMean}
\includegraphics[width=0.32\textwidth]{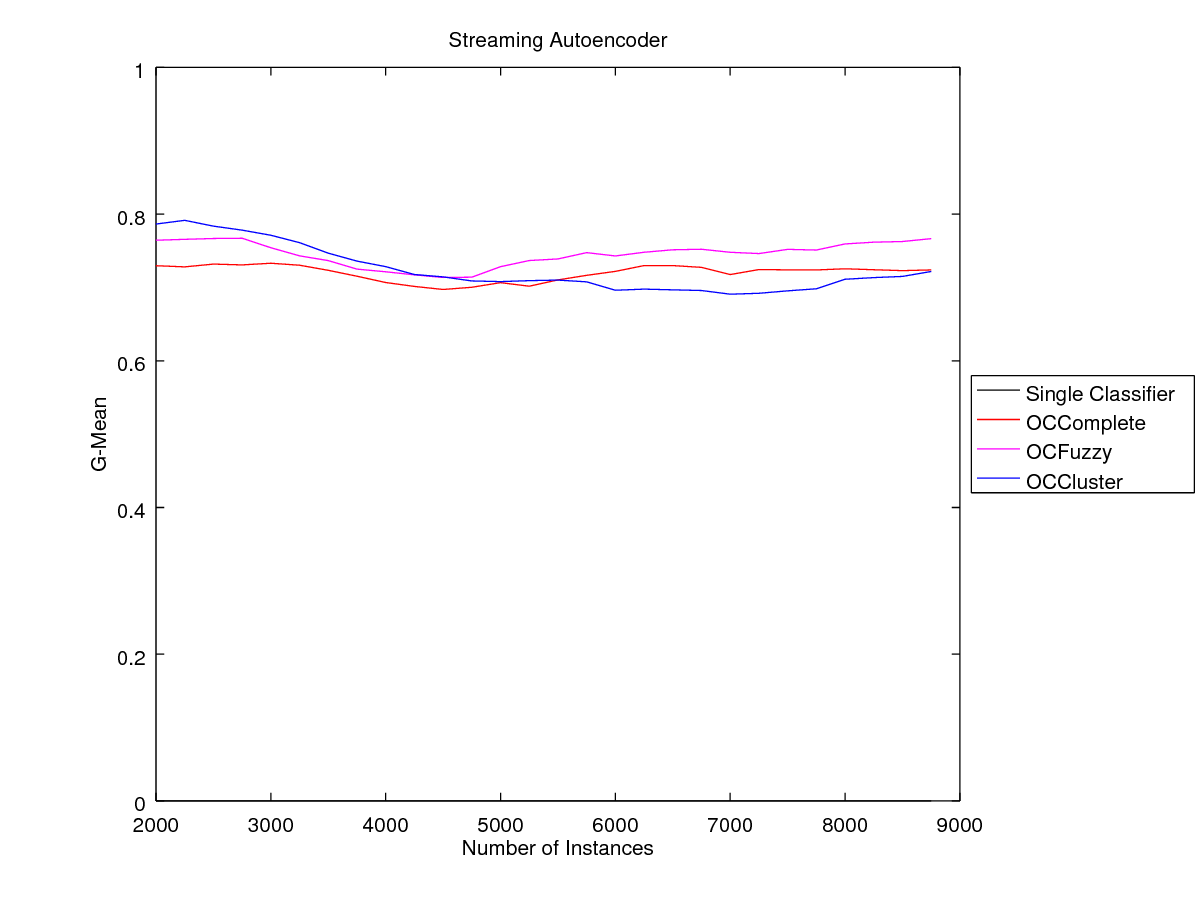}
\includegraphics[width=0.32\textwidth]{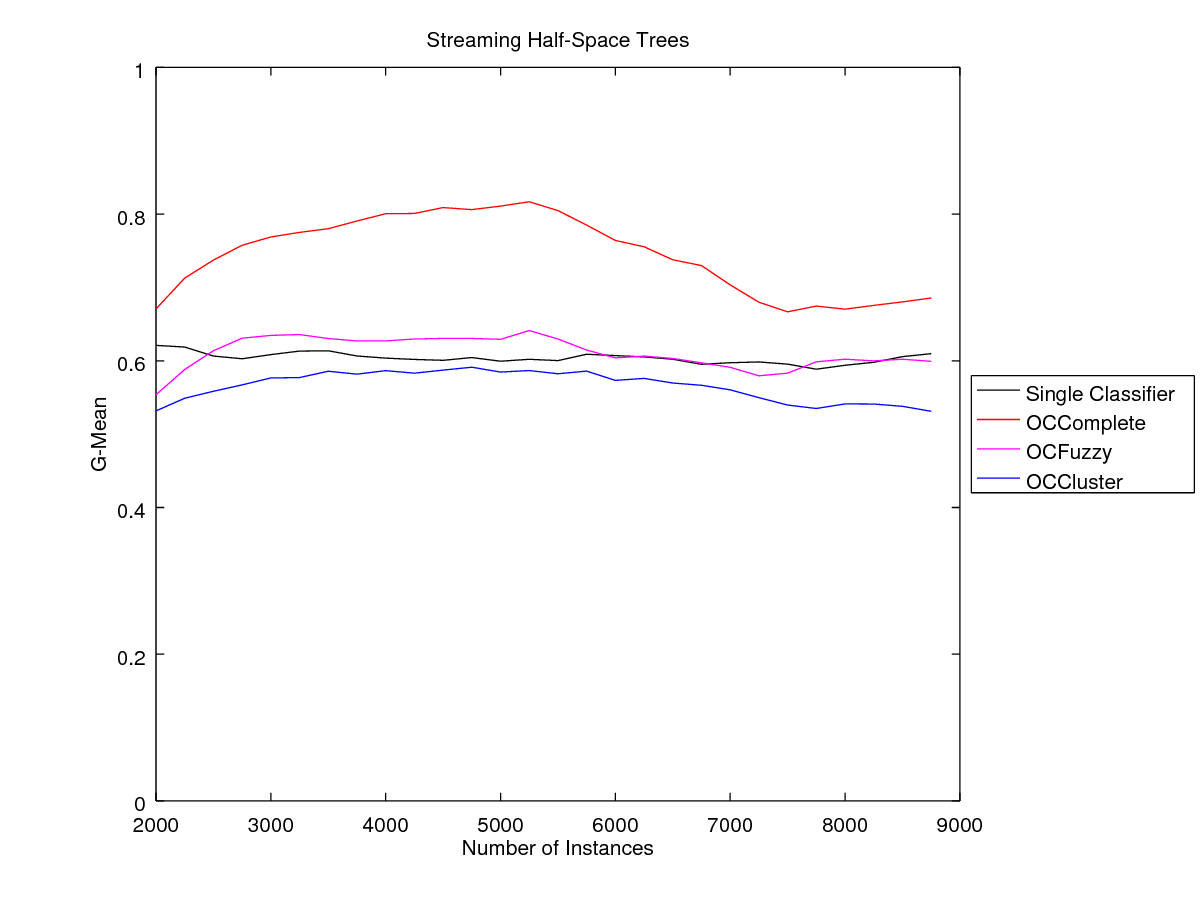}
\includegraphics[width=0.32\textwidth]{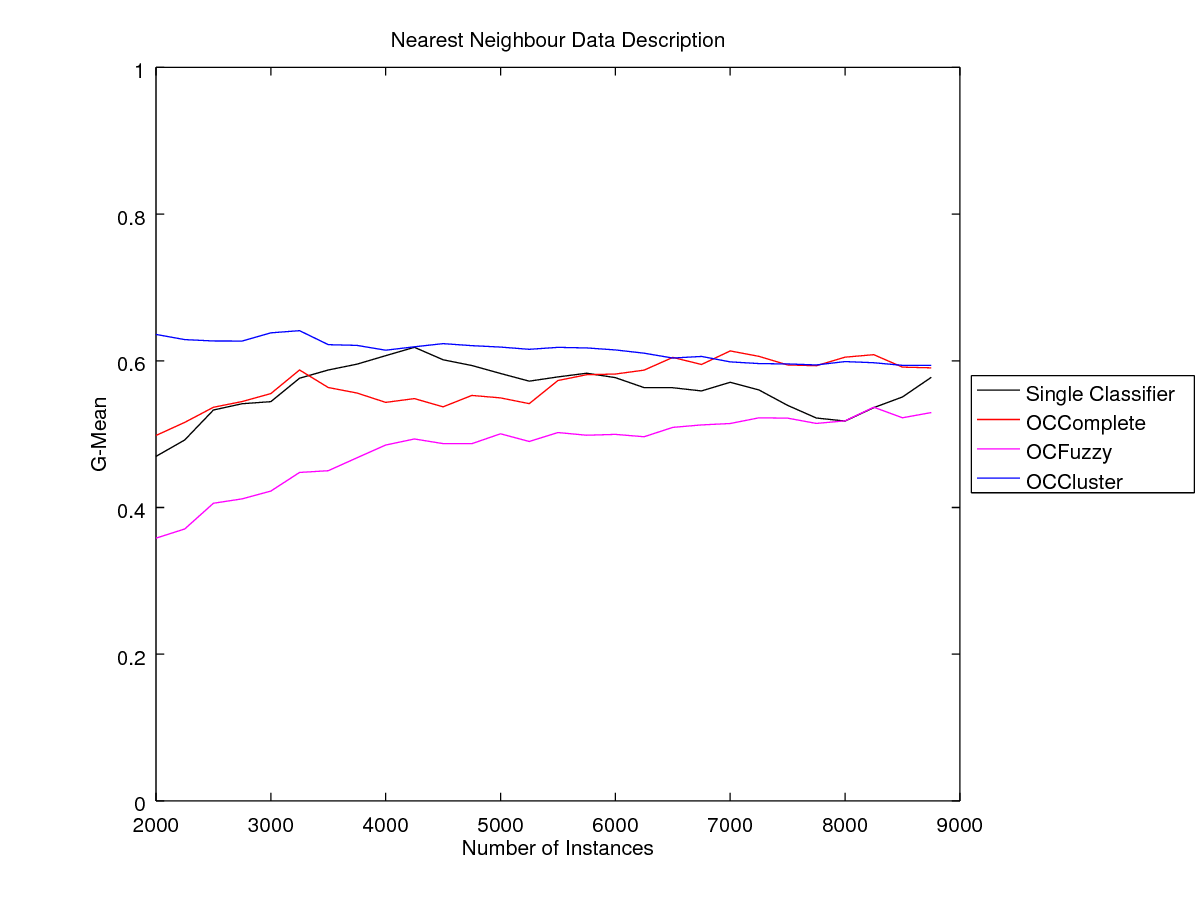}
\end{figure}

\begin{figure}[htbp]
\centering
\caption{Results for the Random RBF data stream with Noise}
\label{fig:randomRBFNGMean}
\includegraphics[width=0.32\textwidth]{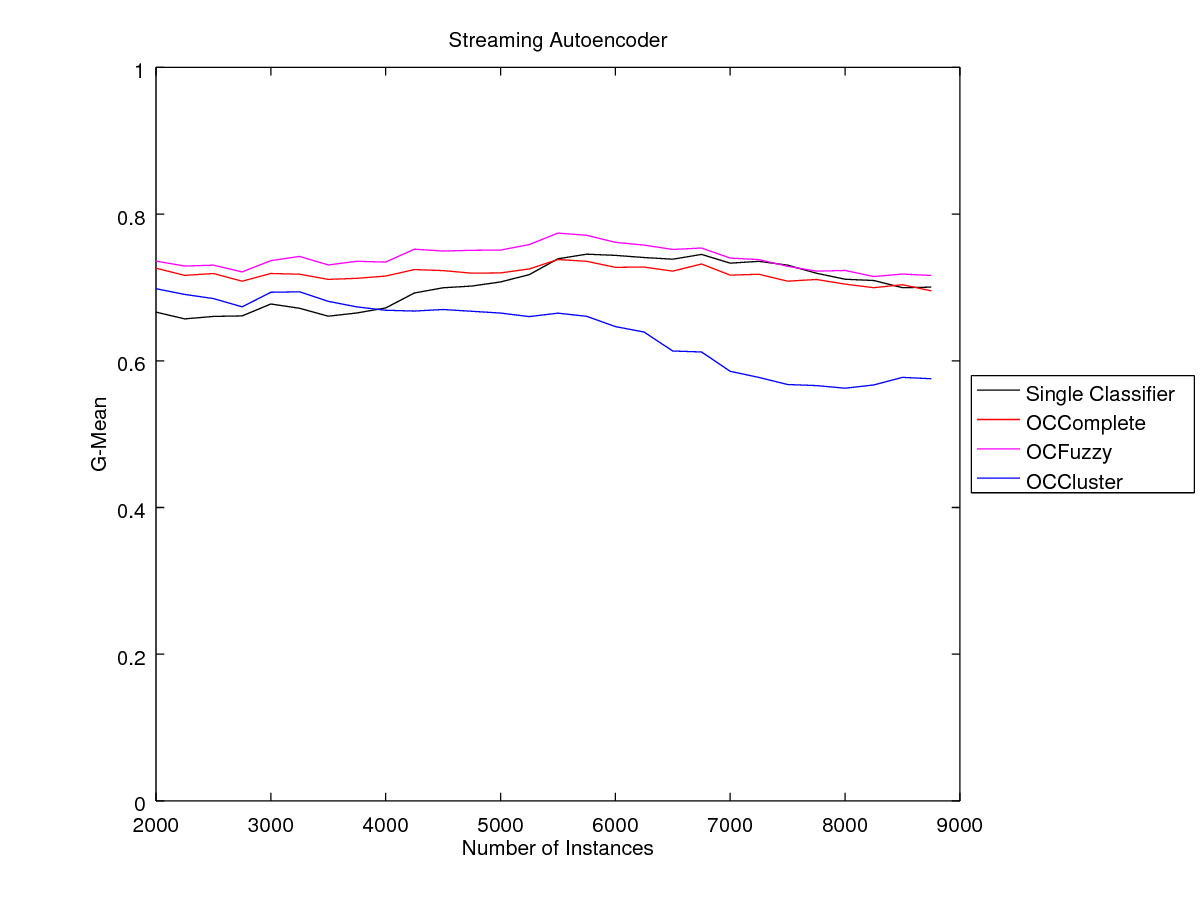}
\includegraphics[width=0.32\textwidth]{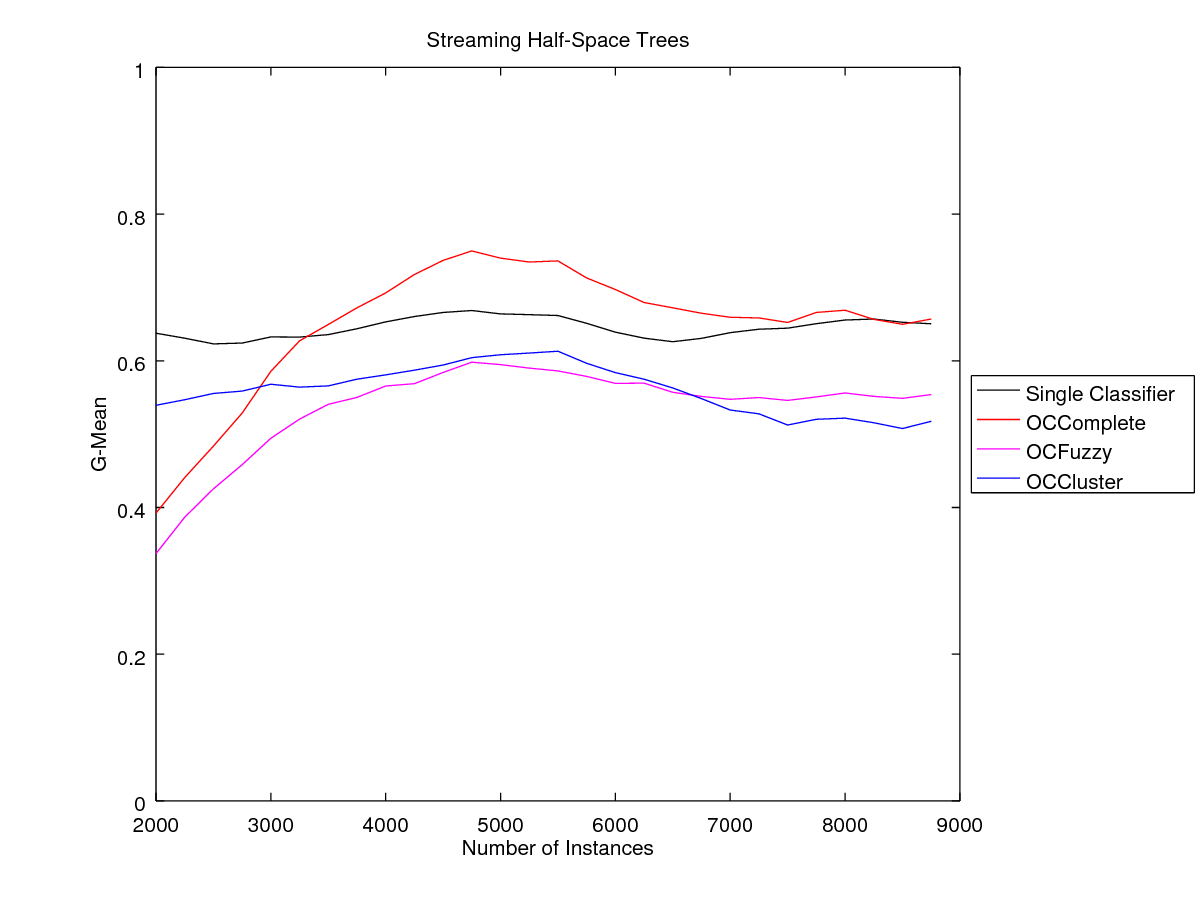}
\includegraphics[width=0.32\textwidth]{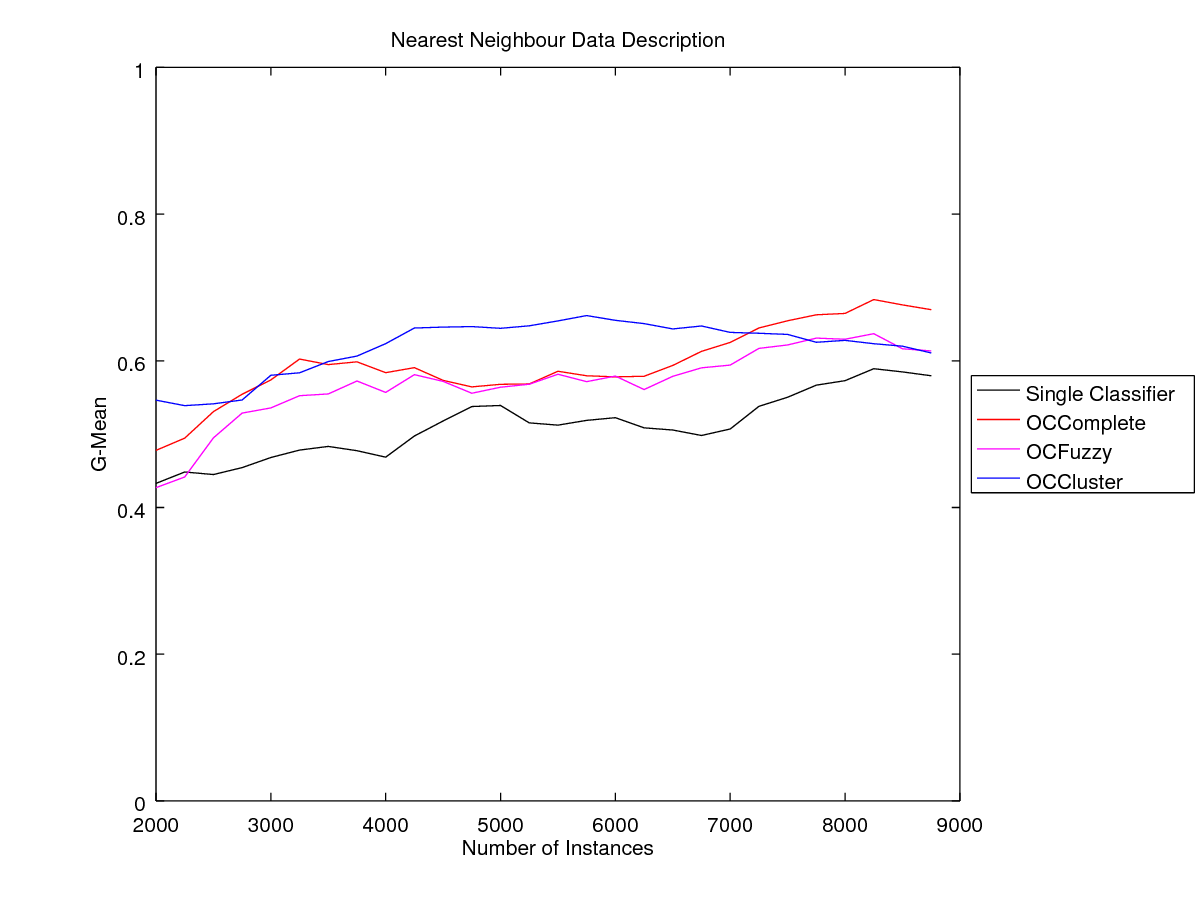}
\end{figure}
\FloatBarrier
\newpage

\subsection{Benchmark Data Streams}
\begin{figure}[htbp]
\centering
\caption{Results for the Wine Quality data stream}
\label{fig:winequalityGMean}
\includegraphics[width=0.32\textwidth]{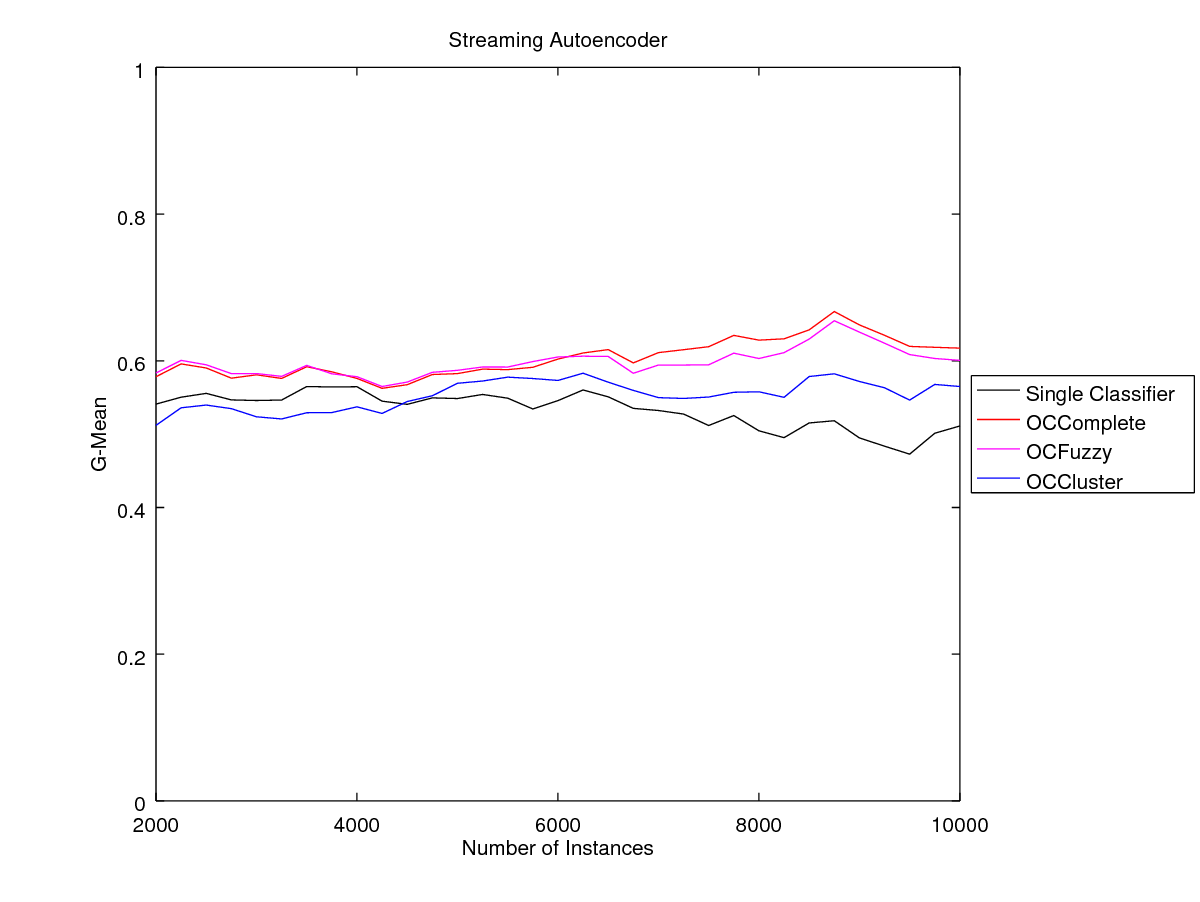}
\includegraphics[width=0.32\textwidth]{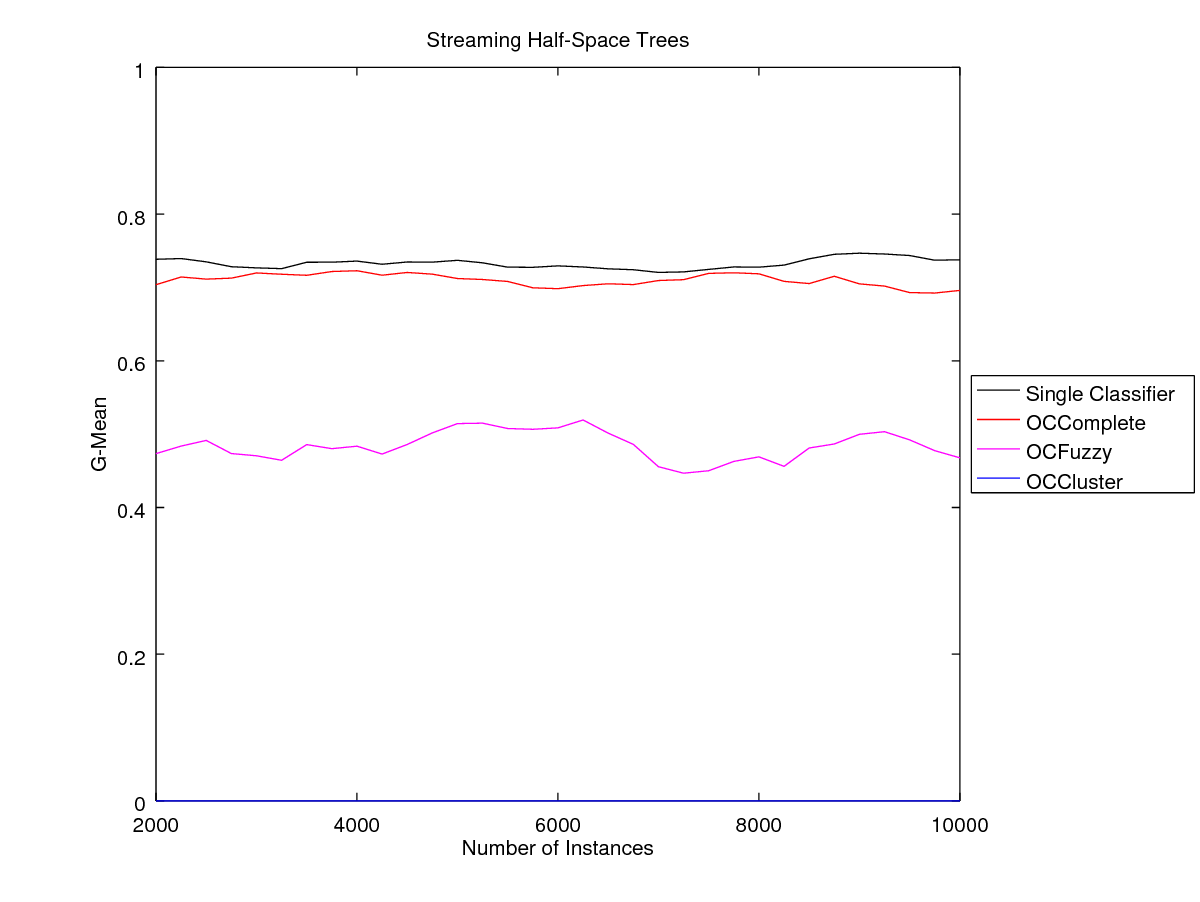}
\includegraphics[width=0.32\textwidth]{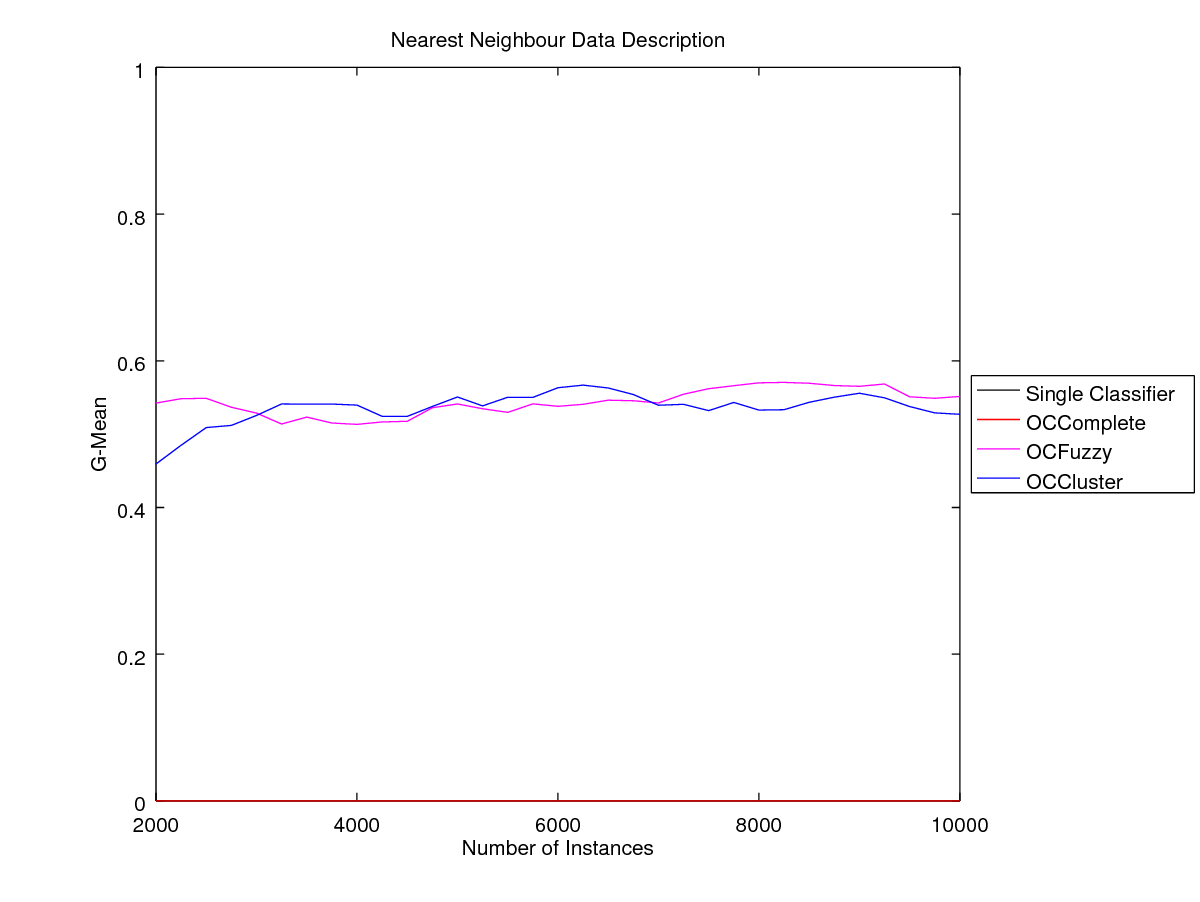}
\end{figure}

\begin{figure}[htbp]
\centering
\caption{Results for the Covertype 2/5 vs 3/4/6 data stream}
\label{fig:covertype25GMean}
\includegraphics[width=0.32\textwidth]{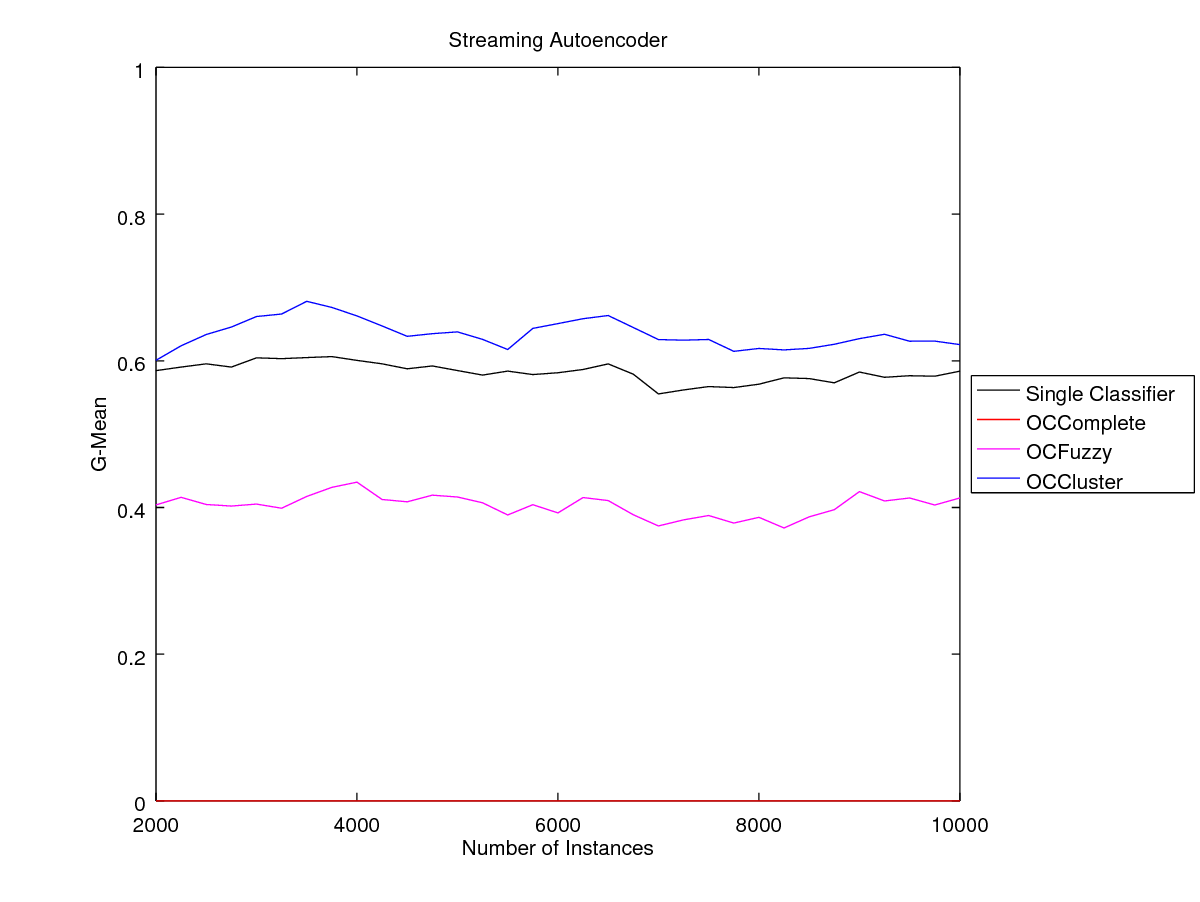}
\includegraphics[width=0.32\textwidth]{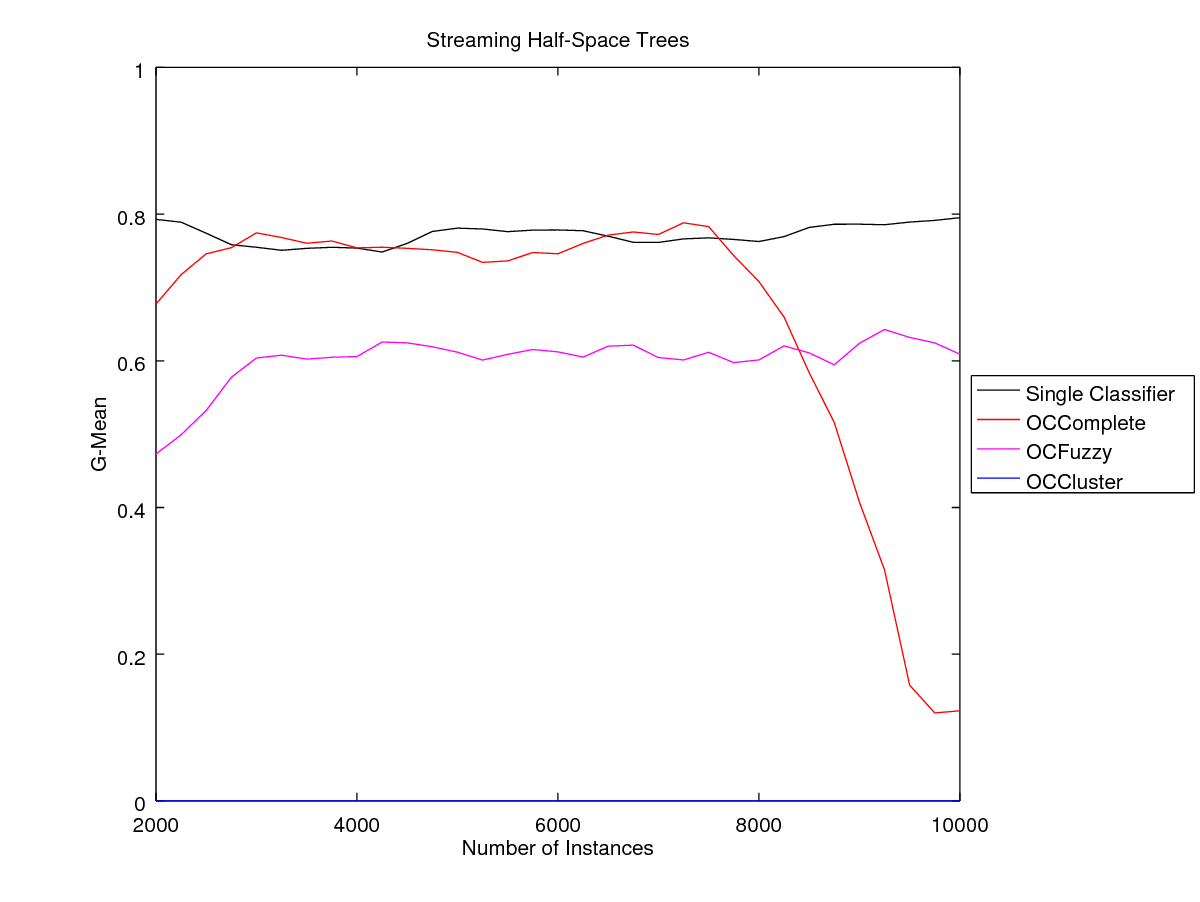}
\includegraphics[width=0.32\textwidth]{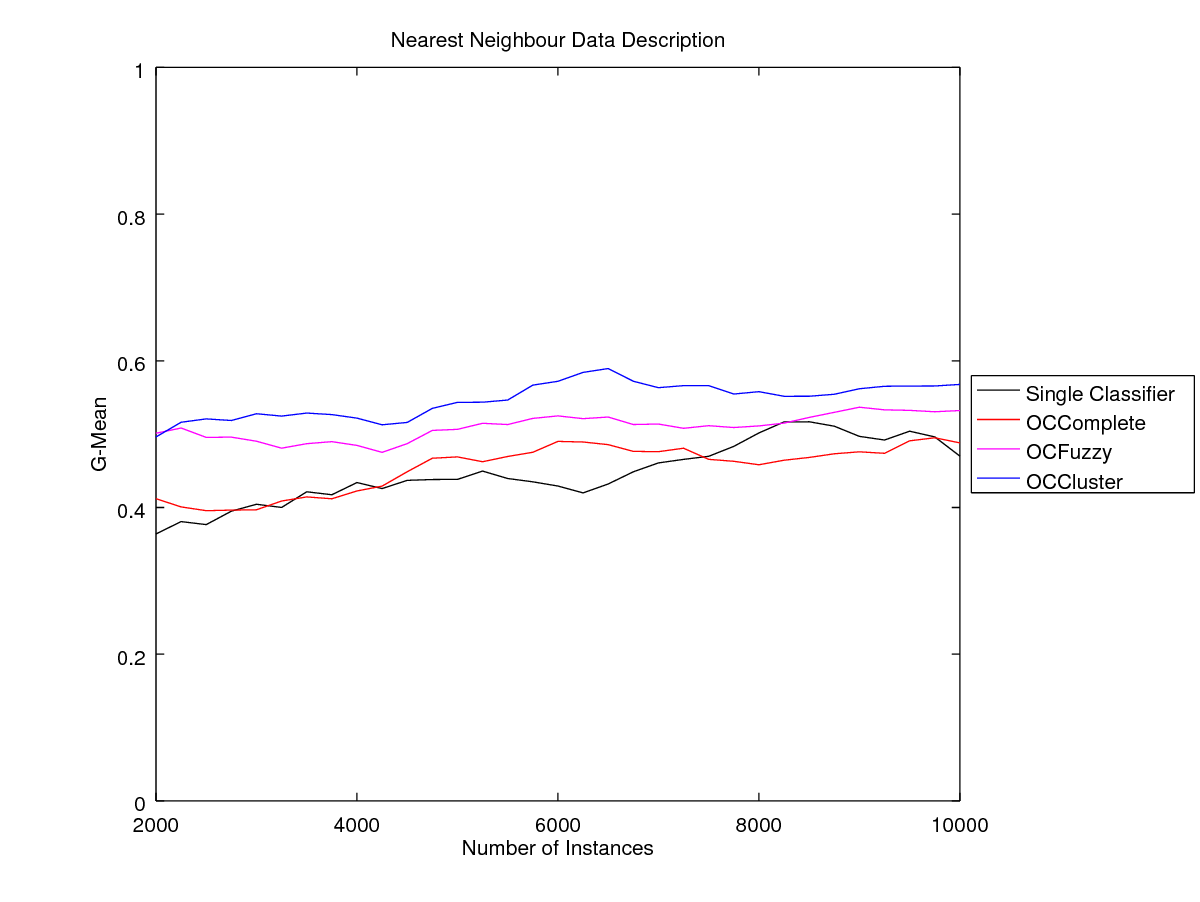}
\end{figure}

\begin{figure}[htbp]
\centering
\caption{Results for the Covertype 1/2/5 vs 3/4/6/7 data stream}
\label{fig:covertype125ResultsGMean}
\includegraphics[width=0.32\textwidth]{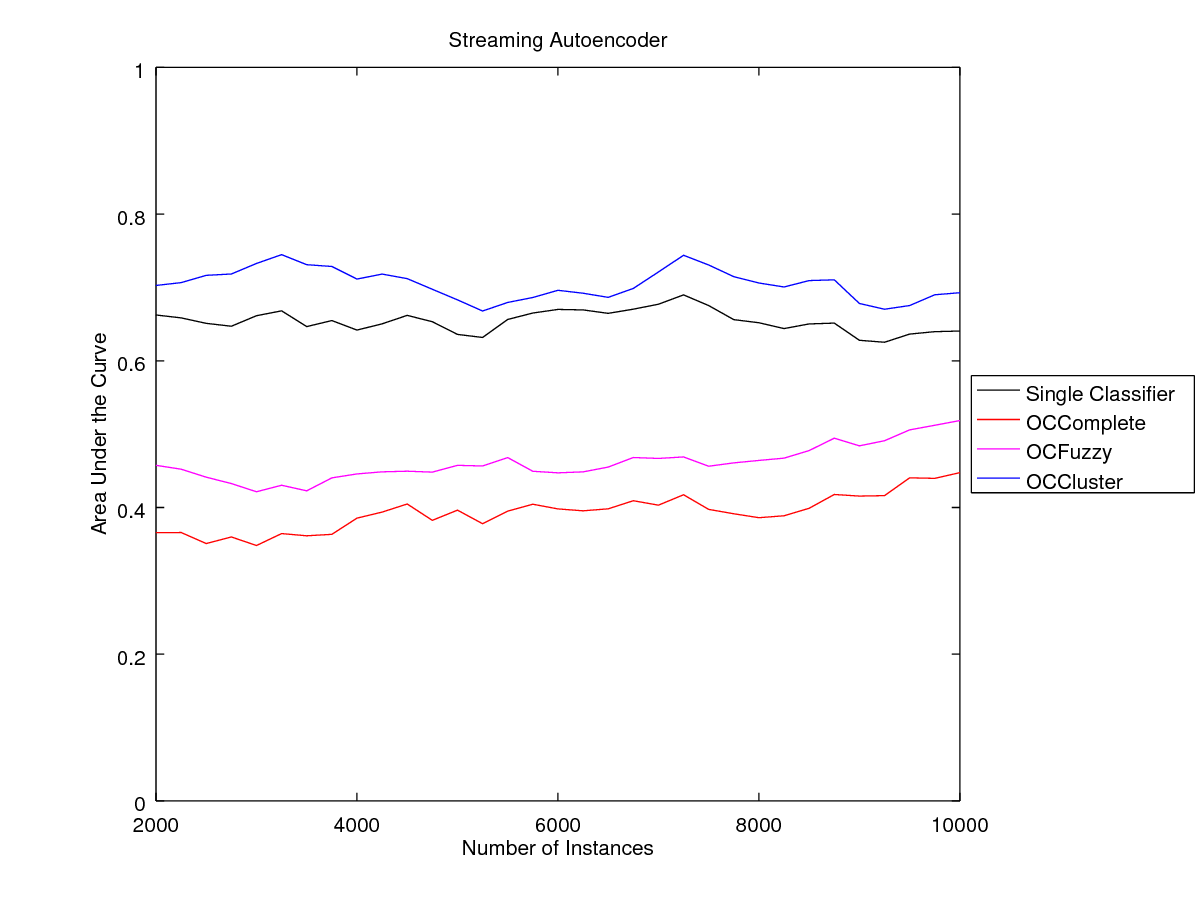}
\includegraphics[width=0.32\textwidth]{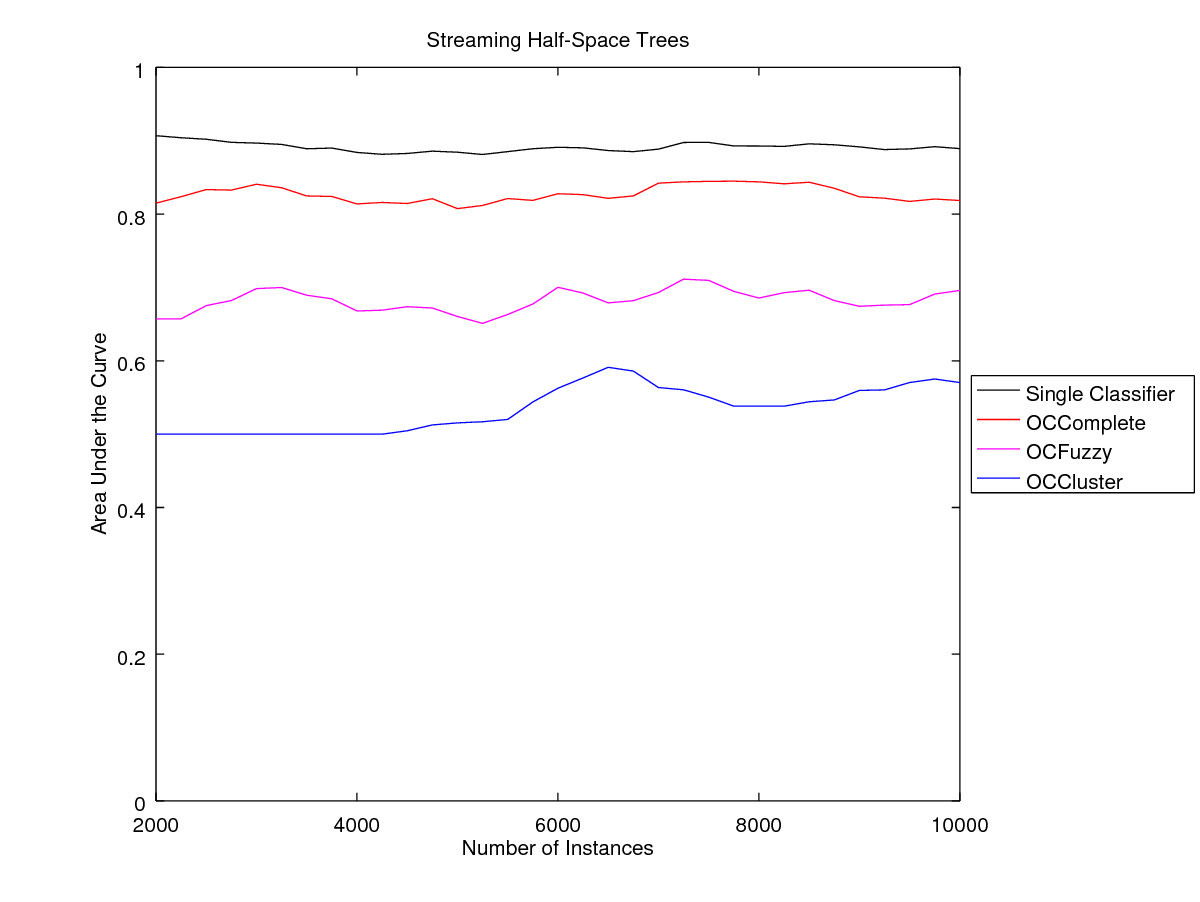}
\includegraphics[width=0.32\textwidth]{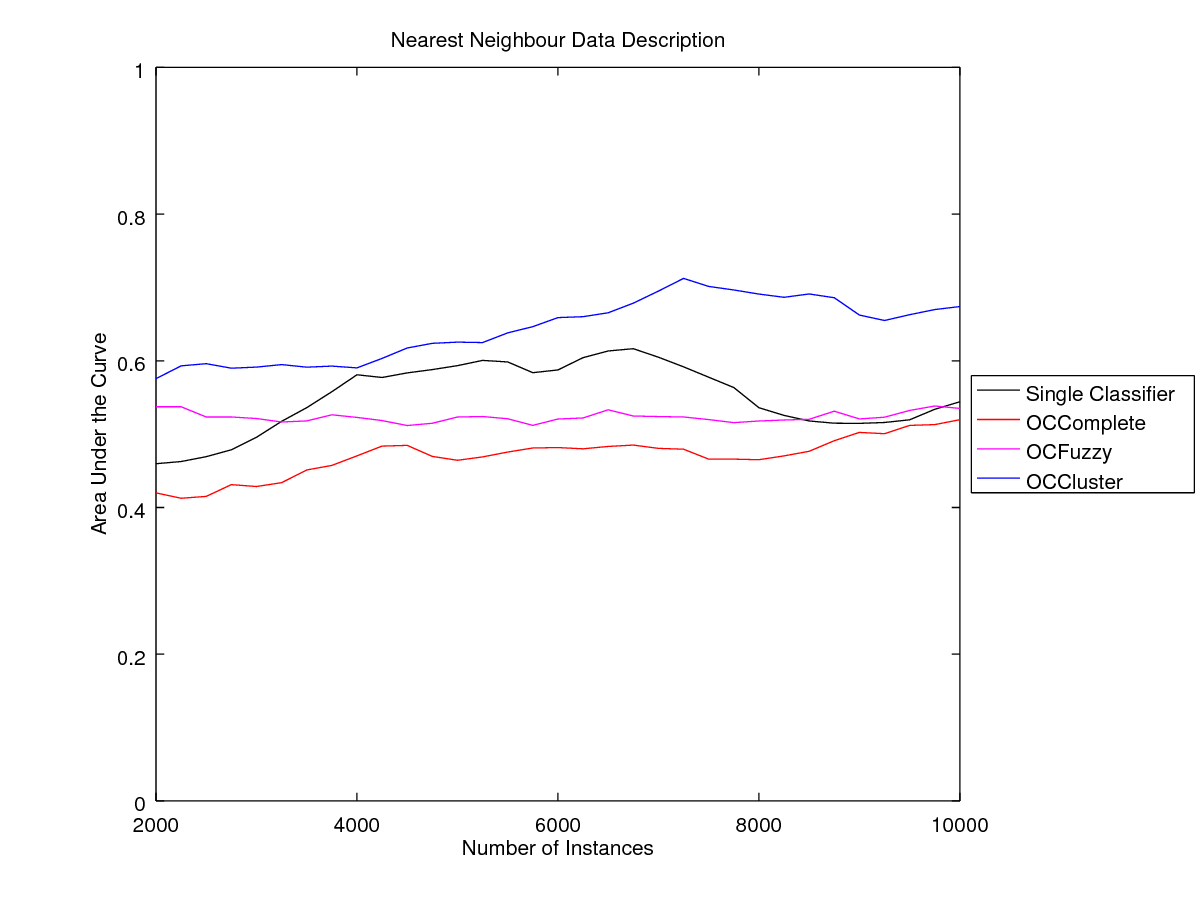}
\includegraphics[width=0.32\textwidth]{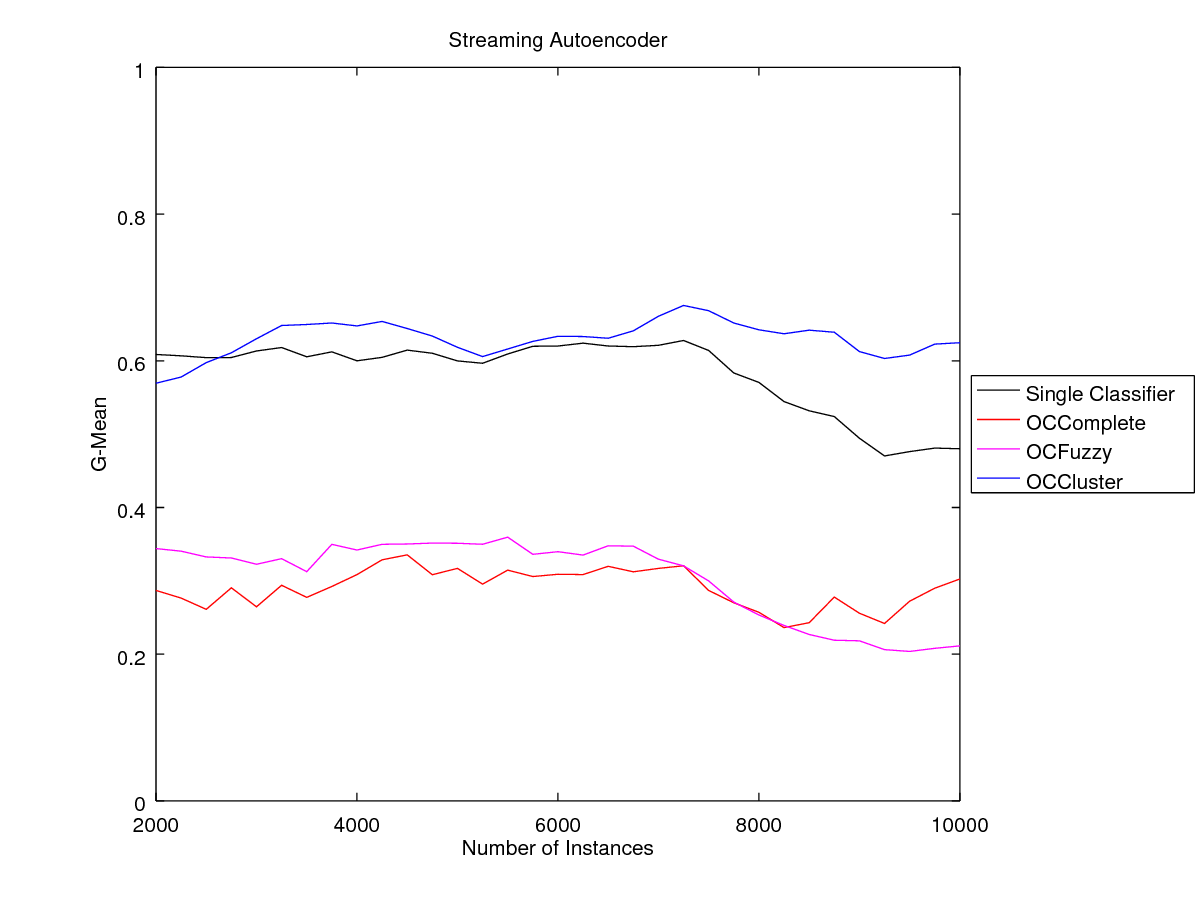}
\includegraphics[width=0.32\textwidth]{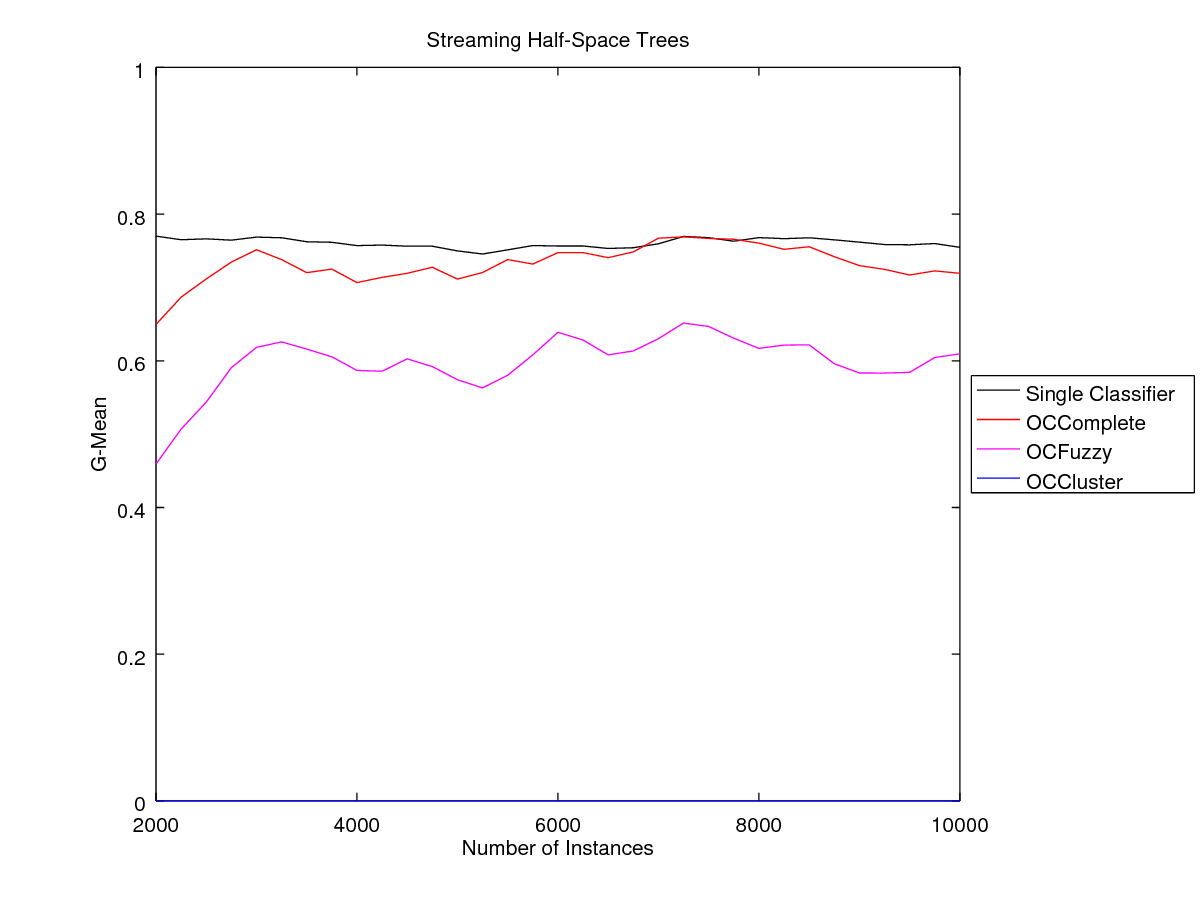}
\includegraphics[width=0.32\textwidth]{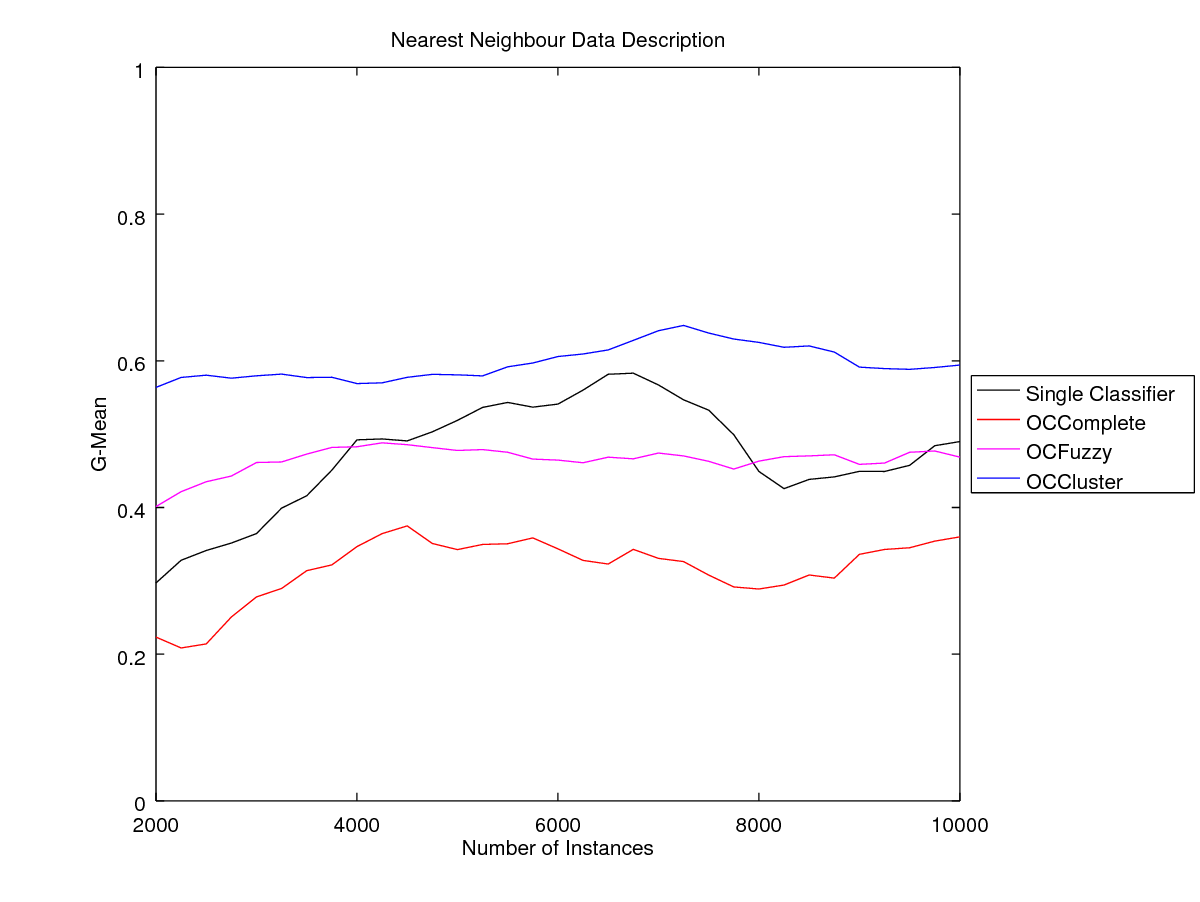}
\end{figure}

\begin{figure}[htbp]
\centering
\caption{Results for the High Time Resolution Universe survey data stream}
\label{fig:htruGMean}
\includegraphics[width=0.32\textwidth]{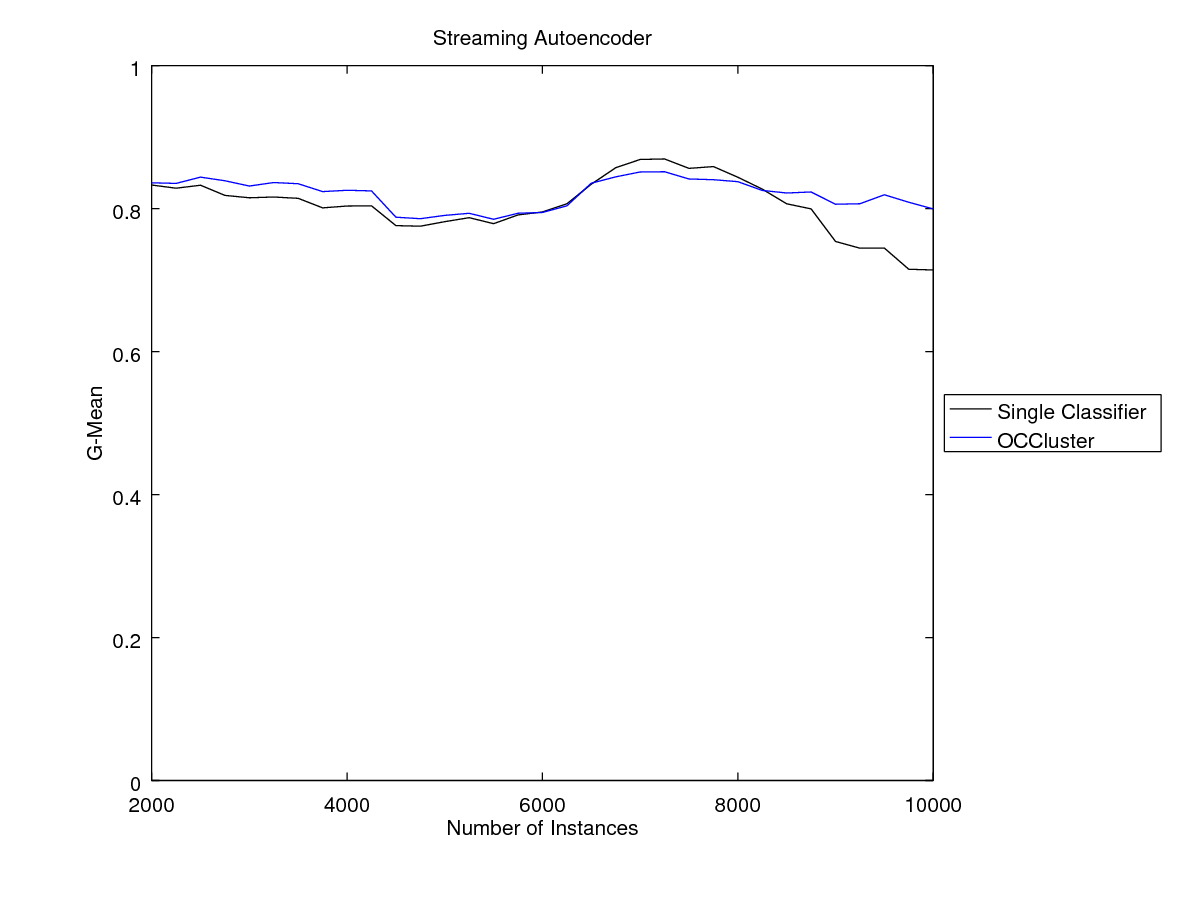}
\includegraphics[width=0.32\textwidth]{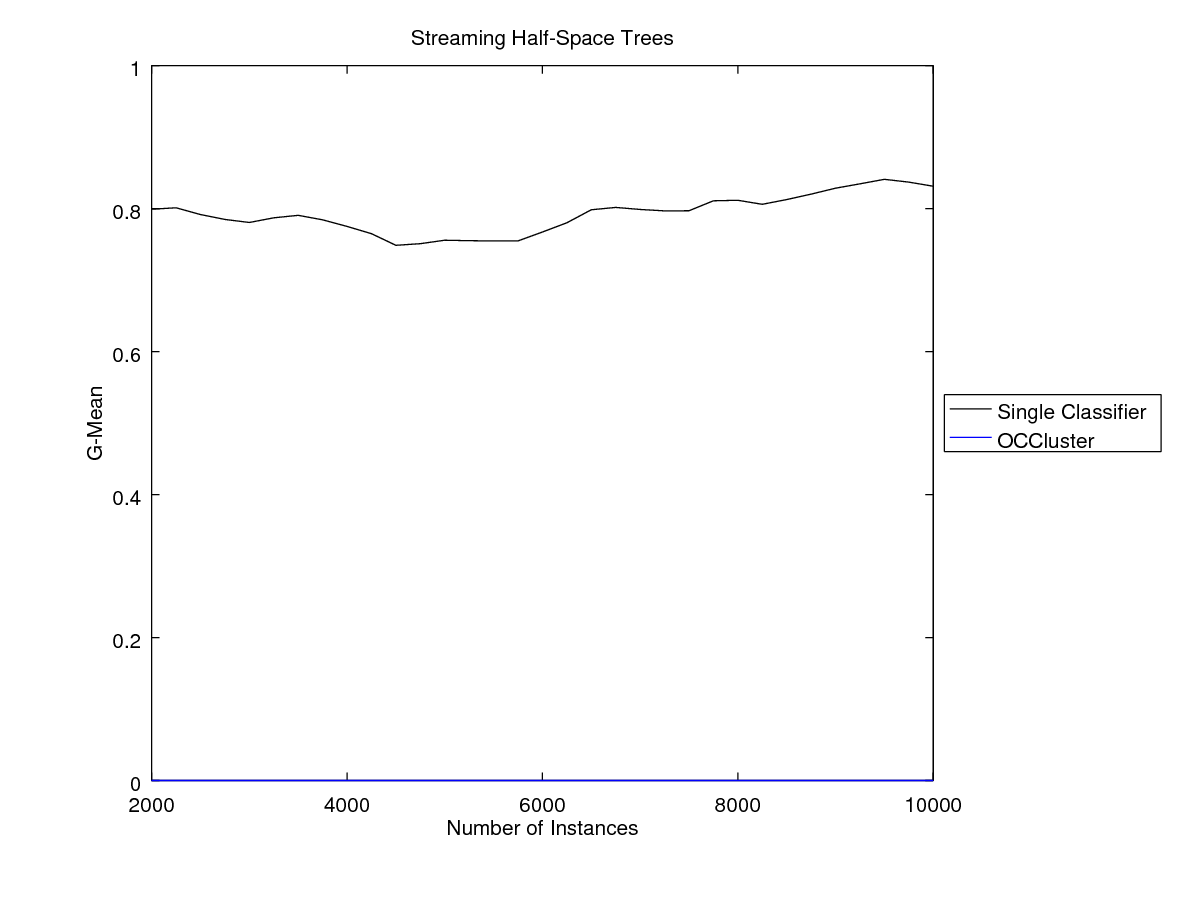}
\includegraphics[width=0.32\textwidth]{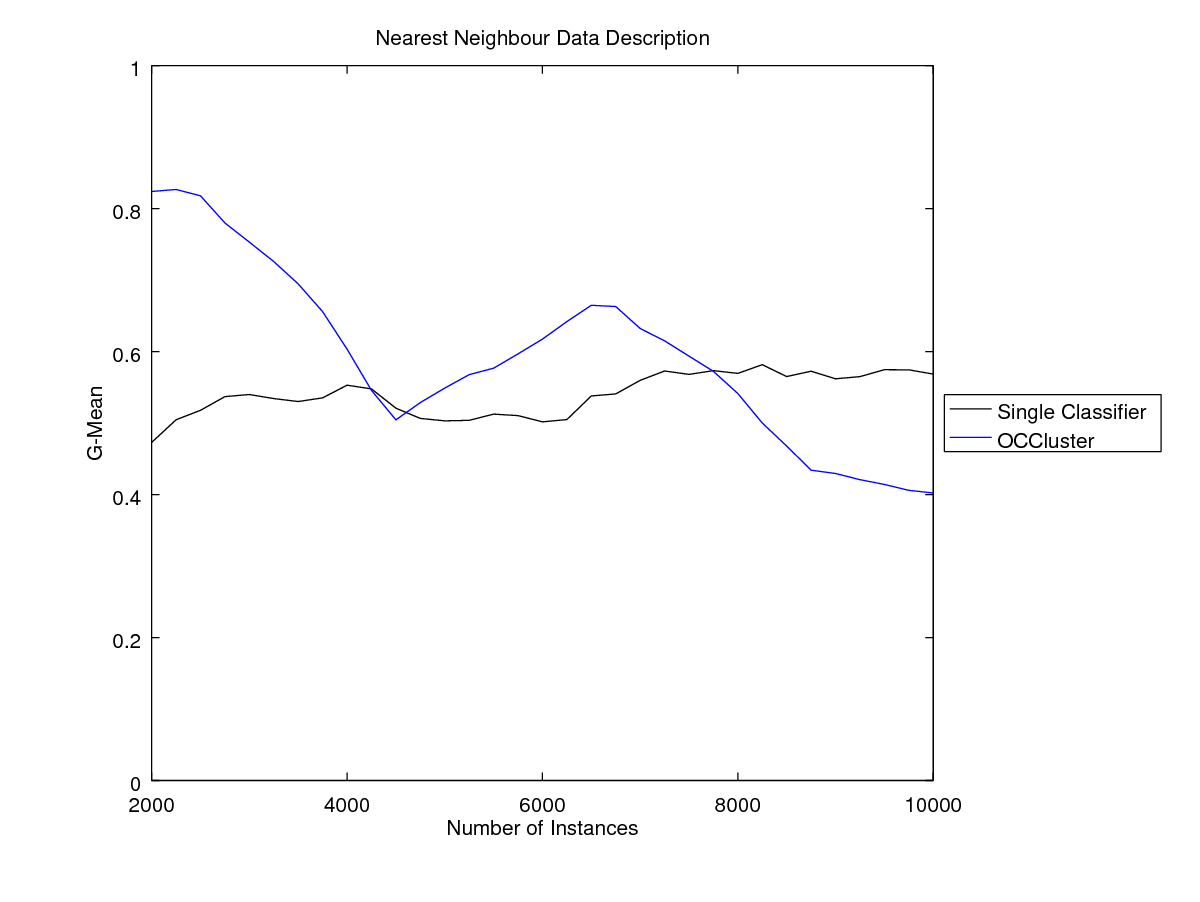}
\end{figure}

\FloatBarrier

\bibliographystyle{plainnat}
\bibliography{Context.bib}

\end{document}